\documentclass{article} 
\usepackage{iclr2022_conference,times}

\usepackage[utf8]{inputenc} 
\usepackage[T1]{fontenc}    
\usepackage{url}            
\usepackage{booktabs}       
\usepackage{amsfonts}       
\usepackage{nicefrac}       
\usepackage{microtype}      
\usepackage{xcolor}         
\usepackage{todonotes}
\usepackage[english]{babel}
\usepackage{amsmath}
\usepackage{amsthm}
\usepackage{enumerate}
\usepackage{graphicx}
\usepackage{mathrsfs}
\usepackage{amssymb}
\usepackage{color}
\usepackage{empheq}
\usepackage{braket}
\usepackage{array}
\definecolor{mydarkblue}{rgb}{0,0.08,0.45}
\usepackage[
    colorlinks=true,
    linkcolor=mydarkblue,
    citecolor=mydarkblue,
    filecolor=mydarkblue,
    urlcolor=mydarkblue,
    pdfview=FitH]{hyperref}
\usepackage{extpfeil}
\usepackage{pifont}
\usepackage{subcaption}

\usepackage{bm}

\pdfstringdefDisableCommands{%
}

\usepackage{epstopdf}
\epstopdfDeclareGraphicsRule{.tif}{png}{.png}{convert #1 \OutputFile}
\AppendGraphicsExtensions{.tif}

\usepackage{tikz}
\usepackage{pgf}
\usepackage{pgflibraryarrows}
\usepackage{pgflibrarysnakes}
\usetikzlibrary{decorations.text}
\usepgfmodule{shapes}
\usetikzlibrary{decorations.pathmorphing}
\usetikzlibrary{decorations.markings}
\usetikzlibrary{patterns}
\usetikzlibrary{automata}
\usetikzlibrary{positioning}
\usepackage{braket}

\tikzset{->-/.style={decoration={
 markings,
 mark=at position #1 with {\arrow{>}}},postaction={decorate}}}

\newtheorem{prop}{Proposition}
\newtheorem{lemma}[prop]{Lemma}
\newtheorem{thm}[prop]{Theorem}

\newtheorem{oss}[prop]{Remark}

\theoremstyle{definition}



















\def\1{\bm{1}}


\def\ri{{\textnormal{i}}}







\def\va{{\bm{a}}}

\def\ve{{\bm{e}}}

\def\vh{{\bm{h}}}

\def\vx{{\bm{x}}}
\def\vy{{\bm{y}}}
\def\vz{{\bm{z}}}


\def\mA{{\bm{A}}}

\def\mD{{\bm{D}}}

\def\mI{{\bm{I}}}

\def\mM{{\bm{M}}}

\def\mR{{\bm{R}}}

\def\mU{{\bm{U}}}
\def\mV{{\bm{V}}}
\def\mW{{\bm{W}}}

\def\mLambda{{\bm{\Lambda}}}
\def\mSigma{{\bm{\Sigma}}}

\DeclareMathAlphabet{\mathsfit}{\encodingdefault}{\sfdefault}{m}{sl}
\SetMathAlphabet{\mathsfit}{bold}{\encodingdefault}{\sfdefault}{bx}{n}


\def\gL{{\mathcal{L}}}






\def\le{\left}
\def\ri{\right}






\newcommand{\E}{\mathbb{E}}

\newcommand{\R}{\mathbb{R}}



\DeclareMathOperator{\sign}{sign}


\usepackage{cancel}
\usepackage{pgfplots}

\usepackage{url}

\title{Convergence Analysis and Implicit \\ Regularization of Feedback Alignment \\ for Deep Linear Networks}


\author{Manuela Girotti 
\\
Mila Institute, Saint Mary's University\\
Department of Mathematics and Computing Science\\
Halifax, NS B3H 3C3, Canada \\
\texttt{manuela.girotti@smu.ca} \\
\AND
Ioannis Mitliagkas \\
Mila Institute, Universit\'e de Montr\'eal \\
Department of Computer Science \\ and Operational Reserarch\\
Montr\'eal, QC H3T 1J4, Canada \\
\texttt{ioannis@iro.umontreal.ca} \\
\And
Gauthier Gidel \\
Mila Institute, Universit\'e de Montr\'eal \\
Department of Computer Science \\ and Operational Reserarch\\
Montr\'eal, QC H3T 1J4, Canada \\
\texttt{gauthier.gidel@umontreal.ca}
}

%

\iclrfinalcopy 
\begin{document}

\maketitle

\begin{abstract}
We theoretically analyze the Feedback Alignment (FA) algorithm, an efficient alternative to backpropagation for training neural networks. We provide convergence guarantees with rates for deep linear networks for both continuous and discrete dynamics. 
Additionally, we study incremental learning phenomena for shallow linear networks. Interestingly, certain specific initializations imply that negligible components are learned {before} the principal ones, thus potentially negatively affecting the effectiveness of such a learning algorithm; a phenomenon we classify as implicit anti-regularization. We also provide initialization schemes where the components of the problem are approximately learned by decreasing order of importance, thus providing a form of implicit regularization. 

\end{abstract}

\section{Introduction}\label{intro}

In order to train a Machine Learning (ML) architecture in the context of supervised learning, it is a consolidated practice to resort to the Gradient Descent (GD) algorithm and its variants (e.g. stochastic GD) and to calculate the gradient of the loss function via backpropagation \citep{Rumelhart86}.
However, one of its limitations is the so-called weight transport problem \citep{Grossberg87}: the update of each neuron during the learning phase relies on the downstream weights (which are therefore "transported" along the backward pass). This implies that 1) the synaptic weights are the same, up to transposition, for both the forward (inference) and backward (learning) pass and 2) each neuron has a precise knowledge of all of its forward synapses. It is known that such a model does not accurately reflect the behaviour of a human brain, thus it is biologically implausible \citep{Crick89}. 
Feedback Alignment (FA), originally proposed by Lillicrap \citep{Lillicrap16}, has been an attempt to mitigate this discrepancy between the neuroscience side and the engineering side of AI and to tackle the weight transport problem in a systematic way. The simple, yet powerful idea is to replace the transposed weight matrices in the backpropagation algorithm with arbitrary, fixed matrices: via this strategy, each neuron during training receives a random projection of the error vector.

In order to illustrate the algorithm and for the purpose of our study, let us consider a fully-connected $L$-layer neural network (NN): given the input, $\vx = \vh_0 \in \R^d$, the predicted label $\hat \vy \in \R^o$ is computed as
\begin{equation}
    \va_\ell = \mW_\ell \vh_{\ell-1}\,, \quad  \vh_{\ell} = \sigma(\va_{\ell})\,, 
    \quad \ell \in \{1,\ldots, L\} := [L],
\end{equation} $\sigma$ being a (potentially) nonlinear function applied entry-wise to the vector $\va_{\ell}$, and finally $\hat \vy = f(\va_L)$ for some output nonlinear function $f$. We  consider a regression task with loss function $\mathcal{L}$ being the standard Mean Square Error (MSE). 
The backpropagation strategy for GD prescribes the following updates for each layer in the network: given a step size $\eta >0$ and the error vector $\ve = \delta \va_{L}= \frac{\partial \mathcal{L}}{\partial \hat \vy} = \hat \vy - \vy$ where $\vy$ is the true label, we have
\begin{equation}\tag{GD}\label{eq:GD}
    \delta \mW_L = -\eta \ve \vh^\top_{L-1},\;\;
    \delta \mW_{\ell} = - \eta \delta \va_{\ell} \vh_{\ell-1}^\top
    \;\; \text{and} \;\;
    \delta \va_\ell = \le(\mW^\top_{\ell+1} \delta\va_{\ell+1} \ri) \odot \sigma'(\va_\ell),
    \;
     \ell \in [L-1]
\end{equation} where $\odot$ denotes the Hadamard product.
The FA recipe prescribes the following:
\begin{equation}
\textbf{FeedBack Alignement:} \qquad \text{in \eqref{eq:GD} use} \qquad 
 \delta \va_\ell=  \le(\mM_\ell \delta\va_{\ell+1} \ri) \odot \sigma'(\va_\ell)\,, \quad  \ell \in [L-1]\,.    \label{eq:def_FA}
\end{equation}
where $\{\mM_\ell\}$ are a collection of arbitrarily chosen matrices, which do not evolve during training.

\paragraph{Related and previous work.}

Inspired by \citet{Lillicrap16} and shortly after its publication, \citet{Nokland16} proposed two new algorithms called Direct and Inverse FA. In particular, Direct FA reduces to the FA algorithm in the case of a 2-layer NN or multi-layer linear NN; the difference between these two algorithms in the general multi-layer nonlinear NN resides in injecting the error vector directly into the update of each hidden layer of the network: $\delta \va_\ell= \le(\mM_\ell \ve \ri) \odot \sigma'(\va_\ell)$, $\ell \in [ 1, L-1]$. Such a modification of the FA algorithm can be interpreted as a noisy version of layer-wise training \citep{Gilmer17}. 

Since its first formulation, there has been an extensive numerical analysis on testing whether FA and Direct FA algorithms can be successfully applied to ML problems, in particular in the presence of complex datasets and deep architectures \citep{Bartunov18, Moskovitz18, Launay19}.
We mention the recent survey paper \cite{Launay20} where the focus is on Direct FA and it is shown that the algorithm performs well in modern deep architectures (Transformers and Graph NN, among others) applied to problems like neural view synthesis or natural language processing. 
Furthermore, more on the applied (biologically-inspired) side, there have been recent pushes in furthering the idea of FA: spike-train level direct feedback alignment 
\citep{Lee20}, memory-efficient direct feedback alignment 
\citep{Chu20} and direct random target projection 
\citep{Frenkel21} to mention a few examples.

On the other hand, rigorous theoretical results are scarce.
Preliminary asymptotic results can be found in the original papers by \citet{Lillicrap16} and  \citet{Nokland16}. More recently, \cite{Refinetti20} presented the derivation of a system of differential equations that governs the Direct FA dynamics for a 2-layer non-linear NN in the regime where the input dimension goes to infinity. There is no explicit formula for the solution of the system, but via a careful local analysis it is possible to identify two separate phases of learning (alignment and memorization) and a "degeneracy breaking" effect: at the end of the FA training, the selected solution maximizes the overlap between the second weight matrix $\mW_2$ and the FA matrix $\mM$. Such an "alignment" dynamics is also evident in our analysis (Section \ref{sec3}).
Convergence guarantees for deep networks are still missing from the literature and one of the goals of this paper is indeed to provide rigorous proofs of convergence, together with convergence rates. 

Our analysis is in a similar vein as some recent works on implicit regularization in linear neural networks~\citet{Saxe18,Gidel19,arora2018convergence}. Even though our setting is very similar, our analysis is different: because of the feedback alignment matrices, the autonomous differential equation obtained significantly differ and lead to different dynamics. Moreover our focus is slightly different from~\citet{Saxe18,Gidel19,arora2018convergence} which are fully dedicated to implicit regularization of GD while we also aim at proving the convergence of FA.

\paragraph{Our contributions.} 
We aim to provide a theoretical understanding of how and when FA works. We focus on two aspects: 1) proving convergence of the algorithm and 2) analyzing implicit regularization phenomena.
We will extensively study the case of \emph{deep linear} NNs. Despite being sometimes regarded as too simplistic, 
the study of linear networks is a crucial starting point for a systematic analysis of the FA algorithm: it will allow us to rigorously analyze a tractable model and it will give us insights and intuition on the general nonlinear setting (Section \ref{sec:conclusions}). 

Our main results are the following. \vspace{-1.5mm}
\begin{itemize}\itemsep .5pt
\item 
We prove convergence of the FA continuous dynamics, with rates, for $L$-layer NNs with $\ L\geq 2$ (Sections \ref{sec3} and \ref{sec4}). Such results hold  for any input dimension and number of neurons: the network is not necessarily overparametrized.
\item We prove that, for certain initialization schemes,  the continuous FA dynamics sequentially learn the solutions of a reduced-rank regression problem, but 
in a backward fashion: smaller components first, dominant components later (Section \ref{sec5}). Additionally, we provide guidelines for an initialization scheme that avoids such phenomena. 
\item Finally (Section \ref{sec6}), we analyze the corresponding discrete FA dynamics and we prove convergence to the true labels with linear convergence rates.
\end{itemize}

We assumed some conditions on the data and on the model in order to render the exposition and the reading more clear. However, some of the "strong" assumptions that will appear in the paper can be easily relaxed (see Section \ref{sec3}).

\section{Warm-up: Shallow networks}\label{sec3}

In this section, we consider a 2-layer \emph{linear} NN with vector output $\hat \vy = \mW_2\mW_1 \vx \in \R^o$ 
and input vector $\vx \in \R^d$ such that $(\vx,\vy) \sim \mathcal{D}$ for some distribution $\mathcal{D}$. Define  ${\bm \Sigma}_{xx} := \E [\vx \vx^\top] \in \R^{d\times d}$, that we assume to be positive definite, and ${\bm \Sigma}_{xy} := \E [\vy \vx^\top] \in \R^{o\times d}$, then the following result holds:


\begin{prop} For any distribution $\mathcal{D}$, the FA dynamics that dictates the learning process is
\label{prop:FA_dyn}
\begin{gather}
 \dot \mW_1 = \mM \le({\bm \Sigma}_{xy} - \mW_2\mW_1{\bm \Sigma}_{xx}  \ri)  \,,
\qquad
\dot \mW_2 = \le({\bm \Sigma}_{xy} - \mW_2\mW_1{\bm \Sigma}_{xx}  \ri) \mW_1^\top, \label{propFAsystem}
\end{gather}
where the dot refers to the time derivative.
\end{prop}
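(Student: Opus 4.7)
The plan is to start from the FA update rules in \eqref{eq:def_FA} specialized to the linear two-layer case, take expectation over $\mathcal{D}$, and then pass to the continuous-time gradient flow limit. First, I would unpack the forward pass in the linear regime: $\va_1 = \mW_1 \vx$, $\vh_1 = \va_1$ (since $\sigma = \mathrm{id}$, so $\sigma' \equiv 1$), and $\va_2 = \mW_2\mW_1\vx = \hat\vy$. Under the MSE loss $\mathcal{L} = \tfrac12 \|\hat\vy - \vy\|^2$, the error vector at the output layer is $\ve = \delta\va_2 = \hat\vy - \vy = \mW_2\mW_1\vx - \vy$.

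Second, I would read off the per-sample weight increments. For the output layer (which is identical under GD and FA), $\delta\mW_2 = -\eta \ve \vh_1^\top = -\eta(\mW_2\mW_1\vx - \vy)(\mW_1\vx)^\top$. For the hidden layer, the FA prescription \eqref{eq:def_FA} yields $\delta\va_1 = (\mM\,\delta\va_2)\odot \sigma'(\va_1) = \mM \ve$ in the linear case, so that $\delta\mW_1 = -\eta\,\delta\va_1\, \vx^\top = -\eta\, \mM(\mW_2\mW_1\vx - \vy)\vx^\top$.

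Third, I would take expectations under $(\vx,\vy)\sim\mathcal{D}$ using the definitions $\bm\Sigma_{xx} = \E[\vx\vx^\top]$ and $\bm\Sigma_{xy} = \E[\vy\vx^\top]$:
\begin{align*}
\E[\ve\, \vx^\top] &= \mW_2\mW_1 \bm\Sigma_{xx} - \bm\Sigma_{xy}, \\
\E[\ve\, \vh_1^\top] &= \E[\ve\, \vx^\top]\mW_1^\top = (\mW_2\mW_1 \bm\Sigma_{xx} - \bm\Sigma_{xy})\mW_1^\top.
\end{align*}
Plugging these into the expected discrete updates yields
$\E[\delta\mW_1] = \eta\,\mM(\bm\Sigma_{xy} - \mW_2\mW_1 \bm\Sigma_{xx})$ and
$\E[\delta\mW_2] = \eta\,(\bm\Sigma_{xy} - \mW_2\mW_1 \bm\Sigma_{xx})\mW_1^\top$.

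Finally, I would pass to the continuous-time limit in the standard way: rescaling time as $t = k\eta$ and letting $\eta\to 0$, the expected discrete dynamics converge to the gradient-flow ODE system \eqref{propFAsystem}. None of the steps here is technically difficult; the main point to be careful about is simply correctly substituting $\mM$ in place of $\mW_2^\top$ exactly in the hidden-layer update (and not in the output-layer update) and keeping track of the transposes when differentiating with respect to $\mW_1$ versus $\mW_2$. All remaining manipulations are linear-algebraic bookkeeping.
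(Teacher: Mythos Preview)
Your proposal is correct and follows essentially the same route as the paper's proof: compute the (per-sample) update for $\mW_2$ as in GD, replace $\mW_2^\top$ by $\mM$ in the hidden-layer update as dictated by \eqref{eq:def_FA}, take expectation under $\mathcal{D}$ to obtain the population quantities $\bm\Sigma_{xx}$ and $\bm\Sigma_{xy}$, and pass to continuous time. The paper is slightly more terse---it differentiates the population loss directly and then performs the FA substitution---but the content is identical.
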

Note that this result particularly holds for empirical distributions corresponding to finite datasets.

Classical results on comparative spectral decomposition \citep{vanLoan76, Paige81} guarantee that the two matrices ${\bm \Sigma}_{xx}$, ${\bm \Sigma}_{xy}$ can be simultaneously decomposed as ${\bm \Sigma}_{xx} = \mV {\bm \Lambda}_{xx} \mV^\top$ and ${\bm \Sigma}_{xy} = \mU {\bm \Lambda}_{xy} \mV^\top$, where the latter corresponds to the singular value decomposition (SVD) of ${\bm \Sigma}_{xy}$.
Next, we introduce the change of variables $\mW_1 = \mR \tilde\mW_1 \mV^\top$ and $\mW_2 =  \mU \tilde \mW_2 \mR^\top$,
for some arbitrary left-orthogonal matrix $\mR$,  and we choose the FA matrix $\mM$ to be decomposable in a similar fashion: $\mM = \mR \mD \mU^\top$,
for some matrix $\mD$ with positive diagonal entries $\mD_{ii} >0$ and zero entries otherwise. In particular, this implies that $\mM$ is full rank, therefore guaranteeing convergence to a global minimum of the dynamics. 
The equations for the FA flow will then become
\begin{gather}
\dot{\tilde{\mW_1}} = \mD \le({\bm \Lambda}_{xy} - \tilde \mW_2\tilde \mW_1 \mLambda_{xx} \ri)  \qquad
\dot{ \tilde{ \mW_2}} = \le({\bm \Lambda}_{xy} - \tilde \mW_2 \tilde \mW_1 \mLambda_{xx} \ri) \tilde \mW_1^\top .
\end{gather}

Moreover, by the change of variable 
    $\mW_1' := \tilde \mW_1 \mLambda_{xx}^{-1/2}$, $\mW_2' := \mLambda_{xx}^{-1/2}\tilde \mW_2$, $\mD' :=\mD \mLambda_{xx}^{-1/2}$
where ${\bm \Lambda}_{xx}^{1/2}$ is the coordinate-wise square root of $\bm \Lambda_{xx}$,\footnote{Note that we used a slight abuse of notation: depending on the matrix it is multiplied to $\mLambda_{xx}^{-1/2}$ corresponds to the diagonal matrix either of size $d\times d$ or $o \times o$ with the diagonal coefficients $[\mLambda_{xx}^{-1/2}]_{ii}\,,\, 1\leq i \leq \text{rank}(\mLambda_{xx})$.} we get after some simple calculations that
\begin{gather}
 \dot \mW_1' = \mD' \le(\mLambda_{xx}^{1/2}{\bm \Lambda}_{xy}\mLambda_{xx}^{-1/2} - \mW_2'\mW_1' \ri)  \,,
\quad
\dot \mW_2' = \le(\mLambda_{xx}^{1/2}\mLambda_{xy}\mLambda_{xx}^{-1/2} - \mW_2'\mW_1' \ri) {\mW_1'}^\top .\label{FAsystem}
\end{gather}
Therefore, by considering $\mLambda_{xy}' := \mLambda_{xx}^{1/2}{\bm \Lambda}_{xy}\mLambda_{xx}^{-1/2}$, we may equivalently assume ${\bm \Sigma}_{xx} = \mI_d$ (isotropic features), without any loss of generality.



It is now crucial to observe that if $ \mW_1' \in \R^{h \times d}, \mW_2' \in \R^{o \times h}$ are diagonal matrices, then the dynamics {decouples} into $k$ independent equations with $k = \min\{d,h,o\}$.  
In particular, let $\theta_{1}^i = (\mW'_{1})_{ii}$ and $\theta_{2}^i = (\mW'_{2})_{ii}$ and assume that at time $t=0$ the matrices $\mW'_1(0)$ and $\mW'_2(0)$ are diagonal. Then, for each $i =1,\ldots, k$ we obtain the following scalar system:
\begin{gather}
\dot \theta_{1}^i = d^i \le(\lambda^i -  \theta_{2}^i\theta_{1}^i \ri)  \qquad
\dot \theta_{2}^i = \le(\lambda^i - \theta_{2}^i\theta_{1}^i  \ri) \theta_{1}^i, \label{FAflow}
\end{gather}
where $\lambda^i = ({\bm \Lambda}_{xy})_{ii}$ and $d^i = \mD_{ii}$. 
Note that for $i >k $ one could define diagonal coefficients for the matrices $\mW'_1$ or $\mW'_2$, but their derivative will be $0$, thus non-trivial dynamics only occur for $i \in [k]$.

\subsection{The scalar dynamics}

We now turn our attention to analyzing a nonlinear system of the form \eqref{FAflow}
for the scalar functions $\theta_1,\theta_2 \in C^1(\R)$ (we suppressed the indices).
\begin{thm}\label{FAconv_scalar}
For any FA constant $d \in \R_{>0}$, and $\forall \ \theta_0 \in \R$ initial value with initialization scheme
\begin{gather}
\theta_1(0) = {\theta_0} \qquad \text{and} \qquad \theta_2(0) = \frac{{\theta_0}^2}{2d}, \label{init_K=0}
\end{gather} 
$\exists \ C>0$ such that the product $\theta_2(t) \theta_1(t)$ of the solution of the system \eqref{FAflow} converges exponentially to the signal $\lambda$ and in a monotonic fashion: $|\theta_2(t) \theta_1(t) - \lambda| < C  e^{-\frac{3}{2} (d\lambda)^{\frac{2}{3}}t}$ as  $t  \to +\infty$.
\end{thm}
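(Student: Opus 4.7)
The plan is to exploit a conservation law that reduces \eqref{FAflow} to an autonomous scalar ODE, and then analyze that ODE directly. Computing the time derivative of the candidate conserved quantity gives
\[
\frac{d}{dt}\bigl(\theta_1^2 - 2d\,\theta_2\bigr) = 2\theta_1\dot\theta_1 - 2d\,\dot\theta_2 = 2d\,\theta_1(\lambda - \theta_1\theta_2) - 2d(\lambda - \theta_1\theta_2)\theta_1 = 0,
\]
so $K := \theta_1^2 - 2d\,\theta_2$ is a constant of motion. The initialization \eqref{init_K=0} is chosen precisely so that $K(0)=0$, hence $\theta_2(t) = \theta_1(t)^2/(2d)$ along the whole trajectory. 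Substituting into $\dot\theta_1 = d(\lambda - \theta_1\theta_2)$ reduces the system to the single autonomous equation $\dot\theta_1 = d\lambda - \theta_1^3/2$.

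Next I would establish monotone convergence of this scalar ODE. Its unique real equilibrium is $\theta_1^\star := (2d\lambda)^{1/3}$, and the cubic factors as
\[
d\lambda - \tfrac{1}{2}\theta_1^3 = -\tfrac{1}{2}(\theta_1 - \theta_1^\star)\bigl(\theta_1^2 + \theta_1\theta_1^\star + (\theta_1^\star)^2\bigr).
\]
The quadratic factor is bounded below by $\tfrac{3}{4}(\theta_1^\star)^2 > 0$, so $\dot\theta_1$ has the sign of $\theta_1^\star - \theta_1$. Consequently $\theta_1(t)$ converges monotonically to $\theta_1^\star$ from either side, and then $\theta_2\theta_1 = \theta_1^3/(2d)$, being a strictly increasing function of $\theta_1$, converges monotonically to $(\theta_1^\star)^3/(2d) = \lambda$.

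For the exponential rate, observe that the quadratic factor above is continuous in $\theta_1$ with value $3(\theta_1^\star)^2 = 3(2d\lambda)^{2/3}$ at $\theta_1 = \theta_1^\star$. Since $3(2d\lambda)^{2/3} = 3\cdot 2^{2/3}(d\lambda)^{2/3} > 3(d\lambda)^{2/3}$, there is a neighborhood of $\theta_1^\star$ on which $\tfrac{1}{2}\bigl(\theta_1^2 + \theta_1\theta_1^\star + (\theta_1^\star)^2\bigr) \geq \tfrac{3}{2}(d\lambda)^{2/3}$. By the monotone convergence above, $\theta_1(t)$ enters this neighborhood at some finite time $T$, and for $t\geq T$ a Gr\"onwall-type comparison on $|\theta_1 - \theta_1^\star|$ yields $|\theta_1(t)-\theta_1^\star|\leq |\theta_1(T)-\theta_1^\star|\,e^{-\frac{3}{2}(d\lambda)^{2/3}(t-T)}$. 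Writing $\theta_1\theta_2 - \lambda = \tfrac{1}{2d}(\theta_1-\theta_1^\star)\bigl(\theta_1^2+\theta_1\theta_1^\star+(\theta_1^\star)^2\bigr)$ and using boundedness of $\theta_1$ (from monotonicity) transfers this estimate to $|\theta_1\theta_2 - \lambda|$ with an enlarged constant $C$.

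The main obstacle is bookkeeping rather than a deep step: one must verify the argument is uniform across sign regimes of $\theta_0$. In particular, for $\theta_0\leq 0$ the trajectory $\theta_1(t)$ passes through zero before reaching the positive equilibrium $\theta_1^\star$; monotonicity and the eventual entry into the linearization neighborhood still hold, but rely on the explicit sign analysis of the factorization above. It is also worth emphasizing that the stated rate $\tfrac{3}{2}(d\lambda)^{2/3}$ is deliberately looser than the sharp local rate $\tfrac{3}{2}(2d\lambda)^{2/3}$, providing the slack that is needed to absorb the transient phase into a single clean exponential bound.
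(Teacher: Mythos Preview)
Your proposal is correct, and the reduction to a scalar ODE via the conserved quantity $\theta_1^2-2d\theta_2$ is exactly what the paper does. The two proofs diverge only in how the exponential rate is extracted. The paper integrates the separable equation $\dot\theta_1=\tfrac{1}{2}(r^3-\theta_1^3)$, $r=(2d\lambda)^{1/3}$, explicitly by partial fractions, obtaining an implicit formula of the form
\[
-\tfrac{2}{3r^2}\ln|\theta_1-r|+\tfrac{1}{3r^2}\ln(\theta_1^2+r\theta_1+r^2)+\tfrac{2}{r^2\sqrt{3}}\arctan\Bigl(\tfrac{2\theta_1+r}{r\sqrt{3}}\Bigr)+C_0=t,
\]
and then reads off $|\theta_1-r|=e^{-\frac{3r^2}{2}t+\mathcal O(1)}$ from the fact that only the first logarithm can diverge. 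You instead use a comparison/Gr\"onwall argument in a neighborhood of $\theta_1^\star$, relying on the slack between the sharp local rate $\tfrac{3}{2}(2d\lambda)^{2/3}$ and the stated rate $\tfrac{3}{2}(d\lambda)^{2/3}$ to absorb the transient. Your route is more elementary and sidesteps the explicit integration entirely; the paper's route yields the sharp constant $\tfrac{3}{2}(2d\lambda)^{2/3}$ and, more importantly, produces the closed-form implicit solution that is reused downstream (e.g., to locate the vanishing time $T_0$ in the implicit-regularization analysis). Your observation that the stated exponent is deliberately weaker than the sharp one is exactly the mechanism that makes your softer argument go through; the paper's proof does not need that slack.
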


The proof can be found in Appendix \ref{app:FAconvproof_scalar}: along with the proof, it appears that the FA weights drive the learning of the forward weights for the first layers to the point that their limit is "aligned" to the value of the FA constants (in a nonlinear fashion). Similarly, this holds for the deep network case (Section \ref{sec4} and Appendix \ref{app:FAconv_Llayers_proof}). One can also observe that components with a large FA constant $d$ are learned faster (see \eqref{FAflow1}--\eqref{FAflow2} in Appendix \ref{app:FAconvproof_scalar}), boosting the alignment. 

From Theorem \ref{FAconv_scalar} it is clear that the choice of the FA constant will force the optimization to select \emph{one} particular pair of solutions $(\theta_1(t), \theta_2(t))$ among the infinitely many possible solutions (the equation $\theta_1\theta_2 = \lambda$ is a 1-dimensional manifold on the $\R^2$-plane). However, all of these solutions represent global minima (those are the only stationary points of FA), and therefore via the FA dynamics, we can find the global minima of the loss function.


We can easily notice that zero-initialization does guarantee convergence for FA, unlike in the case of GD dynamics (\figurename \ \ref{fig3}). Furthermore, we want to highlight the ideological novelty of the proposed initialization scheme \eqref{init_K=0}:  unlike considering all weight matrices as independent players at the beginning of the optimization process and initializing them at random, 
we are imposing that once the weight matrix $\mW_1$ of the first layers is initialized, the initial value of the weight matrix $\mW_2$ of the second layer is deterministic, and it depends directly from the first layer.

\begin{figure}
    \centering
    \begin{subfigure}[b]{0.475\textwidth}
         \centering
          \includegraphics[width=\textwidth]{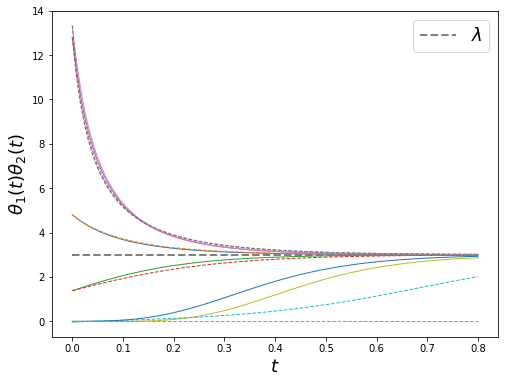}
    \caption{FA (continuous lines) and GD (dotted lines), with  initial conditions \eqref{init_K=0}: $\theta_0 = 0$ and $\theta_0 \sim \mathcal{U}([-5,5])$. 
    }
    \label{fig3}
     \end{subfigure}
     \hfill
     \begin{subfigure}[b]{0.475\textwidth}
         \centering
         \includegraphics[width=\textwidth]{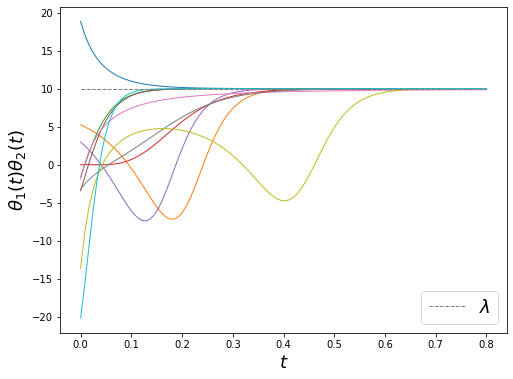} 
    \caption{FA dynamics with i.i.d initial conditions $(\theta_1(0),\theta_2(0)) \sim \mathcal{U}([-5,5])\times \mathcal{U}([-5,5])$. 
    }
    \label{fig_randIC}
     \end{subfigure}
   \caption{Numerical simulations of the solution $\theta_2(t)\theta_1(t)$ of the FA dynamics with FA constant $d=2$ and true signal $\lambda =3$ (left) and $\lambda =10$ (right), with different initializations.}
   \label{fig_scalar}
\end{figure}

More generally, it is easy (though lengthy) to prove that the product of the solutions $(\theta_1(t),\theta_2(t))$ of the system \eqref{FAflow} always converges to the signal $\lambda$, regardless of the initialization scheme considered.
\begin{thm}
\label{thm:FAconv}
For any FA constant $d \in \R_{>0}$, and for any initial values $\theta_1(0)$ and $\theta_2(0)$, the system \eqref{FAflow} admits a unique solution $(\theta_1(t),\theta_2(t))$ such that $\theta_1(t),\theta_2(t)$ are bounded for all times $t\geq 0$ and $\theta_2(t) \theta_1(t) \to \lambda$ as $t \to +\infty$. The convergence rate is exponential for any pair of initial values $(\theta_1(0),\theta_2(0))$ except for a set of (Lebesgue) measure zero in $\R^2$.
\end{thm}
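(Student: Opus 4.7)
The plan is to exploit a conservation law of the system \eqref{FAflow} to reduce the planar dynamics to a single one-dimensional autonomous cubic ODE, and then carry out a standard phase-line analysis.

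\textbf{Step 1: Local existence and a conserved quantity.} The right-hand side of \eqref{FAflow} is polynomial in $(\theta_1,\theta_2)$ and hence locally Lipschitz, so Picard--Lindel\"of provides a unique local solution. Multiplying the second equation by $d$ gives
$$d\,\dot\theta_2 \;=\; d\,\theta_1(\lambda - \theta_2\theta_1) \;=\; \theta_1\,\dot\theta_1,$$
so
$$K \;:=\; \theta_1(t)^2 - 2d\,\theta_2(t) \;=\; \theta_1(0)^2 - 2d\,\theta_2(0)$$
is conserved along every trajectory.

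\textbf{Step 2: Reduction and boundedness.} Substituting $\theta_2 = (\theta_1^2 - K)/(2d)$ into the first equation reduces the dynamics to the scalar ODE
$$\dot\theta_1 \;=\; f(\theta_1) \;:=\; d\lambda + \tfrac{K}{2}\,\theta_1 - \tfrac{1}{2}\,\theta_1^3.$$
Since $f$ has negative leading coefficient, $f(\theta_1)\to-\infty$ as $\theta_1\to+\infty$ and $f(\theta_1)\to+\infty$ as $\theta_1\to-\infty$; picking $R>0$ so that $f(R)<0<f(-R)$ and $|\theta_1(0)|\le R$ traps the trajectory in $[-R,R]$. Boundedness of $\theta_1$ together with the conservation law then bounds $\theta_2$, which gives global existence on $[0,+\infty)$.

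\textbf{Step 3: Convergence of $\theta_1\theta_2$ to $\lambda$.} The equilibria of $\dot\theta_1=f(\theta_1)$ are the roots of the cubic $p(x)=x^3 - Kx - 2d\lambda$, and at every such root $x^*$ the conservation law yields $\theta_2^* = (x^{*2}-K)/(2d)$ together with $\theta_2^* x^* = \lambda$; thus \emph{every} equilibrium of the planar system is a global minimum of the loss. Any bounded trajectory of a 1D autonomous ODE is eventually monotone (it cannot cross equilibria, by uniqueness) and therefore converges to some equilibrium $\theta_1^\infty$; consequently $\theta_2(t)\theta_1(t)\to \lambda$.

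\textbf{Step 4: Exponential rate away from a measure-zero set.} At a simple root $x^*$ of $p$ one has $f'(x^*) = K/2 - (3/2)x^{*2}\neq 0$; the sign pattern of a cubic with negative leading coefficient forces $f'(x^*)<0$ whenever $x^*$ is the $\omega$-limit of a non-trivial trajectory, so linearization yields an exponential rate of attraction. The exceptional initial conditions are (i) those landing exactly on the unstable intermediate equilibrium when $p$ has three simple roots, which is a one-parameter curve $\theta_1(0)=r_2(K)$ with $K=\theta_1(0)^2-2d\,\theta_2(0)$, and (ii) those whose value of $K$ produces a double root of $p$, which happens exactly on the algebraic curve $K^3=27\,d^2\lambda^2$ (giving polynomial $1/t$-type convergence). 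Each is a codimension-one subset of $\R^2$, hence has Lebesgue measure zero.

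\textbf{Main difficulty.} The routine parts are the derivation of the conservation law and the phase-line analysis; the delicate point is Step 4, where one must verify that both the unstable-equilibrium locus and the double-root locus genuinely form codimension-one subsets of the initial-condition plane (using that $K$ is an explicit function of $(\theta_1(0),\theta_2(0))$ and that each degeneracy is expressed by a single polynomial equation), and then confirm that away from these sets the hyperbolic linearization argument applies uniformly.
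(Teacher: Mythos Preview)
Your argument is correct and shares the paper's overarching strategy: both derive the conserved quantity (the paper writes it as $K_{\text{paper}}=\theta_2(0)-\tfrac{1}{2d}\theta_1(0)^2=-K/(2d)$ in your notation) and reduce the planar system to the same scalar cubic ODE for $\theta_1$. The two treatments diverge in how that ODE is analyzed. The paper splits into three cases according to the sign of the discriminant $\Delta$ of the cubic, and in each case integrates explicitly by partial fractions to obtain an implicit solution (a combination of logarithms and an arctangent equal to $t$); the exponential rate is then read off from the coefficient multiplying $\ln|\theta_1-r|$ at the attracting root $r$, giving for instance $\tfrac{1}{2}(r_3-r_2)(r_3-r_1)$ in the three-root case. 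You replace this explicit computation with qualitative ODE theory: boundedness from the sign of $f$ at $\pm\infty$, convergence from monotonicity of bounded one-dimensional autonomous flows, and exponential attraction from hyperbolic linearization at simple roots. The paper's route yields explicit constants in the rate; yours is shorter and avoids the case split almost entirely.

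One small remark on Step~4: your exception (i) is not a genuine obstruction. Starting exactly at the middle root $r_2$ forces $\theta_1(0)\theta_2(0)=\lambda$ (plug $\theta_1(0)$ into $p$ and use the conservation law), so the solution is constant and $|\theta_1\theta_2-\lambda|\equiv 0$ trivially decays exponentially. The only initial data with sub-exponential convergence are those on the double-root locus (ii) whose trajectories approach the double root, exactly as the paper records; your inclusion of (i) is harmless for the measure-zero claim but slightly overstates the exceptional set.
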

For the proof, see Appendix \ref{app:FAconvproof}. The main consequence of Theorem \ref{thm:FAconv} is that even by initializing all layers randomly and independently (as it is common practice), the FA dynamics still converges to the true signal. However,
we want to stress that the initialization scheme \eqref{init_K=0} is more suitable for ML applications: for arbitrary initial conditions, the product of the solutions $\theta_2(t)\theta_1(t)$ may not be monotonic (compare with Theorem \ref{FAconv_scalar}). Thus it may trigger inefficient early stopping practices (see \figurename \ \ref{fig_randIC}). Furthermore, for a randomly and independently sampled pair of initial conditions,  we may observe a highly undesirable "reversed" incremental learning phenomenon (Section \ref{sec5}), 
which is avoided if \eqref{init_K=0} is put into place.

\paragraph{Relaxing some assumptions.}
Considering the 
the assumption that the weight matrices $\mW_1, \mW_2$ share the left/right singular vectors with ${\bm \Sigma}_{xy}$ is not strictly needed: with more work, the more general case can be analyzed via a perturbation analysis, along the same lines as in~\citet{Gidel19} (Assumption 1).
Furthermore, even though the initialization of the matrices $\tilde \mW_1, \tilde \mW_2$ are diagonal \citet{Saxe18} argues that the diagonal solutions provide good approximations to the full solutions when $\tilde {\mW}_1(0), \tilde \mW_2(0)$ are full matrices, initialized with small random weights.

\section{Deep Networks}\label{sec4}



In this section we extend the results of Section \ref{sec3} to deep linear NNs: given a distribution of input-output pairs $(\vx,\vy) \sim \mathcal{D}$, the predicted output vector is now $\hat \vy = \mW_L \ldots \mW_2 \mW_1 \vx$ with weight matrices $\mW_\ell \in \R^{h_{\ell}\times h_{\ell-1}}$ ($h_0=d$, $h_L=o$) and we note ${\bm \Sigma}_{xx} = \E [\vx \vx^\top] \in \R^{d\times d}$ and ${\bm \Sigma}_{xy} = \E [\vy \vx^\top] \in \R^{o\times d}$.
\begin{prop}\label{prop:FA_dyn_mutli}For any distribution $\mathcal{D}$, the FA equations of motion are
  \begin{align}
        &\dot \mW_\ell = \mM_\ell ({\bm \Sigma}_{xy} -  \mW_{[1:L]} \Sigma_{xx} )  \mW_{[1:\ell-1]}^\top \qquad \ell \in [L],
    \end{align}
    for some FA weight matrices $\{\mM_\ell\}_{\ell\in [L]}$, with $\mM_L = \mI$, and with $\mW_{[1:\ell]} = \mW_\ell \ldots \mW_1$.
\end{prop}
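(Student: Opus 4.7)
The plan is to derive the flow by taking the continuous-time expectation of the per-sample FA update, after first reducing the backward signal to a closed form. First, I would specialize \eqref{eq:def_FA} to linear activations $\sigma=\mathrm{id}$, which makes $\sigma'(\va_\ell)\equiv 1$, kills the Hadamard product, and reduces the backward recursion to $\delta\va_\ell=\mM_\ell\,\delta\va_{\ell+1}$ for $\ell\in[L-1]$, with terminal condition $\delta\va_L=\ve=\hat\vy-\vy$. Iterating from the top gives
$$\delta\va_\ell=\Bigl(\prod_{k=\ell}^{L-1}\mM_k\Bigr)\ve,\qquad \ell\in[L],$$
where the empty product at $\ell=L$ is the identity. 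Since the $\mM_k$ are arbitrary fixed matrices, their right-to-left product is itself an arbitrary fixed matrix, and I would simply relabel it as $\mM_\ell$, consistent with the convention $\mM_L=\mI$ adopted in the proposition.

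Next, I would plug in the linear forward pass $\vh_{\ell-1}=\mW_{[1:\ell-1]}\vx$ (with $\mW_{[1:0]}:=\mI$) and $\hat\vy=\mW_{[1:L]}\vx$ into the per-sample weight update $\delta\mW_\ell=-\eta\,\delta\va_\ell\,\vh_{\ell-1}^\top$ from \eqref{eq:GD}, obtaining
$$\delta\mW_\ell=-\eta\,\mM_\ell\bigl(\mW_{[1:L]}\vx-\vy\bigr)\vx^\top\mW_{[1:\ell-1]}^\top.$$
Averaging over $(\vx,\vy)\sim\mathcal{D}$ and passing to the gradient-flow limit $\eta\to 0^+$, I can pull the time-deterministic factors $\mM_\ell$, $\mW_{[1:L]}$, $\mW_{[1:\ell-1]}$ out of the expectation and use linearity to obtain
$$\dot\mW_\ell=\mM_\ell\bigl(\mathbb{E}[\vy\vx^\top]-\mW_{[1:L]}\mathbb{E}[\vx\vx^\top]\bigr)\mW_{[1:\ell-1]}^\top=\mM_\ell\bigl({\bm\Sigma}_{xy}-\mW_{[1:L]}{\bm\Sigma}_{xx}\bigr)\mW_{[1:\ell-1]}^\top,$$
which is exactly the claim. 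The boundary case $\ell=L$ recovers the standard last-layer GD update from \eqref{eq:GD} precisely because $\mM_L=\mI$, and the derivation carries over verbatim to an empirical distribution on a finite dataset.

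I do not anticipate a serious obstacle. The one subtlety worth flagging is the notational convention: the $\mM_\ell$ appearing in the statement of the proposition are not the original per-layer feedback matrices of \eqref{eq:def_FA} but the effective cumulative products $\prod_{k=\ell}^{L-1}\mM_k$, which may in any case be regarded as new arbitrary fixed matrices. Once this identification is made, the remainder is a routine specialization of the FA backward recursion to linear activations, followed by the same expectation/continuous-time limit already used to prove Proposition \ref{prop:FA_dyn} in the shallow case.
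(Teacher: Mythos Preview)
Your proposal is correct and follows essentially the same route as the paper: compute the backward signal for linear activations, take the expectation to produce ${\bm\Sigma}_{xy}-\mW_{[1:L]}{\bm\Sigma}_{xx}$, and then absorb the cumulative product $\prod_{k=\ell}^{L-1}\mM_k$ into a single relabeled matrix $\mM_\ell$. The only cosmetic difference is that the paper first writes down the exact gradient $\nabla_{\mW_\ell}\mathcal{L}=\mW_{[\ell+1:L]}^\top({\bm\Sigma}_{xy}-\mW_{[1:L]}{\bm\Sigma}_{xx})\mW_{[1:\ell-1]}^\top$ and then replaces $\mW_{[\ell+1:L]}^\top$ by the FA matrix, whereas you unroll the FA recursion directly; both note the same reparametrization of the effective feedback matrices.
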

  
 Notice that the legitimate FA update would be 
$\dot{\mW}_\ell = \tilde{\mM}_{[\ell:L-1]} ({\bm \Sigma}_{xy} -  \mW_{[1:L]} \Sigma_{xx}) \mW_{[1:\ell-1]}^\top$,  however since we are requiring the FA matrices to be full rank, we can redefine the FA matrices as $ \mM_\ell = \tilde \mM_{[\ell:L-1]}$.
We now perform a similar change of variables as in Section \ref{sec3}: $\mW_\ell =  \mR_\ell \tilde \mW_\ell \mR_{\ell-1}^\top$ with $\mR_0 = \mV$, $\mR_{L} = \mU$ and
$\{\mR_\ell\}_{\ell = 1,\ldots,L-1}$ arbitrary left-orthogonal matrices. We additionally assume that the FA matrices $\mM_\ell$ have the following prescribed form $\mM_\ell = \mR_{\ell} \mD_\ell \mU^\top$ 
    (with $\mD_\ell$ a matrix with zero entries except for entries on the diagonal, which are positive).
    Then, the system becomes 
    \begin{align}
        \dot{\tilde{\mW_\ell}} = \mD_\ell ({\bm \Lambda}_{xy} -  \tilde \mW_{[1:L]} \mLambda_{xx})  \tilde \mW_{[1:\ell-1]}^\top \qquad \ell \in [L].
    \end{align}
Considering the change of variable
    $\mW_1' := \tilde \mW_1 \mLambda_{xx}^{-1/2}$, $\mW_L' := \mLambda_{xx}^{-1/2}\tilde \mW_L$, $\mD_\ell' :=\mD_\ell \mLambda_{xx}^{-1/2}$, $\mLambda_{xy}' := \mLambda_{xx}^{1/2}{\bm \Lambda}_{xy}\mLambda_{xx}^{-1/2}$
and assuming as in~\S\ref{sec3} that $\mW'_j(0)$ are diagonal, the system decouples:
    \begin{gather}
    \dot \theta_{\ell}^i = d^i_{\ell}(\lambda^i - \theta_L^i\ldots \theta_1^i) \theta_{\ell-1}^i \ldots \theta_1^i, \qquad \ell\in [L], \label{eq84}
    \end{gather}
    and for each entry $i=1,\ldots, k$, with $k=\min\{d, h_1,\ldots, h_{L-1}, o\} $, the analysis reduces to studying a system of scalar functions $\{\theta^i_1,\ldots, \theta_L^i\}$. 

\begin{thm}
\label{FAconv_Llayers}
For any set of FA constants $\{d_\ell\}_{\ell \in [L-1]}$, $d_\ell \in \R_{>0}$ and for any $\theta_0 \in \R$, there exists an initialization scheme of the form 
\begin{gather}
    \theta_1(0) = \theta_0, \qquad \theta_\ell (0) = C_\ell \theta_0^{2^{\ell-1}}, \qquad \ell = 2,\ldots, L, \label{init_Llayers}
\end{gather}
where $C_\ell \in \R_{>0}$ depends on $\{d_1,\ldots, d_\ell\}$, 
such that the system of differential equations \eqref{eq84} has a unique solution $(\theta_1(t), \ldots, \theta_L(t))$. 
Furthermore, the following convergence result holds: $\theta_L(t) \theta_{L-1}(t)\ldots \theta_1(t) \to  \lambda$ as $t\to +\infty$.
\end{thm}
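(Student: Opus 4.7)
}
The plan is to find a family of conserved quantities for the system \eqref{eq84} and pick the initialization so that all of them vanish; this reduces the system to a single scalar ODE for $\theta_1$, which we then analyze by elementary means. The starting observation is that the right-hand side of \eqref{eq84} has a common factor $(\lambda - \theta_L\cdots\theta_1)$, so dividing out the $d^i_\ell$ yields, for $\ell\in[L-1]$,
\begin{equation*}
\frac{\dot\theta_{\ell+1}}{d_{\ell+1}} \;=\; (\lambda - \theta_L\cdots\theta_1)\,\theta_\ell\theta_{\ell-1}\cdots\theta_1 \;=\; \theta_\ell\cdot\frac{\dot\theta_\ell}{d_\ell},
\end{equation*}
which rearranges into the exact differential $d_\ell\,\dot\theta_{\ell+1} = \tfrac12 d_{\ell+1}\,\tfrac{d}{dt}\theta_\ell^2$. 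Integrating in time gives the $L-1$ conservation laws $d_\ell\theta_{\ell+1}(t) - \tfrac12 d_{\ell+1}\theta_\ell(t)^2 = d_\ell\theta_{\ell+1}(0) - \tfrac12 d_{\ell+1}\theta_\ell(0)^2$.

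Next, I would impose the initialization so that every conserved quantity above is zero, i.e.\ $\theta_{\ell+1}(0) = \tfrac{d_{\ell+1}}{2d_\ell}\theta_\ell(0)^2$. Starting from $\theta_1(0)=\theta_0$ and unrolling this recursion gives exactly the stated form $\theta_\ell(0) = C_\ell\,\theta_0^{2^{\ell-1}}$ with $C_1=1$ and, recursively, $C_{\ell+1} = \tfrac{d_{\ell+1}}{2d_\ell}C_\ell^2$, so each $C_\ell\in\R_{>0}$ depends only on $d_1,\dots,d_\ell$. Along the corresponding trajectory the conservation laws then read $\theta_{\ell+1}(t) = \tfrac{d_{\ell+1}}{2d_\ell}\theta_\ell(t)^2$ for all $t\geq 0$, which by the same induction gives $\theta_\ell(t) = C_\ell\,\theta_1(t)^{2^{\ell-1}}$ for every $\ell\in[L]$.

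Plugging these expressions into the product, I get $\theta_L(t)\cdots\theta_1(t) = A\,\theta_1(t)^{2^L-1}$ where $A := \prod_{\ell=1}^L C_\ell >0$. The equation for $\theta_1$ in \eqref{eq84} therefore decouples into the single scalar ODE
\begin{equation*}
\dot\theta_1 \;=\; d_1\bigl(\lambda - A\,\theta_1^{2^L-1}\bigr).
\end{equation*}
Since $2^L-1$ is odd, the right-hand side is strictly decreasing in $\theta_1$; it has a unique real zero at $\theta_1^\star = (\lambda/A)^{1/(2^L-1)}$, which is a globally attracting equilibrium (the RHS is positive for $\theta_1<\theta_1^\star$ and negative for $\theta_1>\theta_1^\star$, and linearizing shows the rate of convergence is exponential with rate $d_1 A(2^L-1)(\theta_1^\star)^{2^L-2}$). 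Global existence and uniqueness follow from Picard--Lindel\"of together with the a priori bound provided by this monotone structure. Hence $\theta_1(t)\to\theta_1^\star$ and therefore $\theta_L(t)\cdots\theta_1(t) = A\theta_1(t)^{2^L-1}\to\lambda$ as $t\to+\infty$, which is the claim.

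The main obstacle I expect is bookkeeping rather than mathematics: the recursion defining $C_\ell$ grows doubly-exponentially in $\ell$, so some care is needed to verify that the reduction $\theta_\ell(t) = C_\ell\theta_1(t)^{2^{\ell-1}}$ is actually consistent with \emph{each} of the $L$ equations in \eqref{eq84} (not only with the first one used to drive the reduction); this amounts to checking that the equation for $\dot\theta_\ell$ is automatically implied by the conservation laws plus the equation for $\dot\theta_1$, which is a direct but slightly tedious computation. A secondary subtlety is the degenerate initialization $\theta_0=0$: all $\theta_\ell(0)$ vanish, but the ODE then has $\theta_1\equiv 0$ as an unstable stationary solution (one can check the linearization at $0$ is repulsive whenever $\lambda\neq 0$, giving $\dot\theta_1(0)=d_1\lambda\neq 0$), so trajectories still escape and converge to $\theta_1^\star$, just without a strictly exponential rate from the start.
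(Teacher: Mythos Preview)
Your proposal is correct and follows essentially the same route as the paper: derive the conserved quantities $d_\ell\theta_{\ell+1}-\tfrac12 d_{\ell+1}\theta_\ell^2$ (the paper obtains the equivalent relations by substituting $(\lambda-\theta_L\cdots\theta_1)=\dot\theta_1/d_1$ into each layer's equation), choose the initialization so they all vanish, reduce to the scalar ODE $\dot\theta_1=d_1(\lambda-\mathfrak{K}\theta_1^{2^L-1})$, and conclude by monotonicity. One small slip: in your remark on $\theta_0=0$, the point $\theta_1\equiv 0$ is \emph{not} a stationary solution of the reduced ODE (as you yourself note, $\dot\theta_1(0)=d_1\lambda\neq 0$), so no instability analysis is needed there---the trajectory simply moves off immediately.
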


As in the 2-layer case, the proposed initialization scheme is characterized by the fact that only the first layer is initialized at random, while all the other layers depend deterministically on it. Note that, along the same lines as Theorem \ref{thm:FAconv}, it is also possible to prove convergence to the true signal for any initialization set $\{\theta_\ell(0)\}_{\ell \in [L]}$. However, some desirable properties of the dynamics (monotonicity, exponential convergence) may be lost.
For the proofs and the explicit construction of the initialization scheme \eqref{init_Llayers} see Appendix \ref{app:FAconv_Llayers_proof}.

\section{Implicit regularization phenomena}\label{sec5}

In this section, we analyze the phenomenon of hierarchical learning (or implicit regularization) for FA, as described in \citet{Gidel19} and \citet{ Gissin19} in the case of GD: along the optimization path, features of the model are learned in a sequential way, so that one feature is fully learned before the dynamics starts to learn the next feature. 
For FA, we may observe a similar behavior as GD, however, in certain settings the dynamics seems to favor the learning of secondary component  prior than the learning of principal components, i.e. signals $\lambda^i$'s with small magnitude are learned earlier than those of big magnitude.
This phenomenon is potentially highly problematic since smaller singular values $\lambda^i$'s are normally tied to noisy measurements and do not represent nor give meaningful information on the structure of the data. Privileging negligible features goes against the standard principles of reasoned data analysis (Principal Component Analysis, for example). We will see that the occurrence of this behavior depends on particular initializations of the algorithm and it can be therefore successfully avoided. 

\begin{thm}[Informal] Let $\lambda_1 > \ldots > \lambda^k$ be the principal components of ${\bm \Lambda}_{xy}$. There exists initialization schemes such that the component $\lambda_i$ is incrementally learned at time $T_i$. Moreover, there exists two different initialization schemes such that
\begin{equation}
 \textbf{Anti-Regularization(\S\ref{sub:backward}):} \;\;  T_1> \ldots > T_k
 \quad \text{and} \quad 
 \textbf{Regularization(\S\ref{sub:incremental}):} \;\;  T_1 < \ldots < T_k
\end{equation}

\end{thm}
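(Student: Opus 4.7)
The plan is to reduce to the decoupled scalar dynamics \eqref{FAflow} of Section \ref{sec3}, and for each component $i\in\{1,\ldots,k\}$ to track the transition time $T_i$ at which the scalar product $P^i(t) := \theta_2^i(t)\theta_1^i(t)$ first reaches a fixed fraction of the target singular value $\lambda^i$. Producing the two opposite orderings then reduces to choosing the FA diagonal constants $d^i$ together with the initial data $(\theta_1^i(0),\theta_2^i(0))$ so that the family $\{T_i\}_{i=1}^k$ is monotone in the desired direction.

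First, for the incremental (regularization) regime I would specialize Theorem \ref{FAconv_scalar} componentwise with a common small scale $\theta_0=\epsilon$. The conservation law $d^i\theta_2^i-(\theta_1^i)^2/2\equiv 0$ reduces each scalar system to $\dot\theta_1^i = d^i\lambda^i - (\theta_1^i)^3/2$; integrating from $\theta_1^i(0)\approx 0$ gives $P^i(t)\asymp \tfrac12 (d^i)^2(\lambda^i)^3\, t^3$ for short times, so $T_i\asymp (d^i\lambda^i)^{-2/3}$. Choosing a feedback matrix whose diagonal constants $d^i$ are independent of $i$ (the generic situation when $\mM$ is picked independently of $\mU$) then yields $T_1<T_2<\cdots<T_k$. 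For the anti-regularization regime I would invert this scaling either (i) by letting $d^i$ decay faster than $(\lambda^i)^{-1}$, so that $d^i\lambda^i$ is monotonically decreasing and $T_i$ monotonically increasing in $\lambda^i$; or (ii) by replacing \eqref{init_K=0} by an initialization with $\theta_1^i(0)=0$ and $\theta_2^i(0)=c^i$ chosen to decrease with $\lambda^i$, so that the linearization near the origin gives $P^i(t)\approx c^i d^i\lambda^i\, t$ and $T_i\asymp 1/(c^i d^i\lambda^i)$ with the reversed ordering.

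The main obstacle is not the leading-order scaling of the individual $T_i$'s but verifying true \emph{incrementality}: one must show that the sigmoidal transition around each $T_i$ is well separated from the next, so that on $(T_{i-1},T_i)$ the product $P^j$ stays close to $0$ for $j>i$ and close to $\lambda^j$ for $j<i$. I would carry this out in the asymptotic limit of small initialization $\epsilon\downarrow 0$ (respectively of widening gaps between consecutive $\lambda^i$'s in the anti-regularization case), using the explicit conservation law to bound in closed form the width of each transition window and showing that this width is dominated by the gaps $T_{i+1}-T_i$. This step is analogous to the saddle-to-saddle analyses of \citet{Gidel19,Gissin19} for gradient descent; the main new difficulty is that the natural saddle-escape scaling for FA is polynomial rather than logarithmic in $\epsilon$, which makes the separation of transition windows more delicate to establish than in the GD case.
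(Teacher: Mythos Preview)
Your regularization argument is in the right spirit: working in the $K^i=0$ regime with a common FA constant $d$ and small initialization, the timescale $(d\lambda^i)^{-2/3}$ you extract is exactly the one governing the exponential rate in Theorem~\ref{FAconv_scalar}, and it correctly orders the $T_i$ as $T_1<\cdots<T_k$. The paper reaches the same conclusion but through a slightly different computation: it keeps $K^i=0$, sends $\theta_0^i\to-\infty$, and tracks the \emph{vanishing time} $T_0^i$ at which $\theta_1^i\theta_2^i$ first crosses zero, obtaining $T_0^i\sim\tfrac{4\pi}{3\sqrt{3}}(2d\lambda^i)^{-2/3}$ explicitly from the closed-form implicit solution~\eqref{implicit_theta1}. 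Both routes land on the same scaling.

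The anti-regularization part, however, misses the mechanism the paper actually exhibits. The paper does \emph{not} reverse the ordering by tuning $d^i$ or $\theta_2^i(0)$ as functions of the (unknown) $\lambda^i$; the FA constants are kept uniform, $d^i=d$, throughout Section~\ref{sec5}. Instead, the reversal is a structural feature of the cubic ODE~\eqref{eq180} when the discriminant is positive, i.e.\ when $K^i<-\tfrac{3}{2}(\lambda^i)^{2/3}$. In that regime the cubic has three real roots $r_1^i<r_2^i<r_3^i$ and the middle one $r_2^i$ is an unstable equilibrium. Initializing $\theta_1^i(0)=r_2^i+e^{-\delta}$ and rescaling time $t\mapsto\delta t$, the solution converges (as $\delta\to\infty$) to a genuine step function with jump at
\[
T^i=\frac{2}{(r_3^i-r_2^i)(r_2^i-r_1^i)}=\frac{2}{-P'(r_2^i)}=\frac{2}{-3(r_2^i)^2-K}\,,
\]
and a short monotonicity argument on $r_2^i$ as a function of $\lambda^i$ (at fixed $K,d$) shows $T^i$ is \emph{increasing} in $\lambda^i$. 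That is the content of Theorem~\ref{thm_implicit_reg}. Your proposals (i) and (ii) produce the ordering by hand-picking $d^i$ or $c^i$ to depend on $\lambda^i$; besides not being ``initialization schemes'' in the sense of the statement (the $d^i$ are algorithm parameters, not initial weights), they do not explain why anti-regularization can arise \emph{spontaneously} from a fixed FA matrix and a data-independent initialization---which is precisely the point of \S\ref{sub:backward}.

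A smaller remark: your final paragraph says the FA saddle-escape scaling is polynomial rather than logarithmic in $\epsilon$. In the paper's anti-regularization analysis the escape from $r_2^i$ is governed by $e^{-\delta}$, so the scaling is logarithmic exactly as in the GD saddle-to-saddle picture; the step-function limit makes the separation of transition windows automatic and no delicate width estimate is needed.
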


To illustrate such a phenomenon and rigorously describe it, consider the $2$-layer case in the SVD-setting as in Section \ref{sec3}: $\forall \ i=1,\ldots,k$, the system \eqref{FAflow} can be equivalently rewritten as 
\begin{gather}
\dot \theta_1^i = -\frac{1}{2}\le((\theta_1^i)^3 + 2d K^i \theta^i_1 -2d\lambda^i \ri),  \qquad \theta^i_1(0) = \theta^i_0,  \label{eq180}
\end{gather}
and $\theta^i_2(t) = \frac{1}{2d}\theta^i_1(t)^2 + K^i$,  with $K^i = \theta^i_2(0) - \frac{1}{2d^i}(\theta^i_1(0))^2 \in \R$ and $d, \lambda^i >0$ (for simplicity we assumed $d^i=d$, $\forall \ i$).
Depending on the value of $K^i$ (i.e. the initial condition of the $i$-th system), the cubic polynomial on the right-hand-side of \eqref{eq180} may have between 1 and 3 distinct real roots. The value of the discriminant $\Delta = -4d^2\le(8d (K^i)^3 + 27(\lambda^i)^2\ri)$ will dictate which regime we fall into. 

We will be considering the case where $\Delta >0$ and the case $K^i=0$ (i.e. the initialization scheme \eqref{init_K=0}), which is a special instance of the case $\Delta <0$. The case $\Delta =0$ can be equally analyzed, but it will not be considered in this paper, as it has zero probability of occurring.\footnote{The claim is correct if we sample the initial conditions $\theta_1(0), \theta_2(0)$ independently and generically: for example, from a uniform distribution $\mathcal{U}([-R,R]) \times \mathcal{U}([-R,R])$, for some $R\in \R_{>0}$.} The sequential learning phenomenon becomes evident when we perform a double scaling limit of the time variable $t\mapsto \delta t$ and the initial condition $\theta_0^i \mapsto f(\delta,\theta_0^i)$, $\delta \to +\infty$, as it will be clear below. Note that the rescaling is uniform across the components $i$. 
Proof of Theorem \ref{thm_implicit_reg} and lengthier discussion can be found in Appendix \ref{app:impl_reg}.

\subsection{Case $\Delta >0$: backward learning}
\label{sub:backward}
If $K^i< - \frac{3}{2}(\lambda^i)^{3/2}<0$, the cubic polynomial in \eqref{eq180} has 3 real, distinct root $r^i_1< r^i_2< r^i_3$. The intuition is that by initializing $\theta_0^i$ at a value that is very close to $r_2^i$ and applying the time scaling $t\mapsto \delta t$, $\delta \gg 1$ (i.e. we are "fast-forwarding" the solution $\theta_1(\delta t)$), we can clearly identify the time threshold $T^i$ after which the signal is fully learned. More formally,
\begin{thm}\label{thm_implicit_reg}
$\forall \ i=1,\ldots, k$, if $\theta_1^i(0) = r_2^i + e^{-\delta}$ ($\delta >0$), then the solution $\theta_1^i(\delta t)$ converges to a step function:
\begin{gather}
    \theta_1^i(\delta t) \to r^i_2 \mathbb{I}_{\{t<T^i\}} + \alpha^i \mathbb{I}_{\{t=T^i\}} + r^i_3 \mathbb{I}_{\{t>T^i\}} \qquad \text{as } \delta \to +\infty,
\end{gather}
where $\mathbb{I}_{\{t\in A\}}$ is the indicator function on the set $A$, 
\begin{gather}
    T^i = \frac{2}{(r_3^i-r_2^i)(r_2^i-r_1^i)}, \qquad \alpha^i \in (r_2^i,r_3^i).
\end{gather}
Furthermore, $T^i$ is an increasing function of $\lambda^i$: for $\lambda^1> \lambda^2 > \ldots > \lambda^k$ and fixed FA constant $d$, we have $T^1> T^2 > \ldots > T^k$. 
\end{thm}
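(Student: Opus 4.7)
}

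The plan is to exploit the fact that the scalar ODE \eqref{eq180} is separable with a cubic right-hand side. Writing the cubic in factored form as $-\tfrac{1}{2}(\theta_1-r_1)(\theta_1-r_2)(\theta_1-r_3)$ with $r_1<r_2<r_3$ and decomposing into partial fractions, I obtain the implicit solution
\begin{equation}
A\log|\theta_1(t)-r_1|+B\log|\theta_1(t)-r_2|+C\log|\theta_1(t)-r_3| \;=\; -\tfrac{t}{2}+\text{const},
\end{equation}
with $A=[(r_1-r_2)(r_1-r_3)]^{-1}$, $B=[(r_2-r_1)(r_2-r_3)]^{-1}$, $C=[(r_3-r_1)(r_3-r_2)]^{-1}$. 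Note $B<0$ while $A,C>0$. The middle root $r_2$ is an unstable equilibrium: linearizing $\dot u=-\tfrac{1}{2}(r_2-r_1)(r_2-r_3)u+O(u^2)$ gives the positive eigenvalue $\alpha:=\tfrac{1}{2}(r_2-r_1)(r_3-r_2)$, so $T^i=1/\alpha$ as in the statement.

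With the prescribed initialization $\theta_1(0)=r_2+e^{-\delta}$, the constant above is dominated by $B\log e^{-\delta}=-B\delta$, so in the rescaled time $s=\delta t$ the implicit equation reads
\begin{equation}
A\log|\theta_1-r_1|+B\log|\theta_1-r_2|+C\log|\theta_1-r_3| \;=\; -\tfrac{\delta t}{2}-B\delta+O(1)
\end{equation}
as $\delta\to\infty$. The plan is then to analyze three regimes. For $t<T^i$, the linearization shows $\theta_1(\delta t)=r_2+e^{-\delta+\alpha\delta t}(1+o(1))=r_2+e^{-(1-t/T^i)\delta}(1+o(1))$, which converges to $r_2$; this is the ``stalling'' phase at the unstable equilibrium. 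For $t>T^i$, the trajectory has already escaped the neighborhood of $r_2$ in original time $O(1)$ past the escape, so it sits within $O(e^{-c\delta})$ of the attracting root $r_3$; this gives the convergence to $r_3$. The transition concentrates on a window of rescaled width $O(1/\delta)$ around $t=T^i$, which is why the limit is a step function and the value $\alpha^i\in(r_2,r_3)$ at $t=T^i$ is immaterial (the limit is in the a.e.\ sense). I would make each of the three statements rigorous by solving the linearized ODE and comparing with the full ODE via Gronwall in the stalling phase, and by using finite escape-time estimates for the autonomous 1D flow past $r_2$ in the post-escape phase.

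For the monotonicity of $T^i$ in $\lambda^i$, I would work with Vieta's formulas for $x^3+2dKx-2d\lambda=0$: the sum of roots is zero, $r_1r_2+r_1r_3+r_2r_3=2dK$, $r_1r_2r_3=2d\lambda$. Eliminating $r_1,r_3$ via $r_1+r_3=-r_2$ and $r_1r_3=r_2^2+2dK$ gives the clean identity
\begin{equation}
(r_3-r_2)(r_2-r_1) \;=\; -3r_2^{\,2}-2dK,
\end{equation}
so $T^i=2/(-3r_2^2-2dK)$ depends on $\lambda^i$ only through the middle root $r_2=r_2(\lambda)$. Implicit differentiation of $r_2^3+2dKr_2-2d\lambda=0$ yields $r_2'(\lambda)=2d/(3r_2^2+2dK)<0$ (the denominator is the slope of the cubic at $r_2$, which is negative there). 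Since $r_2(0)=0$ (the cubic becomes $x(x^2+2dK)$ at $\lambda=0$) and $r_2$ decreases, $r_2<0$ for $\lambda>0$ and hence $r_2^2$ increases in $\lambda$. Therefore $-3r_2^2-2dK$ strictly decreases in $\lambda$ and $T^i$ strictly increases in $\lambda^i$, which gives the reverse ordering in the theorem.

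The main obstacle is making the matched-asymptotics step precise—in particular controlling the error incurred when the solution moves from the linearization regime near $r_2$ into the fully nonlinear regime where $C\log|\theta_1-r_3|$ starts to dominate. I would handle this with a two-sided barrier argument: sandwich the solution between sub- and super-solutions obtained by slightly modifying the cubic near $r_2$ and $r_3$, whose exact logarithmic inversions give the sharp threshold $T^i$ up to $o(1)$ corrections as $\delta\to\infty$.
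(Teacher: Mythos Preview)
Your setup is essentially the paper's: both derive the exact implicit solution via partial fractions, isolate the dominant $-B\delta$ contribution from $\theta_0=r_2+e^{-\delta}$, and for the monotonicity both arrive at the identity $(r_3-r_2)(r_2-r_1)=-3r_2^{2}-2dK$ (the paper writes it as $-P'(r_2)$) together with $r_2<0$ and $|r_2|$ increasing in $\lambda$; your implicit-differentiation version is a cleaner way to justify the latter than the paper's one-line remark.

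Where you diverge is in the asymptotic step. You propose linearization with Gronwall for the stalling phase and a barrier/matched-asymptotics argument across the transition, and you flag this as the ``main obstacle.'' In the paper there is no obstacle: because the logarithmic relation is \emph{exact}, one simply solves it once for $r_3-\theta_1(\delta t)$ and once for $\theta_1(\delta t)-r_2$, obtaining
\[
r_3-\theta_1(\delta t)=\exp\Big\{-\tfrac{r_{32}r_{31}}{2}\,\delta\big(t-T\big)+O(1)\Big\},\qquad
\theta_1(\delta t)-r_2=\exp\Big\{-\tfrac{r_{32}r_{21}}{2}\,\delta\big(T-t\big)+O(1)\Big\},
\]
with $T=2/(r_{32}r_{21})$, and the step-function limit (and the explicit value $\alpha^i$ at $t=T$) is read off directly. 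So your plan is correct but heavier than necessary; the exact implicit solution already does the matching for you, and no Gronwall, sub/super-solutions, or escape-time estimates are needed.
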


Similar conclusions hold when $\theta_1^i(0) = r_2^i - e^{-\delta}$.  Therefore, as the value of $\lambda^i$ increases, the value of $T^i$ increases as well:  negligible features of the data are learned sooner than the dominant ones.

On the other hand, Theorem \ref{thm_implicit_reg} shows that such a backward incremental learning does \emph{not} happen with high probability: indeed, we are requiring all the parameters $K^i$ (which depends on the initialization) to be sufficiently negative and  the probability of sampling initial conditions that will yield anti-regularization depends on the value of the signals $\lambda^i$ (i.e., it is data-dependent): the bigger the value of $\lambda^i$, the smaller the probability.
Nevertheless, we will see in the next subsection how such a phenomenon can be safely bypassed and we discuss some properties that suggest that  the initialization scheme \eqref{init_K=0} can guarantee the correct sequential learning of features as GD exhibits.

\subsection{Case $K =0$: evidence of incremental learning}
\label{sub:incremental}

Imposing $K^i=0$ for all $i$'s means to recur to the initialization scheme \eqref{init_K=0}: $\theta^i_1(0)=\theta_0^i$, $\theta_2^i(0) = \frac{1}{2d}(\theta_0^i)^2$. 
In this case, Theorem \ref{FAconv_scalar} already provides exponential convergence rates that depend on the magnitude of the signal
    ($|\theta^i_2(t) \theta^i_1(t) - \lambda^i| < C  e^{-\frac{3}{2} (d\lambda^i)^{\frac{2}{3}}t}$
    as $t  \to +\infty$),
suggesting that the bigger the magnitude of the signal $\lambda^i$, the faster the convergence.

If $\theta_0^i<0$, there exists a unique time $T_0^i$ (vanishing time) such that $\theta_1^i(T_0^i)\theta^i_2(T_0^i)=0$ and $\theta^i_1(t)\theta^i_2(t)\lessgtr 0$ for $t \lessgtr T_0^i$ respectively, where
\begin{gather}
{ T_0^i =  \frac{\pi}{(r^i)^2 3\sqrt{3}}  +\frac{1}{3 (r^i)^2}\ln\le( \frac{(r^i-\theta^i_0)^2}{(\theta^i_0)^2+r^i\theta^i_0+(r^i)^2}\ri) - \frac{2}{(r^i)^2\sqrt{3}} \arctan\le(\frac{2\theta_0^i+r^i}{r^i\sqrt{3}}\ri).
}
\end{gather}
We can interpret such a vanishing time as the first milestone of the learning process. In the regime as $\theta_0^i \to -\infty$ $\forall \ i=1,\ldots, k$, we have 
\begin{gather}
    T_0^i = \frac{4\pi}{3(r^i)^2\sqrt{3}} + \mathcal{O}\le(\frac{1}{(\theta^i_0)^2}\ri);
\end{gather}
where $r^i = \sqrt[3]{2d\lambda^i}$, therefore for bigger $\lambda^i$'s (i.e. dominant singular values) the vanishing time happens sooner: for $\lambda^1 > \lambda^2> \ldots > \lambda^k$ and fixed FA constant $d >0$, $T^1_{0} < T^2_{0} < \ldots < T^k_{0}$.

\section{Discrete dynamics}\label{sec6}

\subsection{Forward Euler discretization}
In this section, we are interested in analyzing the discrete FA dynamics and its convergence properties. For the discrete time setting, we focus again on a $2$-layer linear NN. The standard forward Euler method yields the following discretized system:
\begin{align}
  & \mW_1^{(t+1)} = \mW_1^{(t)} + \eta \mM \le({\bm \Sigma}_{xy} - \mW_2^{(t)}\mW_1^{(t)} \ri)  \\
    & \mW_2^{(t+1)} = \mW_2^{(t+1)} +\eta \le({\bm \Sigma}_{xy} - \mW_2^{(t)}\mW_1^{(t)}  \ri)  \le(\mW_1^{(t)}\ri)^\top 
\end{align}
and performing again the SVD change of variables, the system becomes
\begin{align}
& \tilde \mW_1^{(t+1)} = \tilde \mW_1^{(t)} + \eta \mD \le({\bm \Lambda}_{xy} - \tilde \mW_2^{(t)}\tilde \mW_1^{(t)}  \ri)  \\
&\tilde \mW_2^{(t+1)} = \tilde \mW_2^{(t)} +\eta \le({\bm \Lambda}_{xy} - \tilde \mW_2^{(t)} \tilde \mW_1^{(t)}  \ri)  \le(\tilde \mW_1^{(t)}\ri)^\top 
\end{align}
If we set the initial conditions $\mW_1^{(0)}$, $\mW_2^{(0)}$ as diagonal, the matrices remain diagonal for all times and the dynamic decouples. For each $i=1,\ldots, k$, we have
\begin{align}
    &\le(\theta_1^{(t+1)}\ri)^i = \le(\theta_1^{(t)}\ri)^i + \eta  d^i \le(\lambda^i -  \le(\theta_{2}^{(t)}\ri)^i\le(\theta_{1}^{(t)}\ri)^i \ri) \label{EulerFA1}\\
    &\le( \theta_2^{(t+1)}\ri)^i = \le(\theta_2^{(t)}\ri)^i + \eta \le(\lambda^i - \le(\theta_{2}^{(t)}\ri)^i\le(\theta_{1}^{(t)}\ri)^i  \ri)  \le(\theta_{1}^{(t)}\ri)^i \label{EulerFA2}
\end{align}
The following result holds (the index $i$ is suppressed for simplicity). 
\begin{thm}
\label{conv_discreteEulerFA}
 Consider the system \eqref{EulerFA1}--\eqref{EulerFA2}.
With zero initialization $\theta_1^{(0)} = \theta_2^{(0)} = { 0}$, $\forall \ \lambda, d>0$ and with constant step size 
$\eta$ small enough, the product $\theta_2^{(t)}\theta_1^{(t)}$ converges to the true signal $\lambda$ with linear rates: $\le|\theta_2^{(t)}\theta_1^{(t)} - \lambda\ri| \leq C q^t$, for some $C>0$.
\end{thm}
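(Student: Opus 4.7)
The plan is to reduce the convergence analysis to two scalar quantities extracted from the iterates: the error $e^{(t)} := \lambda - \theta_2^{(t)}\theta_1^{(t)}$ and the ``conservation defect'' $K^{(t)} := \theta_2^{(t)} - (\theta_1^{(t)})^2/(2d)$, the latter being the discrete analogue of the quantity that vanishes along the continuous flow initialized as in \eqref{init_K=0}. A direct substitution of the updates \eqref{EulerFA1}--\eqref{EulerFA2} into the definitions yields
\begin{align*}
K^{(t+1)} &= K^{(t)} - \tfrac{\eta^2 d}{2}(e^{(t)})^2, \\
e^{(t+1)} &= e^{(t)}\Bigl[1 - \eta\bigl((\theta_1^{(t)})^2 + d\,\theta_2^{(t)}\bigr)\Bigr] - \eta^2 d\,\theta_1^{(t)}(e^{(t)})^2.
\end{align*}
Since $K^{(0)} = 0$ under zero initialization, the first identity gives $K^{(t)} \leq 0$ for every $t$: the discrete trajectory lies on or below the parabola $\theta_2 = \theta_1^2/(2d)$ that organises the continuous analysis of Theorem \ref{FAconv_scalar}. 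The second identity exposes the effective contraction factor $1 - \eta((\theta_1^{(t)})^2 + d\,\theta_2^{(t)})$, which vanishes at the initialization and approaches its equilibrium value as $(\theta_1^{(t)},\theta_2^{(t)})$ approaches $(\theta_1^\ast,\theta_2^\ast) = ((2d\lambda)^{1/3},(\lambda^2/(2d))^{1/3})$.

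I would then establish by induction that, for $\eta$ small enough in terms of $\lambda$ and $d$, the following invariants hold for every $t$: (i) $\theta_1^{(t)}, \theta_2^{(t)} \geq 0$; (ii) $0 \leq \theta_2^{(t)}\theta_1^{(t)} \leq \lambda$, i.e.\ $e^{(t)} \in [0,\lambda]$; and (iii) $\theta_1^{(t)}$ is uniformly bounded. The monotone growth of $\theta_1^{(t)}$ and $\theta_2^{(t)}$ follows directly from $e^{(t)} \geq 0$, while the non-trivial step is preserving $e^{(t+1)} \geq 0$, which requires that a single step cannot overshoot the equilibrium product $\lambda$; this imposes a quantitative upper bound on $\eta$ in terms of $d$ and $\lambda$. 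With these invariants secured, the error recursion gives that $e^{(t)}$ is monotonically non-increasing, and once $\theta_1^{(t)}$ reaches a constant fraction of $\theta_1^\ast$, the contraction factor is bounded above by some $q < 1$ of the form $q = 1 - \Theta(\eta(d\lambda)^{2/3})$, yielding $|e^{(t)}| \leq C q^t$.

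The main obstacle is the passage through the warm-up phase: at $t=0$ both iterates vanish, so the effective contraction factor equals $1$ and no geometric decay can be read off from the recursion in the first few steps. I would handle this by tracking the growth of $\theta_1^{(t)}$ directly: since $\theta_1^{(t+1)} = \theta_1^{(t)} + \eta d\,e^{(t)}$ with $e^{(t)} \geq \lambda/2$ as long as $\theta_2^{(t)}\theta_1^{(t)} \leq \lambda/2$, the iterate $\theta_1^{(t)}$ reaches a constant fraction of $\theta_1^\ast$ after $O(1/(\eta(d\lambda)^{2/3}))$ steps, and the resulting additive delay is absorbed into the constant $C$. A secondary concern is to keep the $O(\eta^2)$ correction in the error recursion dominated by the linear contraction term; this is again guaranteed by an explicit upper bound of the form $\eta \lesssim 1/(d\lambda)^{2/3}$, which also ensures that $K^{(t)}$ cannot accumulate fast enough to violate the invariants.
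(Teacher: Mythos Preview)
Your two tracked quantities are exactly the paper's, up to relabelling: the paper writes $y_t = \tfrac{1}{2d}x_t^2 - \tfrac{1}{2d}S_t$ with $S_t=\sum_{i\leq t}(x_i-x_{i-1})^2$, so your $K^{(t)}=-S_t/(2d)$, and your $e^{(t)}\geq 0$ is the paper's $P(x_t,S_t)\geq 0$. The overall architecture (reduce to a scalar recursion via the conservation defect, then show the pair stays in a bounded region and extract a contraction) is the same.

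There is, however, a genuine gap in how you propose to obtain invariant (iii). Boundedness of $\theta_1^{(t)}$ cannot be carried by direct induction: if you assume $\theta_1^{(t)}\leq B$ you only get $\theta_1^{(t+1)}\leq B+\eta d\lambda$, which does not close. Nor does it follow from $e^{(t)}\in[0,\lambda]$ together with $K^{(t)}\leq 0$, because the latter is only an \emph{upper} bound on $\theta_2^{(t)}$ and gives no obstruction to $\theta_1^{(t)}$ growing without bound while $\theta_2^{(t)}$ stays small. Your final remark that ``$K^{(t)}$ cannot accumulate fast enough'' is the right instinct but is not a mechanism. What is actually needed is a quantitative \emph{lower} bound on $K^{(t)}$, e.g.\ from $e^{(s)}\leq\lambda$ one gets
\[
-K^{(t)}=\tfrac{\eta^2 d}{2}\sum_{s<t}(e^{(s)})^2\leq \tfrac{\eta\lambda}{2}\cdot \eta d\sum_{s<t}e^{(s)}=\tfrac{\eta\lambda}{2}\theta_1^{(t)},
\]
and then $e^{(t)}\geq 0$ forces $(\theta_1^{(t)})^3-\eta d\lambda(\theta_1^{(t)})^2\leq 2d\lambda$, which gives a uniform bound. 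This is precisely the content of the paper's invariant $S_t\leq x_t$ and the compact region $\mathcal{R}=\{S\leq x,\ P(x,S)\geq 0\}$; the induction should be on $e^{(t)}\geq 0$ \emph{and} the lower bound on $K^{(t)}$, with (iii) a consequence rather than an inductive hypothesis.

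One smaller remark on the rate: your direct error recursion makes the contraction factor transparent but forces you to handle the warm-up phase separately. The paper instead tracks $\ell_{S_t}-x_t$, where $\ell_{S_t}$ is the positive root of $P(\cdot,S_t)$; because the factorisation produces a term $2d\lambda/\ell_{S_t}$ that is bounded below uniformly in $t$, the contraction is active from step zero and no burn-in bookkeeping is needed.
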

For the sake of clarity, we deferred the precise prescription for the magnitude of $\eta$ and the linear rate $q$ to Appendix \ref{app:discreteFAconv}, where the full proof of the Theorem can be found. 

\subsection{Modified forward discretization}
In addition to the standard discretization strategy, we are proposing here a new discretization technique that is particularly instrumental for the FA setting,  as it will allow a thorough inspection of its convergence guarantees and convergence rates for NNs of any depth. 
Inspired by the fact that the behaviour of the second layer strictly depend on the choices that we impose on the first layer in the continuous dynamics (Theorem \ref{FAconv_scalar} and initialization \eqref{init_K=0}), we slightly modify the update rule for second weight matrix $\mW_2$ in the following way: \begin{align}
   & \mW_1^{(t+1)} = \mW_1^{(t)} + \eta \mM \le({\bm \Sigma}_{xy} - \mW_2^{(t)}\mW_1^{(t)} \ri)  \\
    & \mW_2^{(t+1)} = \mW_2^{(t+1)} +\eta \le({\bm \Sigma}_{xy} - \mW_2^{(t)}\mW_1^{(t)}  \ri) {\color{blue} \le(\mW_1^{\le(t + \frac12\ri)}\ri)^\top }
\end{align}
with $\mW_1^{\le(t+\frac{1}{2}\ri)} = \frac{1}{2}\le( \mW_1^{(t+1)} + \mW_1^{(t)}\ri)$.
The above discretization technique can be considered as a hybrid between the Euler forward method and the trapezoidal rule  
\citep{Iserles96}, although we are still evaluating the error ${\bm \Sigma}_{xy} - \mW_2^{(t)}\mW_1^{(t)} $ at the present time $t$.
The underlying idea is that we want to already use the information encoded in the new $(t+1)$-iterate of $\mW_1$ to update the second layer by averaging $\mW_1^{(t)}$ and $\mW_1^{(t+1)}$.

Performing again the SVD change of variables, and setting $\mW_1^{(0)}$, $\mW_2^{(0)}$ as diagonal, we obtain a scalar system of the form ($\forall \ i=1,\ldots, k$)
\begin{align}
    &\le(\theta_1^i\ri)^{(t+1)} = \le(\theta_1^i\ri)^{(t)} + \eta  d^i \le(\lambda^i -  \le(\theta_{2}^i\ri)^{(t)} \le(\theta_{1}^i\ri)^{(t)} \ri) \label{mFA1}\\
    &\le( \theta_2^i\ri)^{(t+1)} = \le(\theta_2^i\ri)^{(t)} + \eta \le(\lambda^i - \le(\theta_{2}^i\ri)^{(t)} \le(\theta_{1}^i\ri)^{(t)}  \ri)  \le(\theta_{1}^i\ri)^{\le(t + \frac12\ri)}. \label{mFA2}
\end{align}
A convergence result follows (suppressing the $i$-dependence).
\begin{thm}
\label{conv_discreteFA}
 Consider the system \eqref{mFA1}--\eqref{mFA2}.
With zero initialization $\theta_1^{(0)} = \theta_2^{(0)} = { 0}$, $\forall \ \lambda, d>0$ and with constant step size 
$\eta <  \frac{2}{3(2d\lambda)^{\frac{2}{3}}}$, 
the product $\theta_2^{(t)}\theta_1^{(t)}$ linearly converges to the true signal $\lambda$: $|  \theta_2^{(t)}\theta_1^{(t)} - \lambda | \leq 3\lambda (1-\frac{3\eta}{2}(2d\lambda)^{\frac{2}{3}} )^{t}$.
\end{thm}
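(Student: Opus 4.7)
The plan is to exploit the observation that the modified discretization is tailored precisely to preserve, at the discrete level, the continuous conservation law $\theta_2 = \theta_1^2/(2d)$ that characterizes the continuous orbit under $K=0$ initialization (cf.\ Theorem~\ref{FAconv_scalar} and the initialization~\eqref{init_K=0}, of which the zero start is the special case $\theta_0 = 0$). The central calculation is
\[
\theta_2^{(t+1)} - \theta_2^{(t)} \;=\; \eta(\lambda - \theta_2^{(t)}\theta_1^{(t)})\cdot \frac{\theta_1^{(t+1)}+\theta_1^{(t)}}{2} \;=\; \frac{\theta_1^{(t+1)} - \theta_1^{(t)}}{d}\cdot \frac{\theta_1^{(t+1)}+\theta_1^{(t)}}{2} \;=\; \frac{(\theta_1^{(t+1)})^2 - (\theta_1^{(t)})^2}{2d},
\]
where the middle equality uses the first update \eqref{mFA1} to rewrite $\eta(\lambda - \theta_2^{(t)}\theta_1^{(t)})$. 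Since $\theta_1^{(0)} = \theta_2^{(0)} = 0$, induction yields the discrete invariant $\theta_2^{(t)} = (\theta_1^{(t)})^2/(2d)$ for every $t\geq 0$. One may check that this property fails for the naive forward Euler scheme of the previous subsection, and it is exactly what the half-step averaging $\theta_1^{(t+1/2)}$ was engineered to achieve.

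With this invariant, the two-variable system collapses to the single scalar recursion $u^{(t+1)} = u^{(t)} + \tfrac{\eta}{2}(\alpha^3 - (u^{(t)})^3)$, where $u^{(t)} := \theta_1^{(t)}$ and $\alpha := (2d\lambda)^{1/3}$ is the unique positive fixed point. Factoring the cubic gives the clean multiplicative error identity
\[
\alpha - u^{(t+1)} \;=\; (\alpha - u^{(t)})\Bigl[1 - \tfrac{\eta}{2}\bigl(\alpha^2 + \alpha u^{(t)} + (u^{(t)})^2\bigr)\Bigr],
\]
reducing the remainder of the proof to a one-dimensional monotone contraction analysis.

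Next I would show by induction that $u^{(t)} \in [0,\alpha]$ for every $t$. The step-size hypothesis $\eta < 2/(3(2d\lambda)^{2/3})$ guarantees the bracket stays in $(0,1]$ whenever $u^{(t)} \in [0,\alpha]$ (the worst case $u^{(t)} = \alpha$ gives bracket $= 1 - \tfrac{3\eta\alpha^2}{2} > 0$), so $\alpha - u^{(t+1)} \geq 0$; monotonicity $u^{(t+1)} \geq u^{(t)}$ is immediate from the update formula since $u^{(t)} \leq \alpha$. Because $u^{(t)} \geq 0$, the bracket is uniformly bounded at every step, yielding geometric decay $\alpha - u^{(t)} \leq \alpha\, q^{t}$ for a suitable contraction factor $q \in (0,1)$ controlled by $\eta(2d\lambda)^{2/3}$. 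Translating back via
\[
|\lambda - \theta_2^{(t)}\theta_1^{(t)}| \;=\; \frac{(\alpha - u^{(t)})(\alpha^2 + \alpha u^{(t)} + (u^{(t)})^2)}{2d} \;\leq\; \frac{3\alpha^2}{2d}(\alpha - u^{(t)}) \;=\; \frac{3\lambda}{\alpha}(\alpha - u^{(t)})
\]
delivers the claimed linear bound with the stated prefactor $3\lambda$.

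The only genuinely nontrivial step is the first one: the exact preservation of the discrete conservation law is not a free lunch, and it is the specific choice of $\theta_1^{(t+1/2)} = \tfrac{1}{2}(\theta_1^{(t)} + \theta_1^{(t+1)})$ that makes the telescoping $(\theta_1^{(t+1)})^2 - (\theta_1^{(t)})^2 = (\theta_1^{(t+1)} - \theta_1^{(t)})(\theta_1^{(t+1)} + \theta_1^{(t)})$ close exactly. After that, everything is a routine monotone fixed-point iteration for a scalar cubic map on a bounded interval.
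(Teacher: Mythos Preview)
Your proposal is correct and follows essentially the same approach as the paper: establish the exact discrete invariant $\theta_2^{(t)} = (\theta_1^{(t)})^2/(2d)$ via the telescoping identity enabled by the half-step average (this is the paper's Lemma in the appendix), reduce to the scalar cubic recursion $u^{(t+1)} = u^{(t)} + \tfrac{\eta}{2}(\alpha^3 - (u^{(t)})^3)$, show by induction that the iterates stay in $[0,\alpha]$ and increase under the step-size condition, and obtain the linear rate by factoring $\alpha^3 - u^3$ and bounding the quadratic factor on $[0,\alpha]$. Your observation that the bracket is bounded above by $1 - \tfrac{\eta}{2}\alpha^2$ (from $u^{(t)} \geq 0$) is the correct uniform contraction constant; the paper's appendix restates the theorem with precisely this factor, so your vagueness about the exact $q$ is harmless.
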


The discretization scheme easily generalizes to the case of $L$ layers for $L\geq 3$ (we again suppress the $i$-index for readability): 
\begin{gather}
\theta_{\ell}^{(t+1)} = \theta_{\ell}^{(t)} + \eta d_{\ell}  \theta_{[1:\ell-1]}^{\le(t + \frac12\ri)} \le(\lambda - \theta_{L}^{(t)}\ldots \theta_{1}^{(t)}\ri), \qquad \ell = 1,\ldots, L, \label{eq38}
\end{gather}
with $d_L =1$. Also in this case, the dynamics closely mimic the continuous counterpart, reducing the system to a single recurrence equation for $\{\theta_1^{(t)}\}$. By selecting a step size $\eta$ small enough, the sequence of products $\{\theta_L^{(t)} \ldots \theta_2^{(t)}\theta_1^{(t)}\}$ will converge to the signal $\lambda$ with exponential rate. 
\begin{thm}\label{conv_discreteFA_multiL}
Consider the system \eqref{eq38} with $\lambda, d_\ell>0$ and zero initialization $\theta_\ell^{(0)} = 0$, $\forall \ \ell=1,\ldots, L$. If the constant step size satisfies $\eta < (d_1\gamma ( \mathfrak{K} \lambda^{\gamma-1})^{{1}/{\gamma}})^{-1}$, where $\gamma = 2^L-1$ and $\mathfrak{K}$ is a suitable positive constant depending on the FA constants $d_1,\ldots, d_{L-1}$, then   the product $\prod_{\ell=1}^L\theta_\ell^{(t)}$ linearly converges to the true signal $\lambda$: $   |\theta_L^{(t)}\ldots \theta_1^{(t)} - \lambda |   \leq C_\lambda ( 1 - \eta d_1 \gamma \le(\mathfrak{K} \lambda^{\gamma-1} \ri)^{\frac{1}{\gamma}} )^t$, 
for some constant $C_\lambda \in \R_{>0}$ only dependent on $\lambda$.
\end{thm}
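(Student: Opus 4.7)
The modified discretization in \eqref{eq38} is engineered so that the continuous conservation laws from Theorem \ref{FAconv_Llayers} lift exactly to the discrete setting. The first step is to prove by induction on $t$ that zero initialization is preserved in the following sense: there exist constants $C_1, \dots, C_L > 0$ (the same ones appearing in \eqref{init_Llayers}, depending only on $d_1,\dots,d_{L-1}$) such that
\begin{equation*}
    \theta_\ell^{(t)} \;=\; C_\ell\, \bigl(\theta_1^{(t)}\bigr)^{2^{\ell-1}}, \qquad \ell=1,\dots,L, \quad t \geq 0.
\end{equation*}
The 2-layer warm-up in Theorem \ref{conv_discreteFA} is precisely the base case of this principle: there $\theta_2^{(t)} = (\theta_1^{(t)})^2/(2d)$ is preserved because the trapezoidal average $\theta_1^{(t+1/2)} = \tfrac12(\theta_1^{(t+1)}+\theta_1^{(t)})$ makes $(\theta_1^{(t+1)})^2 - (\theta_1^{(t)})^2 = (\theta_1^{(t+1)}+\theta_1^{(t)})(\theta_1^{(t+1)}-\theta_1^{(t)})$ match the increment of $2d\,\theta_2^{(t+1)}$. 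The inductive step for general $\ell$ repeats the same telescoping identity, using the half-step $\theta_{[1:\ell-1]}^{(t+1/2)}$ to absorb the cross terms that would otherwise spoil the ratio $\theta_\ell^{(t)}/(\theta_1^{(t)})^{2^{\ell-1}}$.

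Once this conservation law is established, the product collapses to
\begin{equation*}
    \prod_{\ell=1}^L \theta_\ell^{(t)} \;=\; \mathfrak{K}\, \bigl(\theta_1^{(t)}\bigr)^{\gamma}, \qquad \mathfrak{K} := \prod_{\ell=1}^L C_\ell, \quad \gamma = 2^L - 1,
\end{equation*}
and the $L$-dimensional recurrence \eqref{eq38} reduces to the scalar update
\begin{equation*}
    \theta_1^{(t+1)} \;=\; \theta_1^{(t)} + \eta\, d_1\, \bigl(\lambda - \mathfrak{K}\bigl(\theta_1^{(t)}\bigr)^{\gamma}\bigr).
\end{equation*}
The unique positive fixed point is $\theta_1^\star = (\lambda/\mathfrak{K})^{1/\gamma}$, and I would analyze this scalar iteration in the standard way: starting from $\theta_1^{(0)}=0$, an induction shows $\theta_1^{(t)} \in [0,\theta_1^\star]$ and the sequence is monotone increasing, since the step size satisfies $\eta\, d_1\, \gamma\, \mathfrak{K}\, (\theta_1^\star)^{\gamma-1} = \eta\, d_1\, \gamma\, (\mathfrak{K}\lambda^{\gamma-1})^{1/\gamma} < 1$ by assumption. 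A mean value estimate on $x \mapsto \mathfrak{K} x^\gamma$ over $[\theta_1^{(t)},\theta_1^\star]$ then yields
\begin{equation*}
    \theta_1^\star - \theta_1^{(t+1)} \;\leq\; \bigl(1 - \eta\, d_1\, \gamma\, (\mathfrak{K}\lambda^{\gamma-1})^{1/\gamma}\bigr)\, \bigl(\theta_1^\star - \theta_1^{(t)}\bigr),
\end{equation*}
so $|\theta_1^{(t)}-\theta_1^\star|$ contracts linearly with the advertised ratio.

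Finally, I would convert the linear convergence of $\theta_1^{(t)}$ to that of the product via
\begin{equation*}
    \bigl|\theta_L^{(t)}\ldots\theta_1^{(t)} - \lambda\bigr| \;=\; \mathfrak{K}\bigl|(\theta_1^{(t)})^\gamma - (\theta_1^\star)^\gamma\bigr| \;\leq\; \mathfrak{K}\, \gamma\, (\theta_1^\star)^{\gamma-1}\, |\theta_1^\star - \theta_1^{(t)}|,
\end{equation*}
using monotonicity of $x \mapsto x^\gamma$ on $[0,\theta_1^\star]$; this absorbs into the prefactor $C_\lambda$. The main technical obstacle is the first step: verifying that the half-step $\theta_{[1:\ell-1]}^{(t+1/2)}$ is constructed so that all cross terms in the squared/higher-power expansion telescope. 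In the 2-layer case this is a one-line algebraic identity, but propagating it through $L$ nested averages—while keeping track of the cascading constants $C_\ell$—is the delicate combinatorial/algebraic heart of the argument. Once that bookkeeping is done, the remaining scalar convergence analysis is essentially the same contraction argument used in Theorem \ref{conv_discreteFA}.
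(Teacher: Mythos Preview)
Your proposal follows essentially the same route as the paper's proof: establish the discrete conservation law $\theta_\ell^{(t)} = C_\ell\,(\theta_1^{(t)})^{2^{\ell-1}}$ by substituting $\lambda - \theta_L^{(t)}\cdots\theta_1^{(t)} = (\theta_1^{(t+1)}-\theta_1^{(t)})/(\eta d_1)$ into each layer's update and telescoping, reduce to the scalar recurrence $\theta_1^{(t+1)} = \theta_1^{(t)} + \eta d_1(\lambda - \mathfrak{K}(\theta_1^{(t)})^\gamma)$, show monotone boundedness in $[0,(\lambda/\mathfrak{K})^{1/\gamma}]$ under the step-size condition, derive the contraction via the factorization of $x^\gamma - r^\gamma$, and finally transfer the rate to the product. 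The paper carries out the layerwise induction rather mechanically (and the ``delicate combinatorial heart'' you anticipate is in fact a short computation once the half-step product is read as $\prod_{j<\ell}\theta_j^{(t+1/2)}$, since the factor $\tfrac12(\theta_j^{(t+1)}+\theta_j^{(t)})$ together with the already-established $\theta_j^{(s)} = C_j(\theta_1^{(s)})^{2^{j-1}}$ produces exactly the difference-of-powers telescoping), but otherwise your outline and the paper's argument coincide step for step.
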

The proofs of Theorems \ref{conv_discreteFA} and \ref{conv_discreteFA_multiL} can be found in Appendix \ref{app:discreteFAconv}. 

\section{Experiments}\label{sec7}
\begin{figure}
    \centering
    \begin{subfigure}[b]{0.475\textwidth}
    \centering 
    \includegraphics[width=\textwidth]{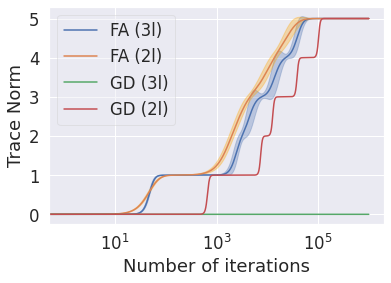} 
    \end{subfigure}
    \hfill
    \begin{subfigure}[b]{0.475\textwidth}
    \centering 
    \includegraphics[width=\textwidth]{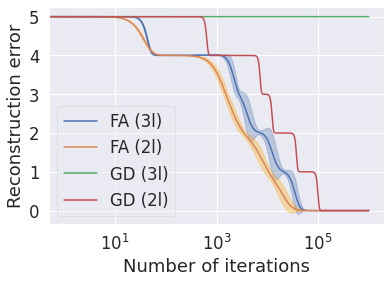} 
    \end{subfigure}
    \caption{Comparison of trace norms and reconstruction errors for linear autoencoders. For the FA experiments, we are plotting the average (with error bars) of 15 experiments.} 
    \label{fig:autoencoder}
\end{figure}

In this last section, we discuss one ML application of our theoretical results. The set-up is borrowed from \citet{Gidel19} (Section 4.2), where a linear autoencoder is considered.

For an autoencoder, the (true) output is equal to its input $\vy = \vx \in \R^d$ and we want to compare the reconstruction properties of $\mW_2^{(t)}\mW_1^{(t)}$ ($2$ layer NN) and $\mW_3^{(t)}\mW_2^{(t)}\mW_1^{(t)}$ ($3$ layer NN) computed via FA versus GD. 
In this experiment, we have $ d =o = 20$ and $k=5$. We generate $n=1000$ synthetic data $\{\vx_i\}$ in the following way: each data point is given as $\vx_i = \mA \vz_i + {\bm \epsilon}_i$, where $\mA \in \R^{d\times k}$ is a fixed matrix with entries sampled as $\mA_{kl} \sim \mathcal{U}([0,1])$, $\vz_i \sim \mathcal{N}\le({\bf 0}, {\bm \Lambda} := \operatorname{diag} \le\{4,2,1,\frac{1}{2},\frac{1}{4}\ri\} \ri)$ and the noise ${\bm \epsilon}_i \sim 10^{-3}\mathcal{N}({\bf 0}, \mI_d)$. We initialize the weight matrices to a close-to-zero initialization $\mW_1^{(0)}, \mW_2^{(0)}, \mW_3^{(0)}  \sim 10^{-5} \mathcal{U}([0,1])$: this will guarantees the FA dynamics to avoid the anti-regularization pattern and to converge to the signal  (Theorems \ref{conv_discreteEulerFA}, \ref{conv_discreteFA} and \ref{conv_discreteFA_multiL}).\\
We generate the FA matrices $\mM$ (2-layers) and $\mM_1, \mM_2$ (3-layers) with entries $\mM_{ij} \sim \mathcal{U}([0,1])$. 
Keeping the same initial conditions $\mW^{(0)}_\ell$ and same fixed matrix $\mA$, we repeat the FA training 15 times, sampling a different set of FA matrices each time: 
in \figurename \ \ref{fig:autoencoder}, we report the average (with error bars) of the trace norm $\|{\bm \Theta}^{(t)}\|_*$ with ${\bm \Theta}^{(t)} :=\mW_2^{(t)}\mW_1^{(t)}$ or ${\bm \Theta}^{(t)}:= \mW_3^{(t)}\mW_2^{(t)}\mW_1^{(t)}$ respectively, as well as thee average of the reconstruction errors $\le\|{\bm \Theta}^{(t)} - \mA{\bm \Lambda} \mA^\top \ri\|_2 $, as a function of $t$ the number of iterations.
 
We can see that the FA learning dynamics is distinctly faster than GD, which is learning at a much slower pace, as we initialized the dynamics close to one of GD's local maxima. In particular, the $3$-layer NN trained with GD hasn't yet started to learn at the end of the selected training period of the experiment. 
Notably, the FA training for the $3$-layer NN hints at the presence of an implicit regularization behavior, similar to what has been observed and proved in \citet{Gidel19} for a $2$-layer NN. 
Additional technical details can be found in Appendix \ref{app:experiments}.

\section{Final discussions and insights}\label{sec:conclusions}


In the present paper we rigorously analyze a biologically plausible optimization algorithm, we 
describe its potential as well as its limitations in possible ML applications. 
We provide convergence guarantees in both the continuous and discrete-time settings; additionally, we thoroughly discuss implicit regularization phenomena that may arise with certain initialization schemes and that may deeply affect the optimization effectiveness. The occurrence of drastically opposite regularization by only modifying an algorithm's initialization showcases the critical importance of the latter for ML applications. 

Although the analysis is conducted only for linear models, the results give already numerous insights on the behaviour of nonlinear architectures.
The analysis of Theorem \ref{FAconv_Llayers} shows that, in some regimes, the deeper layers behave as "a power" of the first layer: this result suggests that each layer is converging at a different speed. It thus defines a hierarchy (in terms of speed of convergence) between the layers of the network.
Furthermore, one can notice that in the linear case FA "removes" some stationary points: unlike GD that potentially has many stationary points that are saddle points,  all the stationary points of FA are global minima. Therefore, we believe that FA will also "remove" some undesirable stationary points in the non-linear case.  Several challenges remain, but the above pieces of intuition will be critical for the analysis in the nonlinear setting. 

Beyond this, it is enticing to conduct a comparison between FA and GD and illustrate the advantages of one algorithm over the other (and in which setting). 
All of these directions will be the central objects of study in future works.


 \subsubsection*{Acknowledgments}
 M.G. gratefully acknowledges the organizers of the Summer School and Workshop "Machine Learning and Statistical Physics" which took place in Les Houches (France) in Summer 2020 and Sebastian Goldt in particular for starting a research discussion on this topic. M.G. would also like to thank Milivoje Luki{\'c} for fruitful discussions and insights. The authors thank Andjela Mladenovic and Avishek Joey Bose for helpful feedback on an earlier draft of the paper.


\bibliographystyle{iclr2022_conference}
\bibliography{biblia}

\begin{thebibliography}{21}
\providecommand{\natexlab}[1]{#1}
\providecommand{\url}[1]{\texttt{#1}}
\expandafter\ifx\csname urlstyle\endcsname\relax
  \providecommand{\doi}[1]{doi: #1}\else
  \providecommand{\doi}{doi: \begingroup \urlstyle{rm}\Url}\fi

\bibitem[Arora et~al.(2018)Arora, Cohen, Golowich, and
  Hu]{arora2018convergence}
Sanjeev Arora, Nadav Cohen, Noah Golowich, and Wei Hu.
\newblock A convergence analysis of gradient descent for deep linear neural
  networks.
\newblock In \emph{International Conference on Learning Representations}, 2018.

\bibitem[Bartunov et~al.(2018)Bartunov, Santoro, Richards, Marris, Hinton, and
  Lillicrap]{Bartunov18}
Sergey Bartunov, Adam Santoro, Blake Richards, Luke Marris, Geoffrey~E. Hinton,
  and Timothy Lillicrap.
\newblock Assessing the scalability of biologically-motivated deep learning
  algorithms and architectures.
\newblock In \emph{Advances in Neural Information Processing Systems},
  volume~31, 2018.

\bibitem[Chu et~al.(2020)Chu, Mykitiuk, Szewczyk, Wiktor, and Wojna]{Chu20}
Tien Chu, Kamil Mykitiuk, Miron Szewczyk, Adam Wiktor, and Zbigniew Wojna.
\newblock Training {DNN}s in {O}(1) memory with {MEM-DFA} using random
  matrices.
\newblock \emph{arXiv:2012.11745}, 2020.

\bibitem[Crick(1989)]{Crick89}
Francis Crick.
\newblock The recent excitement about neural networks.
\newblock \emph{Nature}, 337, 1989.

\bibitem[Frenkel et~al.(2021)Frenkel, Lefebvre, and Bol]{Frenkel21}
Charlotte Frenkel, Martin Lefebvre, and David Bol.
\newblock Learning without feedback: Fixed random learning signals allow for
  feedforward training of deep neural networks.
\newblock \emph{Frontiers in Neuroscience}, 15, 2021.

\bibitem[Gidel et~al.(2019)Gidel, Bach, and Lacoste-Julien]{Gidel19}
Gauthier Gidel, Francis Bach, and Simon Lacoste-Julien.
\newblock Implicit regularization of discrete gradient dynamics in linear
  neural networks.
\newblock \emph{arXiv:1904.13262}, 2019.

\bibitem[Gilmer et~al.(2017)Gilmer, Raffel, Schoenholz, Raghu, and
  Sohl-Dickstein]{Gilmer17}
J~Gilmer, C.~Raffel, S.~S. Schoenholz, M.~Raghu, and J.~Sohl-Dickstein.
\newblock Explaining the learning dynamics of direct feedback alignment.
\newblock \emph{ICLR}, 2017.

\bibitem[Gissin et~al.(2019)Gissin, Shalev-Shwartz, and Daniely]{Gissin19}
Daniel Gissin, Shai Shalev-Shwartz, and Amit Daniely.
\newblock The implicit bias of depth: How incremental learning drives
  generalization.
\newblock \emph{arXiv:1909.12051}, 2019.

\bibitem[Grossberg(1987)]{Grossberg87}
Stephen Grossberg.
\newblock Competitive learning: from interactive activation to adaptive
  resonance.
\newblock \emph{Cogn. Sci.}, 11, 1987.

\bibitem[Iserles(1996)]{Iserles96}
Arieh Iserles.
\newblock \emph{A First Course in the Numerical Analysis of Differential
  Equations}.
\newblock Cambridge University Press, 1996.

\bibitem[Launay et~al.(2019)Launay, Poli, and Krzakala]{Launay19}
Julien Launay, Iacopo Poli, and Florent Krzakala.
\newblock Principled training of neural networks with direct feedback
  alignment.
\newblock \emph{arXiv:1906.04554}, 2019.

\bibitem[Launay et~al.(2020)Launay, Poli, Boniface, and Krzakala]{Launay20}
Julien Launay, Iacopo Poli, Fran\c{c}ois Boniface, and Florent Krzakala.
\newblock Direct feedback alignment scales to modern deep learning tasks and
  architectures.
\newblock In \emph{Advances in Neural Information Processing Systems},
  volume~33, 2020.

\bibitem[Lee et~al.(2020)Lee, Zhang, Zhang, Liu, and Li]{Lee20}
J.~Lee, R.~Zhang, W.~Zhang, Y.~Liu, and P.~Li.
\newblock Spike-{T}rain level {D}irect {F}eedback {A}lignment: Sidestepping
  backpropagation for on-chip training of spiking neural nets.
\newblock \emph{Front Neurosci.}, 2020.

\bibitem[Lillicrap et~al.(2016)Lillicrap, Cownden, Tweed, and
  Akerman]{Lillicrap16}
Timothy~P. Lillicrap, Daniel Cownden, Douglas~B. Tweed, and Colin~J. Akerman.
\newblock Random synaptic feedback weights support error backpropagation for
  deep learning.
\newblock \emph{Nature Communications}, 7, 2016.

\bibitem[Moskovitz et~al.(2018)Moskovitz, Litwin-Kumar, and
  Abbott]{Moskovitz18}
Theodore~H. Moskovitz, Ashok Litwin-Kumar, and L.F. Abbott.
\newblock Feedback alignment in deep convolutional networks.
\newblock \emph{arXiv:1812.06488}, 2018.

\bibitem[N{\o}kland(2016)]{Nokland16}
A.~N{\o}kland.
\newblock Direct feedback alignment provides learning in deep neural networks.
\newblock \emph{arXiv:1609.01596}, 2016.

\bibitem[Paige \& Saunders(1981)Paige and Saunders]{Paige81}
C.~C. Paige and M.~A. Saunders.
\newblock Towards a generalized singular value decomposition.
\newblock \emph{SIAM Journal on Numerical Analysis}, 18\penalty0 (3):\penalty0
  398--405, 1981.

\bibitem[Refinetti et~al.(2020)Refinetti, d'Ascoli, Ohana, and
  Goldt]{Refinetti20}
Maria Refinetti, St\'ephane d'Ascoli, Ruben Ohana, and Sebastian Goldt.
\newblock The dynamics of learning with feedback alignment.
\newblock \emph{arXiv:2011.12428}, 2020.

\bibitem[Rumelhart et~al.(1986)Rumelhart, Hinton, and Williams]{Rumelhart86}
David~E. Rumelhart, Geoffrey~E. Hinton, and Ronald~J. Williams.
\newblock Learning representations by back-propagating errors.
\newblock \emph{Nature}, 323, 1986.

\bibitem[Saxe et~al.(2018)Saxe, McClelland, and Ganguli]{Saxe18}
Andrew~M. Saxe, James~L. McClelland, and Surya Ganguli.
\newblock A mathematical theory of semantic development in deep neural
  networks.
\newblock \emph{arXiv:1810.10531}, 2018.

\bibitem[van Loan(1976)]{vanLoan76}
Charles~F. van Loan.
\newblock Generalizing the singular value decomposition.
\newblock \emph{SIAM Journal on Numerical Analysis}, 13\penalty0 (1):\penalty0
  76--83, 1976.

\end{thebibliography}



\appendix

\section{Convergence of FA for $2$-layer NNs} \label{app:FAconv_2layer}

\subsection{Proof of Proposition~\ref{prop:FA_dyn}}

We have that $\mathcal{L} := \tfrac{1}{2}\E[\|\vy - \hat \vy\|^2]$ and $\hat \vy = \mathbf{W}_2 \mathbf{W}_1 \vx$, thus
\begin{align}
    \nabla_{W_1} \gL 
    &= \E[ \tfrac{1}{2}\nabla_{W_1}\|\vy - \hat \vy\|^2] \\
    &= \E[ \mathbf{W}_2^\top(\vy - \mathbf{W}_2 \mathbf{W}_1 \vx) \vx^\top   ] \\
    &= \mathbf{W}_2^\top({\bm \Sigma}_{xy} - \mathbf{W}_2 \mathbf{W}_1{\bm \Sigma}_{xx})\,, \label{app_eq:gradW_1}
\end{align}
where the last line comes from the linearity of the expectation. 
Similarly we have, 
\begin{align}
    \nabla_{W_2} \gL 
    &= \E[ \tfrac{1}{2}\nabla_{W_2}\|\vy - \hat \vy\|^2] \\
    &= \E[ (\vy- \mathbf{W}_2 \mathbf{W}_1 \vx) \vx^\top \mathbf{W}_1^\top  ] \\
    &= ({\bm \Sigma}_{xy} - \mathbf{W}_2 \mathbf{W}_1{\bm \Sigma}_{xx})  \mathbf{W}_1^\top \,.
\end{align}
Now we conclude by noticing that by definition of the FA dynamics~\eqref{eq:def_FA}, the matrix $\mathbf{W}_2^\top$ is replaced by a random matrix ${\bm M}$.

\subsection{Proof of Theorem \ref{FAconv_scalar}} \label{app:FAconvproof_scalar}

Consider the following system of ODEs for the functions $\theta_1,\theta_2 \in C^1(\R)$:
\begin{align}
&\dot{ \theta_1} = d\le(\lambda-\theta_2\theta_1\ri) \label{FAflow1} \\
&\dot{ \theta_2} = \le( \lambda- \theta_2\theta_1\ri)\theta_1  \label{FAflow2}
\end{align}
for some $d,\lambda \in \R_{>0}$. From equation \eqref{FAflow1}, we substitute $\frac{\dot{\theta_1}}{d} = (\lambda - \theta_2\theta_1)$ in the second equation \eqref{FAflow2} to obtain
\begin{gather}
\dot{\theta_2} = \frac{1}{2d}\dot{\theta_1^2}, \qquad \text{i.e.} \quad  \theta_2(t) = \frac{1}{2d} \theta^2_1(t) + K,
\end{gather}
for some integration constant $K\in \R$.  We then substitute such expression for $\theta_2(t)$ back into the equation \eqref{FAflow1} in order to obtain a closed form expression for $\dot \theta_1$:
\begin{gather}
    \dot{ \theta_1} =  d\le(\lambda - \frac{1}{2d}\theta_1^3 - K\theta_1 \ri)
\end{gather}
We now choose the initial conditions for $\theta_1(t)$ and $\theta_2(t)$ in the following way:
\begin{gather}
\theta_1(0) = {\theta_0} \qquad \text{and} \qquad \theta_2(0) = \frac{{\theta_0}^2}{2d} , \label{app_ICK=0} 
\end{gather}
so that the equation for $\theta_1$ becomes
\begin{gather} 
\dot{ \theta_1} = \frac{1}{2}\le(2d\lambda - \theta_1^3 \ri) \label{app_theta1dot}
\end{gather}

We are now ready to prove Theorem \ref{FAconv_scalar}, which we recall here for convenience:
\begin{thm}
For any FA constant $d \in \R_{>0}$, and $\forall \ \theta_0 \in \R$ initial value with initialization scheme \eqref{app_ICK=0}, $\exists \ C>0$ such that the product $\theta_2(t) \theta_1(t)$ of the solution of the system \eqref{FAflow} converges exponentially to the signal $\lambda$ and in a monotonic fashion:
\begin{gather}
    |\theta_2(t) \theta_1(t) - \lambda| < C  e^{-\frac{3}{2} (d\lambda)^{\frac{2}{3}}t} \qquad \text{as } t  \to +\infty. \label{app_expo_conv}
    \end{gather}
\end{thm}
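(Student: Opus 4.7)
The plan is to exploit the reduction already carried out above. With the initialization $\theta_2(0) = \theta_1(0)^2/(2d)$, the constant of motion $K = \theta_2 - \theta_1^2/(2d)$ vanishes identically, so the coupled system \eqref{FAflow} collapses to the single autonomous scalar ODE \eqref{app_theta1dot}, namely $\dot\theta_1 = \frac{1}{2}(2d\lambda - \theta_1^3)$, together with the algebraic identity $\theta_2(t) = \theta_1(t)^2/(2d)$. In particular the product satisfies $\theta_2(t)\theta_1(t) = \theta_1(t)^3/(2d)$, so both the monotonicity and the convergence for the product reduce to the corresponding properties for $\theta_1$.

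The first step is to analyze this reduced scalar ODE. I set $r := (2d\lambda)^{1/3}$, the unique real root of $2d\lambda - x^3$, and factor
\begin{equation*}
2d\lambda - \theta_1^3 \;=\; -(\theta_1 - r)\bigl(\theta_1^2 + r\theta_1 + r^2\bigr).
\end{equation*}
The quadratic factor $\theta_1^2 + r\theta_1 + r^2 = (\theta_1 + r/2)^2 + 3r^2/4$ is bounded below by $3r^2/4 > 0$ on all of $\R$. Standard ODE theory then yields global existence, uniqueness, and monotone convergence of $\theta_1(t)$ to $r$ from any initial $\theta_0$: if $\theta_0 < r$ then $\dot\theta_1 > 0$ and $\theta_1$ increases toward $r$, and symmetrically if $\theta_0 > r$. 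Since the map $\theta_1 \mapsto \theta_1^3/(2d)$ is strictly increasing, the product $\theta_2\theta_1$ is monotone in $t$ and converges to $r^3/(2d) = \lambda$, giving the monotonicity part of the claim.

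For the exponential rate, I set $u(t) := \theta_1(t) - r$ and differentiate to get
\begin{equation*}
\dot u \;=\; -\tfrac{1}{2}\,u\,\bigl(\theta_1^2 + r\theta_1 + r^2\bigr).
\end{equation*}
Because $\theta_1(t) \to r$ monotonically, the quadratic factor is uniformly bounded below on $[0,\infty)$ (by $3r^2/4$ globally, and by any constant arbitrarily close to $3r^2$ once $t$ is past a finite transient). A Grönwall estimate on $\frac{d}{dt}u^2 = 2u\dot u$ then yields $|u(t)| \leq C\,e^{-\alpha t}$ with exponent $\alpha$ of order $r^2 \asymp (d\lambda)^{2/3}$. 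Translating back through
\begin{equation*}
\theta_2(t)\theta_1(t) - \lambda \;=\; \tfrac{1}{2d}\bigl(\theta_1(t)^3 - r^3\bigr) \;=\; \tfrac{1}{2d}\,u(t)\,\bigl(\theta_1(t)^2 + r\theta_1(t) + r^2\bigr),
\end{equation*}
and using that the quadratic factor is uniformly bounded above on the compact orbit of $\theta_1(t)$, delivers the desired bound \eqref{app_expo_conv}.

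The only mildly delicate step is pinning down the sharp constant in the exponent: the trivial lower bound $3r^2/4$ on the quadratic factor only yields the weaker rate $\frac{3}{8}(2d\lambda)^{2/3}$. To recover the constant claimed in the statement, I would either (i) split the evolution into an initial finite-time transient in which $\theta_1$ enters a small neighbourhood of $r$ (absorbed into the prefactor $C$), followed by a phase in which $\theta_1^2 + r\theta_1 + r^2$ is uniformly close to $3r^2$, or (ii) separate variables in \eqref{app_theta1dot} and integrate via partial fractions over the factorization $r^3 - \theta_1^3 = (r-\theta_1)(\theta_1^2 + r\theta_1 + r^2)$, producing an explicit inverse $t = t(\theta_1)$ from which the exponent can be read off. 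Either route is routine once the qualitative monotone convergence is in hand.
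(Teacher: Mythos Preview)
Your proposal is correct and takes essentially the same approach as the paper: reduce to the scalar ODE $\dot\theta_1 = \tfrac{1}{2}(r^3 - \theta_1^3)$ via the invariant $K=0$, establish monotone convergence of $\theta_1$ to $r=(2d\lambda)^{1/3}$ from the sign of the right-hand side, and then lift to the product through $\theta_2\theta_1 = \theta_1^3/(2d)$. For the rate, the paper carries out your option~(ii) directly---separating variables and integrating via the partial-fraction decomposition of $(r^3-\theta_1^3)^{-1}$ to obtain an implicit formula $t=t(\theta_1)$, from which $|\theta_1-r| = e^{-\frac{3r^2}{2}t+O(1)}$ is read off---rather than first passing through a Gr\"onwall estimate.
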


\begin{proof}
By Picard–Lindel\"{o}f Theorem, equation \eqref{app_theta1dot} admits a unique local solution $\theta_1 \in C^1([0,T])$ ($T>0$), for any initial condition $\theta_0\in \R$. The local solution can be then easily extended to a global solution on $[0,+\infty)$.

The cubic polynomial on the right hand side of \eqref{app_theta1dot} has a unique real root $r:= \sqrt[3]{2d\lambda}$: $2d\lambda - \theta_1^3 = \le(r- \theta_1 -\ri) \le(\theta_1^2+  \theta_1 r + r^2\ri)$
where the quadratic $\theta_1^2+  \theta_1 r + r^2$ is irreducible (indeed, its discriminant is $\Delta = -3r^2<0$).

If ${\theta_0} = r$, the solution of the ODE \eqref{app_theta1dot} is the constant function $\theta_1(t) = r$, $\forall \ t\in \R_{\geq 0}$. In this case, it is trivial to see that $\theta_2(t) = \frac{r^2}{2d}$ and $\theta_2(t)\theta_1(t) = \lambda$, $\forall \ t\in \R_{\geq 0}$.

If ${\theta_0} \neq r$, we simply need to analyze a separable ODE whose solution reads
\begin{gather}
- \frac{2}{3r^2}\ln \le| \theta_1(t) - r \ri| + \frac{1}{3r^2} \ln \le( \theta_1^2(t) + r\theta_1(t) + r^2 \ri) +\frac{2}{r^2\sqrt{3}} \arctan\le( \frac{2\theta_1(t) +r}{r\sqrt{3}}\ri) + C_0 = t  \label{implicit_theta1}
\end{gather}
with constant of integration
\begin{gather}
    C_0 =  \frac{2}{3r^2}\ln \le| \theta_0 - r \ri| - \frac{1}{3r^2} \ln \le| \theta_0^2 + r\theta_0 + r^2 \ri| - \frac{2}{r^2\sqrt{3}} \arctan\le( \frac{2\theta_0 +r}{r\sqrt{3}}\ri)  .
\end{gather}
Equation \eqref{implicit_theta1} is transcendental and it cannot be solved for $\theta_1$ explicitly. However, we can still analyze the behaviour of the solution.
 \begin{lemma}\label{prop_T1monotone}
$\theta_1(t)$ is {monotonic}: in particular, $\dot \theta_1 < 0$, if $\theta_0>r$, and $\dot \theta_1 > 0$, if $\theta_0<r$.
\end{lemma}
The above claims easily follows by inspecting the sign of the right hand side of \eqref{app_theta1dot}. 
In particular, Lemma \ref{prop_T1monotone} implies that $\theta_1(t)$ is always bounded, for any initial condition $\theta_0 \in\R$.

As $t\to +\infty$, the left hand side of \eqref{implicit_theta1} may only diverge in the first logarithmic term ($\theta_1(t) \to r$), since the term involving the arctangent is always bounded and the other logarithmic term has argument that is always bounded (Proposition \ref{prop_T1monotone}) and positive (the quadratic is irreducible).

Therefore, as $t \to +\infty$ 
\begin{gather}
    \theta_1(t) \to r \qquad \text{and}\qquad
    \theta_2(t)\theta_1(t) \to \frac{1}{2d}r^3 =\lambda
\end{gather}
for any initial condition $\theta_0 \in \R$. 
See \figurename\ \ref{fig:T1+T2+T2T1}. 


Furthermore, we can derive from \eqref{implicit_theta1} the rate of convergence: 
\begin{gather}
    |\theta_1(t) - r| =\nonumber \\
    \operatorname{exp}\le\{ -\frac{3r^2}{2} t +
    \underbrace{\frac{3r^2C_0}{2} + \frac{1}{2} \ln |\theta_1(t)^2 + r\theta_1(t) + r^2| + \sqrt{3}\sign(r) \arctan\le( \frac{2\theta_1(t) + r}{|r|\sqrt{3}}\ri)}_{=: g(t)} \ri\}
\end{gather}
where $g(t) = \frac{3r^2C_0}{2}  + \frac{1}{2} \ln (3r^2) +  \frac{\pi}{\sqrt{3}} + o(1)$, as $t\to \infty$; 
therefore, 
\begin{gather} 
\le|\theta_1 - r\ri| = e^{-\frac{3r^2}{2}t + \mathcal{O}(1)}.  \label{conv_rate}
\end{gather}

Finally, from $\theta_2(t) = \frac{1}{2d}\theta_1(t)^2$, we get the desired result: 
\begin{gather}
    \le|\theta_2(t)\theta_1(t) - \frac{1}{2d}r^3  \ri| = \le|\theta_2(t)\theta_1(t) - \lambda  \ri| \leq C e^{-\frac{3r^2}{2} t}
\end{gather}
for some positive constant $C$.

\begin{oss}
All the above calculations are valid also in the general case $r, \lambda \in \R\setminus \{0\}$. In the case where $r<0$ (i.e. $\sign(d) \neq \sign(\lambda)$), the solution $\theta_1(t)$ satisfies the implicit equation
\begin{gather}
    - \frac{2}{3r^2}\ln \le| \theta_1(t) - r \ri| + \frac{1}{3r^2} \ln \le| \theta_1^2(t) + r\theta_1(t) + r^2 \ri| +\frac{2 \sign(r)}{r^2 \sqrt{3}} \arctan\le( \frac{2\theta_1(t) +r}{|r|\sqrt{3}}\ri) + C_0 = t 
\end{gather}
and
\begin{gather}
    C_0 =  \frac{2}{3r^2}\ln \le| \theta_0 - r \ri| - \frac{1}{3r^2} \ln \le| \theta_0^2 + r\theta_0 + r^2 \ri| - \frac{2 \sign(r)}{r^2\sqrt{3}} \arctan\le( \frac{2\theta_0 +r}{|r|\sqrt{3}}\ri)  .
\end{gather}
\end{oss}



To conclude the proof, we need to show that the product $\theta_2(t)\theta_1(t)$ is monotonic increasing (if $\theta_0<r$) or decreasing (if $\theta_0>r$) to the limit value $\lambda$. See \figurename \ \ref{fig:T1+T2+T2T1} for a comparative plot of the solutions $\theta_1(t)$, $\theta_2(t)$ and their product.

The monotonicity properties of $\theta_1$ are described in Lemma \ref{prop_T1monotone}). We recall here the first derivative of the functions $\theta_1$ and $\theta_2 = \frac{1}{2d}\theta_1(t)^2$
\begin{gather}
    \dot \theta_1 = \frac{1}{2}\le(r^3 - \theta_1^3 \ri), \qquad
    \dot \theta_2 = \frac{1}{d}\theta_1\dot\theta_1,
\end{gather}
and its the second derivative
\begin{gather}
    \ddot \theta_1 = \frac{3}{4} \theta_1^2 \le(\theta_1^3 - r^3\ri),\qquad
    \ddot \theta_2 = \frac{1}{d}\le( (\dot \theta_1)^2 + \theta_1 \ddot \theta_1 \ri)
\end{gather}

If $\theta_0>r$, then $\theta_1(t)$ is strictly convex and strictly decreasing; by construction, $\theta_2(t)$ is also strictly convex and decreasing. Therefore, the product $\theta_2(t)\theta_1(t)$ will be strictly convex and strictly decreasing $\forall \ t\in \R_{\geq 0}$.

If $\theta_0<r$, then $\theta_1(t)$ is strictly increasing and by construction the product $\theta_2(t) \theta_1(t) = \frac{1}{2d}\theta_1^3$ is increasing $\forall \ t\in \R_{\geq 0}$. The description of convexity/concavity in this case is more involved and we do not pursue this direction, as it is not necessary for our results. 
\end{proof}



\begin{figure}
\noindent\makebox[\textwidth]{%
  \includegraphics[width=1.25\textwidth]{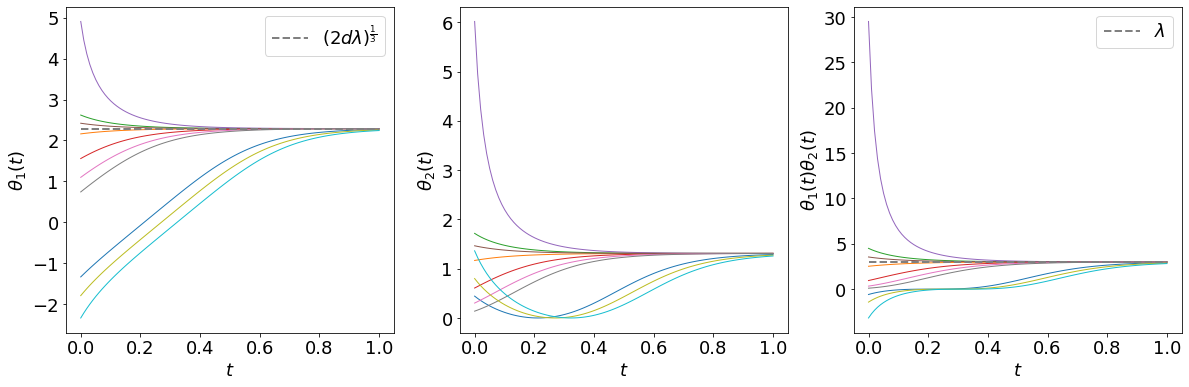}
  }
    \caption{The plots of  $\theta_1(t)$, $\theta_2(t)$ and the product $\theta_2(t)\theta_1(t)$ with initial condition $\theta_0 \sim \mathcal{U}(-5,5)$, $\lambda=3$ and FA constant $d=2$ (10 simulations).}
    \label{fig:T1+T2+T2T1}
\end{figure}

 \paragraph{Case $\lambda =0$.}\label{app:oss6}
 If we assume the signal $\lambda=0$, we can follow the same argument as above and perform similar calculations.
 
 Equation \eqref{app_theta1dot} 
 has a simple explicit solution (assuming $\theta_0 \neq 0$)
 \begin{gather}
     \theta_1(t) = \pm \sqrt{\frac{1}{t+\theta_0^{-2}}}, \qquad \theta_2(t) = \frac{1}{2d(t+\theta_0^{-2})},
 \end{gather}
where $\pm$ depends on the sign of $\theta_0$. It is easy to see that
\begin{gather}
    \theta_2(t)\theta_1 (t) \to 0 = \lambda \qquad \text{as } t\to +\infty,
\end{gather}
however the convergence rate is of the order of $\mathcal{O}(t^{-\frac{3}{2}})$, therefore {not} exponential.

\subsection{Proof of Theorem \ref{thm:FAconv}}
\label{app:FAconvproof}
In the setting of Theorem \ref{thm:FAconv}, we are considering the initial conditions of $\theta_1$ and $\theta_2$ as independent, therefore the integration constant 
$$K = \theta_2(0)  - \frac{1}{2d}\theta_1(0)^2 \in \R$$
may assume any value. The cubic polynomial in the ODE for $\theta_1$ has the general form
\begin{gather}
    \dot \theta_1  = -\frac{1}{2} \le(\theta_1^3 + 2dK\theta_1 - 2d\lambda \ri) \label{app_theta1dotK}
\end{gather}
and it could have $1$ real root (and 2 complex conjugate roots), $2$ real roots (a simple and a double root) or $3$ real distinct roots. To discern the cases, we need to evaluate the discriminant: 
\begin{gather}
    \Delta = -4d^2\le(8d K^3 + 27\lambda^2\ri).
\end{gather}
Regardless of the sign of the discriminant, there exists a unique global solution $\theta_1 \in C^1([0,+\infty))$ for any initial condition $\theta_1(0) \in \R$ and any constant $K\in \R$ (Picard-Lindel\"{o}f Theorem).

\paragraph{Case $\Delta >0$.} The polynomial has { 3 real, distinct root} $r_1< r_2< r_3$, that may be written explicitly with the help of trigonometric functions, if needed. Note also that $\Delta > 0$ implies $K < 0$.
There are three constant solutions of the equation \eqref{app_theta1dotK}: $\theta_1(t) = r_j$, $\forall \ t\geq 0$ (provided that $\theta_0 = r_j$ for some $j=1,2,3$). Otherwise ($\theta_0 \neq r_j$ $\forall \ j=1,2,3$), the implicit solution of \eqref{app_theta1dotK} is the following:
    \begin{gather}
        \frac{\ln \le|\theta_1(t) - r_3\ri|}{(r_3-r_2)(r_3-r_1)}  - \frac{\ln\le|\theta_1(t) - r_2\ri|}{(r_3-r_2)(r_2-r_1)}  + \frac{\ln \le|\theta_1(t) - r_1\ri|}{(r_3-r_1)(r_2-r_1)} + C_0 = - \frac{t}{2} \label{3logs}
    \end{gather}
    with constant of integration
    \begin{gather}
        C_0 = -\frac{\ln \le|\theta_0 - r_3\ri|}{(r_3-r_2)(r_3-r_1)}  + \frac{\ln\le|\theta_0 - r_2\ri|}{(r_3-r_2)(r_2-r_1)}  - \frac{\ln \le|\theta_0 - r_1\ri|}{(r_3-r_1)(r_2-r_1)} .
    \end{gather}

\begin{lemma}\label{prop_T1Kmonotone}
$\theta_1(t)$ is {monotonic}: 
 \begin{align}
        &\dot \theta_1 < 0 & \text{if} \quad\theta_0 \in (r_1,r_2) \cup (r_3, +\infty),\\
        &\dot \theta_1 >0 & \text{if} \quad \theta_0 \in (-\infty,r_1) \cup (r_2,r_3).
\end{align}
\end{lemma}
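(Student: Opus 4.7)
The plan is to exploit the factorization of the right-hand side of~\eqref{app_theta1dotK}. Writing
\begin{equation*}
    p(\theta) := -\tfrac{1}{2}\le(\theta^3 + 2dK\theta - 2d\lambda\ri) = -\tfrac{1}{2}(\theta - r_1)(\theta - r_2)(\theta - r_3),
\end{equation*}
and recalling that the leading coefficient is $-\tfrac{1}{2}<0$, the sign of $p$ alternates across the three simple roots in the standard way: $p>0$ on $(-\infty,r_1)\cup(r_2,r_3)$ and $p<0$ on $(r_1,r_2)\cup(r_3,+\infty)$. Checking a representative point inside each interval (e.g.\ $\theta=(r_1+r_2)/2$ gives two negative factors and one positive, whose product is positive and becomes negative after the $-\tfrac{1}{2}$) confirms this pattern. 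In particular, $\dot{\theta}_1(0) = p(\theta_0)$ already has the sign asserted by the lemma at the initial time, in each of the four cases.

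The remaining task is to propagate this sign information for all $t\geq 0$, i.e.\ to show that $\theta_1(t)$ never crosses any of the three roots. For this I would invoke uniqueness: the three constant functions $\theta_1 \equiv r_j$ ($j=1,2,3$) are global solutions of~\eqref{app_theta1dotK}, and the local Lipschitz character of the polynomial right-hand side (Picard--Lindel\"of) forbids any other trajectory from touching them at any finite time. Consequently, if $\theta_0$ lies in one of the four open intervals, then $\theta_1(t)$ is trapped inside that same interval for all $t\geq 0$; since $p$ has a constant, nonzero sign on that interval, $\dot{\theta}_1(t) = p(\theta_1(t))$ retains the sign of $\dot{\theta}_1(0)$ throughout, yielding the claimed strict monotonicity on $[0,+\infty)$.

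I do not anticipate any serious obstacle: the argument is a textbook phase-line analysis of a scalar autonomous ODE with three simple equilibria. The one hypothesis that must be in force is that the three real roots are \emph{distinct}, so that $p$ is nonzero on each open interval between consecutive roots; this is precisely the assumption $\Delta>0$ that defines the regime of the lemma. A minor bookkeeping remark is that the explicit (trigonometric) expressions for $r_1,r_2,r_3$ are not needed anywhere in the argument -- only their existence, ordering, and distinctness enter -- and this keeps the proof short and self-contained.
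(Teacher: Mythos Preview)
Your proposal is correct and follows essentially the same approach as the paper, which does not spell out a proof of this lemma at all: by analogy with the earlier Lemma~\ref{prop_T1monotone}, the paper simply regards the monotonicity as immediate from inspecting the sign of the right-hand side of~\eqref{app_theta1dotK}. Your phase-line argument (factor the cubic, read off the sign of $p$ on each open interval, and invoke Picard--Lindel\"of uniqueness to trap the trajectory) is exactly the content behind that one-line justification, made explicit.
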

In particular, from Lemma \ref{prop_T1Kmonotone} we can conclude that the constant solutions $\theta_1(t) = r_1$ and $\theta_1(t) = r_3$ are attractive and $\theta_1(t)=r_2$ is repulsive; the general solution (regardless of the initial value $\theta_0\in \R$) is bounded for all times.
    
Following similar arguments as in Section \ref{app:FAconvproof_scalar}, we can conclude that the convergence is exponentially fast:
    \begin{gather}
        \le|\theta_1(t) - r_3 \ri| = e^{-\frac{(r_3-r_2)(r_3-r_1)}{2}t + \mathcal{O}(1)} \qquad \text{as } t\to +\infty,
    \end{gather}
for $\theta_0 \in (r_2, r_3) \cup (r_3, +\infty)$. A similar asymptotic expansion is valid for the quantity $\le|\theta_1(t)-r_1\ri|$ when $\theta_0 \in (-\infty,r_1) \cup (r_1,r_2)$.
    
    Using the relation $\theta_2(t) = \frac{1}{2d}\theta_1(t)^2 + K$, we have
    \begin{gather}
        \le|\theta_2(t) \theta_1(t) - \frac{1}{2d}r_3^3 - Kr_3 \ri| \leq C e^{-\frac{(r_3-r_2)(r_3-r_1)}{2}t}
    \end{gather}
    for some $C>0$. Notice that, being $r_3$ a root of the cubic polynomial $x^3+2dx-2d\lambda$, it follows that $ \frac{1}{2d}r_3^3 - Kr_3 = \lambda$. Similar arguments hold for $r_1$.

\paragraph{Case $\Delta <0$.}  The polynomial has { 1 real root $r \in \R$} and two complex conjugate roots. Let us assume $\lambda> 0$ (therefore, $r\neq 0$); the case $\lambda =0$ is analyzed shortly below. 

Using Cardano's formula, the single root has the explicit expression
    \begin{gather}
        r = 
        {\displaystyle {\sqrt[{3}]{d\lambda+{\sqrt { d^2 \lambda^2+{\frac {8d^3K^3}{27}}}}}}+{\sqrt[{3}]{d\lambda -{\sqrt { d^2\lambda^2+{\frac {8d^3K^3}{27}}}}}}.}
    \end{gather}
The setting is the similar to the one analyzed Section \ref{app:FAconvproof_scalar}.
From a monotonicity argument, it follows that the constant solution $\theta_1(t) = r $ (with $\theta_0=r$) is attractive, therefore for any given initial condition $\theta_0$, $\theta_1(t)$ is bounded for all times $t$ and it converges to $r$ as $t\to +\infty$.

The implicit solution reads
    \begin{eqnarray}
        \frac{r}{2\le( r^3+d\lambda\ri)} \ln \le|\theta_1(t) - r \ri| -  \frac{r}{4\le(r^3+ d\lambda\ri)} \ln \le( \theta_1(t)^2 + r \theta_1(t) + \frac{2d\lambda}{r}  \ri) \nonumber \\
        - \frac{3r^3 }{8(r^3+d\lambda)(d\lambda -r^3)} \arctan \le( \frac{r\le(2\theta_1(t) + r\ri)}{4\le(d\lambda -r^3\ri)} \ri) 
        + C_0 = -\frac{t}{2}
    \end{eqnarray}
    with $C_0$ the constant of integration. 
    
    From $\theta_2(t) = \frac{1}{2d}\theta_1(t)^2 + K$, convergence follows: $\theta_2(t)\theta_1(t) \to \frac{1}{2d} r^3 + Kr = \lambda$ as $t\to +\infty$.
    Furthermore, convergence is exponentially fast : 
    \begin{gather}
        \le|\theta_2(t) \theta_1(t) - \lambda  \ri| \leq C e^{- \frac{r^3+d\lambda}{r}t} \qquad \text{as } t \to +\infty,
    \end{gather}
    for some $C>0$.

\paragraph{Case $\Delta =0$.} The vanishing of the discriminant implies that the constant of integration $K$ has a specific value 
    $$ K = - \frac{3}{2}\sqrt[3]{\frac{\lambda^2}{d}}.$$
    
\begin{oss}
    Notice that the set of initial conditions $(\theta_1(0), \theta_2(0))$ such that $\Delta=0$ is a set of Lebesgue measure zero in $\R^2$.
\end{oss}

    In this case, the polynomial has 1 simple root $r_{\rm s}$ and one double root $r_{\rm d}$:
    \begin{gather}
        r_{\rm s}= -\frac {3\lambda}{K} = 2\sqrt[3]{d\lambda}, \qquad r_{\rm d}= \frac {3\lambda}{2K} = - \sqrt[3]{d\lambda}
    \end{gather}
    
    The solution of the differential equation \eqref{app_theta1dotK} (for $\theta_0 \neq r_{\rm s}, r_{\rm d}$) reads 
    \begin{gather}
        \frac{\ln\le|\theta_1(t) - r_{\rm s} \ri| - \ln \le| \theta_1(t) - r_{\rm d}\ri|}{(r_{\rm s}-r_{\rm d})^2} + \frac{1}{(r_{\rm s} - r_{\rm d}) ( \theta_1(t) - r_{\rm d})} +C_0 = -\frac{t}{2}
    \end{gather}
    with $C_0$ the constant of integration.
    
    \begin{lemma}
    The solution $\theta_1(t)$ is monotonic: if $r_{\rm s}> r_{\rm d}$, 
    \begin{align}
        &\dot \theta_1 <0 & \text{if} \quad \theta_0 \in (r_{\rm s}, +\infty)\\
        &\dot \theta_1 >0 & \text{if} \quad   \theta_0 \in (-\infty, r_{\rm d}) \cup (r_{\rm d}, r_{\rm s}) 
    \end{align}
    and similarly if $r_{\rm s}< r_{\rm d}$,
    \begin{align}
        &\dot \theta_1 <0 & \text{if} \quad \theta_0 \in  (r_{\rm s} , r_{\rm d}) \cup  ( r_{\rm d}, +\infty)\\
        &\dot \theta_1 >0 & \text{if} \quad \theta_0 \in (-\infty, r_{\rm s}).
    \end{align}
    \end{lemma}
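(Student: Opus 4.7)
The plan is to exploit the special structure of the cubic polynomial in the $\Delta=0$ case by factoring it explicitly and then performing a direct sign analysis, exactly in the spirit of the preceding monotonicity arguments (Lemmas \ref{prop_T1monotone} and \ref{prop_T1Kmonotone}).

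First, I would use the root structure already recorded in the excerpt: when $\Delta = 0$, the polynomial $p(\theta) := \theta^3 + 2dK\theta - 2d\lambda$ has exactly one simple root $r_{\rm s}$ and one double root $r_{\rm d}$, so it factors as $p(\theta) = (\theta - r_{\rm s})(\theta - r_{\rm d})^2$. Substituting this factorization into the ODE \eqref{app_theta1dotK} yields
\[
\dot\theta_1 = -\tfrac{1}{2}(\theta_1 - r_{\rm s})(\theta_1 - r_{\rm d})^2.
\]

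The crucial observation is that $(\theta_1 - r_{\rm d})^2 \ge 0$ with equality only at $\theta_1 = r_{\rm d}$, so away from the equilibria the sign of $\dot\theta_1$ coincides with $-\mathrm{sign}(\theta_1 - r_{\rm s})$. Since $r_{\rm s}$ and $r_{\rm d}$ are roots of $p$, the constant functions $\theta_1 \equiv r_{\rm s}$ and $\theta_1 \equiv r_{\rm d}$ are equilibrium solutions of the ODE. By Picard--Lindel\"{o}f (applicable because the right-hand side is polynomial, hence locally Lipschitz), solutions are unique, and therefore no non-equilibrium trajectory can cross either value. Consequently $\theta_1(t)$ remains in the same connected component of $\mathbb{R} \setminus \{r_{\rm s}, r_{\rm d}\}$ as its initial point $\theta_0$ for all $t \ge 0$, and the sign of $\dot\theta_1(t)$ is constant along the trajectory.

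To close the proof I would enumerate cases by the ordering of the roots. If $r_{\rm s} > r_{\rm d}$: on $(r_{\rm s},+\infty)$ we have $\theta_1 - r_{\rm s} > 0$, hence $\dot\theta_1 < 0$; on $(r_{\rm d}, r_{\rm s})$ and on $(-\infty, r_{\rm d})$ we have $\theta_1 - r_{\rm s} < 0$, hence $\dot\theta_1 > 0$. The case $r_{\rm s} < r_{\rm d}$ is symmetric and yields the second set of inequalities stated in the lemma. This final bookkeeping step is mechanical, so I do not anticipate any genuine obstacle. The only subtle feature, which distinguishes this degenerate case from the strictly three-root case $\Delta > 0$, is that $r_{\rm d}$ is a semistable equilibrium where $\dot\theta_1$ vanishes to second order; one must therefore explicitly invoke uniqueness to rule out trajectories passing through $r_{\rm d}$, rather than relying on a simple change of sign as in the $\Delta > 0$ case.
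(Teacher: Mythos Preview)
Your proposal is correct and follows the same approach as the paper: the paper does not spell out a proof for this lemma, treating it (like Lemmas~\ref{prop_T1monotone} and~\ref{prop_T1Kmonotone}) as an immediate consequence of inspecting the sign of the right-hand side of~\eqref{app_theta1dotK}. Your explicit factorization $\dot\theta_1 = -\tfrac{1}{2}(\theta_1 - r_{\rm s})(\theta_1 - r_{\rm d})^2$ together with the uniqueness argument to preclude crossing the equilibria is exactly the sign analysis the paper leaves implicit, and your observation that $r_{\rm d}$ is semistable is precisely what the paper summarizes afterward as ``$\theta_1(t) = r_{\rm d}$ is `one-sided' attractive.''
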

Therefore, the solution $\theta_1(t) = r_{\rm s}$ is always attractive, while the solution $\theta_1(t) = r_{\rm d}$ is "one-sided" attractive. The solution $\theta_1(t)$ is  bounded for all times and the convergence result follows:
    \begin{gather}
        \theta_2(t)\theta_1(t) \to \frac{1}{2d} r_{\rm s}^3 + K r_{\rm s} = \lambda \qquad \text{as } t \to \infty;
    \end{gather}
similarly, for $r_{\rm d}$.
    Exponential convergence holds when $\theta_1(t) \to r_{\rm s}$, while when $\theta_1(t)\to r_{\rm d}$ the rate is sub-exponential.

In the special case when $\Delta =0$ and $\lambda=0$, the polynomial has a { triple root $r=0$}. The solution is the same as the one derived in Section \ref{app:FAconvproof_scalar} in the case of $\lambda =0$ and the convergence to the signal $\lambda =0$ is polynomial:
    \begin{gather}
        \le|\theta_2(t)\theta_1(t) \ri| \leq Ct^{-\frac{3}{2}} \qquad \text{as } t \to +\infty, 
    \end{gather}
    for some $C>0$.
    

\section{Convergence of FA for $L$-layer NNs} 

\subsection{Proof of Proposition~\ref{prop:FA_dyn_mutli}}

We have that $\mathcal{L} := \tfrac{1}{2}\E[\|\vy - \hat \vy\|^2]$ and $\hat \vy = \mW_L \cdots \mathbf{W}_2 \mathbf{W}_1 \vx$, thus noting $\mW_{[a:b]} := \mW_b \mW_{b-1} \cdots \mW_a$ we have,
\begin{align}
    \nabla_{W_\ell} \gL 
    &= \E[ \tfrac{1}{2}\nabla_{W_\ell}\|\vy - \hat \vy\|^2] \\
    &= \E[ \mathbf{W}_{[\ell+1:L]}^\top(\vy - \mathbf{W}_L \cdots \mathbf{W}_1 \vx) \vx^\top \mW_{[1:i-1]}^\top   ] \\
    &= \mathbf{W}_{[\ell+1:L]}^\top({\bm \Sigma}_{xy} - \mathbf{W}_L \cdots \mathbf{W}_1{\bm \Sigma}_{xx})\mW_{[1:\ell-1]}^\top \,, \label{app_eq:gradW_1}
\end{align}
where the last line comes from the linearity of the expectation. 
Now we conclude by noticing that by definition of the FA dynamics~\eqref{eq:def_FA}, the matrices on the left hand-side of $(\vy - \mathbf{W}_2 \mathbf{W}_1 \vx)$ are replaced by random matrices ${\bm M}_l$. Thus we get 
\begin{align}
        &\dot \mW_\ell = \mM_\ell ({\bm \Sigma}_{xy} -  \mW_{[1:L]} \mSigma_{xx} )  \mW_{[1:\ell-1]}^\top \qquad \ell \in [L]\,,
    \end{align}
where we reparametrized $\mM_\ell = \mM_{[\ell+1:L]}$ (see discussion in the main text).
\subsection{Proof of Theorem~\ref{FAconv_Llayers}}
\label{app:FAconv_Llayers_proof}
\begin{figure}
    \centering
    \includegraphics[width=.5\textwidth]{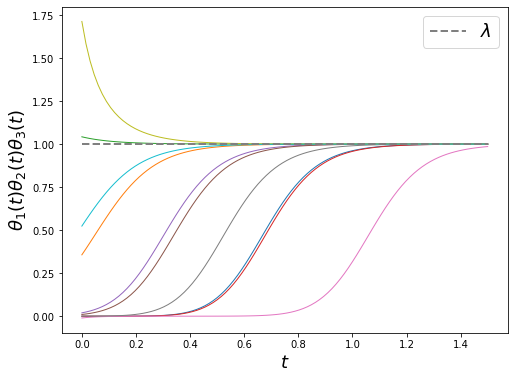}
    \caption{The plot of the product $\theta_3(t)\theta_2(t)\theta_1(t)$ with initial conditions $\theta_0 \sim \mathcal{U}([-1,2])$, FA constants $d_1 =2$ and $d_2=2.5$, $\lambda=1$ (10 simulations). }
    \label{fig4}
\end{figure}

We will first state Theorem \ref{FAconv_Llayers} in  a more extensive form. Given the system
\begin{align}
        &\dot \theta_{L} = (\lambda - \theta_L\ldots \theta_1) \theta_{L-1} \ldots \theta_1 \label{Lsystem_eq1} \\
        &\dot \theta_{L-1} = d_{L-1}(\lambda - \theta_L\ldots \theta_1) \theta_{L-2} \ldots \theta_1 \\
        &\vdots \\
        &\dot \theta_{2} = d_2 (\lambda - \theta_L\ldots \theta_1) \theta_1 \\
        &\dot \theta_{1} = d_1 (\lambda - \theta_L\ldots \theta_1)   \label{Lsystem_eq5}
\end{align}
where we suppressed the index $i$ for simplicity, the following result holds.
\begin{thm}
For any set of FA constants $\{d_\ell\}_{\ell =1,\ldots, L-1}$, $d_\ell \in \R_{>0}$, there exists an initialization scheme such that the system of differential equations \eqref{Lsystem_eq1}--\eqref{Lsystem_eq5} can be reduced into an equation for $\theta_1$ of the form 
\begin{gather}
    \dot \theta_1 = d_1 \le(\lambda - \mathfrak{K} \theta_1^{\gamma} \ri)
\end{gather}
for some constant $\mathfrak{K}  \in \R_{>0}$ that depends on $\{d_\ell\}$, with $\gamma = 2^L-1$. The other functions depend directly on $\theta_1$ in a power-like fashion: 
\begin{gather} 
\theta_\ell (t) = \frac{C_\ell}{2^{\ell-1}} \theta_1(t)^{2^{\ell-1}}, \qquad \ell =2,\ldots, L,
\end{gather}  
for some $C_\ell \in \R_{>0}$.
Furthermore, the following convergence result holds: \begin{gather}
    \theta_L(t) \theta_{L-1}(t)\ldots \theta_1(t) \to  \lambda \qquad \text{as } t\to +\infty.
\end{gather}
\end{thm}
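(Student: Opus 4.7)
The plan is to exploit a cascade of conservation laws that is present in the system \eqref{Lsystem_eq1}--\eqref{Lsystem_eq5}, and which generalizes the identity $\theta_2 = \theta_1^2/(2d) + K$ that was the key of the $2$-layer analysis in Section~\ref{sec3}. First, I would observe that for every $\ell \in \{2,\ldots,L\}$ one has
\begin{equation*}
\frac{\dot\theta_\ell}{d_\ell} \;=\; \bigl(\lambda - \theta_L\cdots\theta_1\bigr)\,\theta_{\ell-1}\cdots\theta_1 \;=\; \theta_{\ell-1}\cdot\frac{\dot\theta_{\ell-1}}{d_{\ell-1}},
\end{equation*}
so that $\tfrac{d}{dt}\!\left(\tfrac{\theta_\ell}{d_\ell}\right) = \tfrac{1}{2 d_{\ell-1}}\tfrac{d}{dt}\!\left(\theta_{\ell-1}^2\right)$ (with the convention $d_L=1$). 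Integrating gives, for each $\ell$, a conserved quantity
\begin{equation*}
K_\ell \;:=\; \frac{\theta_\ell(t)}{d_\ell} - \frac{\theta_{\ell-1}(t)^2}{2\,d_{\ell-1}},\qquad \ell = 2,\ldots,L.
\end{equation*}

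Next, I would choose the initialization to kill every $K_\ell$: set $\theta_1(0)=\theta_0$ arbitrary, and recursively $\theta_\ell(0) = \tfrac{d_\ell}{2\,d_{\ell-1}}\,\theta_{\ell-1}(0)^2$. Solving this recursion produces constants $C_\ell>0$ (with $C_1=1$) satisfying
\begin{equation*}
\theta_\ell(0) \;=\; \frac{C_\ell}{2^{\ell-1}}\,\theta_0^{\,2^{\ell-1}},\qquad \ell = 1,\ldots,L,
\end{equation*}
which is exactly the initialization scheme announced in \eqref{init_Llayers}. With $K_\ell=0$ for all $\ell$, the conservation laws force the same functional dependence at every time, namely $\theta_\ell(t) = \tfrac{C_\ell}{2^{\ell-1}}\,\theta_1(t)^{2^{\ell-1}}$, which is the ``power-like'' structure in the statement.

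Substituting this back into \eqref{Lsystem_eq5} collapses the whole $L$-dimensional system onto a single scalar ODE for $\theta_1$:
\begin{equation*}
\dot\theta_1 \;=\; d_1\bigl(\lambda - \mathfrak{K}\,\theta_1^{\gamma}\bigr),\qquad \gamma = \sum_{\ell=1}^{L} 2^{\ell-1} = 2^L - 1,\qquad \mathfrak{K} = \prod_{\ell=1}^{L}\frac{C_\ell}{2^{\ell-1}} > 0.
\end{equation*}
Since $\gamma$ is odd and $\mathfrak{K}>0$, the right-hand side is a strictly decreasing function of $\theta_1$ with a unique real root $\theta_1^\ast = (\lambda/\mathfrak{K})^{1/\gamma}$. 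Picard--Lindel\"of gives a unique local solution; global existence on $[0,+\infty)$ follows from the monotonicity argument (as in Lemma~\ref{prop_T1monotone}): $\theta_1$ is monotonic and bounded between its initial value and $\theta_1^\ast$. Hence $\theta_1(t)\to\theta_1^\ast$, and consequently
\begin{equation*}
\theta_L(t)\cdots\theta_1(t) \;=\; \mathfrak{K}\,\theta_1(t)^{\gamma} \;\longrightarrow\; \mathfrak{K}\,(\theta_1^\ast)^{\gamma} \;=\; \lambda,
\end{equation*}
which is the desired convergence.

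The step I expect to be the most delicate is the second one: identifying the right initialization that simultaneously annihilates all $L-1$ integration constants and keeping track of the explicit form of $C_\ell$ and $\mathfrak{K}$ through the recursion (which will also be needed later for the discrete-time rate in Theorem~\ref{conv_discreteFA_multiL}). The verification of the reduced ODE and the convergence analysis, by contrast, are essentially a one-variable extension of the scalar argument used in the proof of Theorem~\ref{FAconv_scalar}, with $\theta_1^3$ replaced by $\theta_1^{2^L-1}$ and the same sign/monotonicity inspection giving exponential convergence away from $\theta_1=0$.
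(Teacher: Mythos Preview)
Your proposal is correct and follows essentially the same approach as the paper. The only cosmetic difference is that you phrase the reduction via a family of consecutive-layer conservation laws $K_\ell = \theta_\ell/d_\ell - \theta_{\ell-1}^2/(2d_{\ell-1})$, whereas the paper substitutes $(\lambda - \theta_L\cdots\theta_1) = \dot\theta_1/d_1$ into each equation and proves by induction that $\dot\theta_\ell = C_\ell\,\theta_1^{2^{\ell-1}-1}\dot\theta_1$; both routes yield the same relations $\theta_\ell = \tfrac{C_\ell}{2^{\ell-1}}\theta_1^{2^{\ell-1}}$ once the integration constants are set to zero, and the convergence argument (unique real root of the odd-degree polynomial, monotonicity, boundedness) is identical.
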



See \figurename \ \ref{fig4} for a numerical simulation of a 3-layer scalar network.
\begin{proof}
The proof of reducing the system of differential equations into a single equation for $\theta_1$ will be constructive, following a simple iterative procedure. 
We start by considering the last equation \eqref{Lsystem_eq5} and by substituting $(\lambda - \theta_L\ldots \theta_1) = \frac{\dot \theta_1}{d_1}$ in the second to last equation
\begin{gather}
    \dot \theta_2 = \frac{d_2}{d_1} \theta_1  \dot \theta_1 =  \frac{d_2}{2d_1} \le( \theta_1^2\ri)^{\cdot}, \qquad \text{implying}\quad \theta_2(t) = \frac{d_2}{2d_1} \theta_1(t)^2 + K_2, \label{subs2}
\end{gather}
for some constant of integration $K_2 \in \R$, which can be set to zero if we impose $\theta_2(0) = \frac{d_2}{d_1}\theta_1(0)^2$.


The same substitution strategy can be consecutively applied to the third to last equation, fourth to last equation and so on.
In general, it is easy to prove by induction that $\forall \ \ell =2,\ldots, L$ we have $\dot \theta_\ell = C_\ell \theta_1^{2^{\ell-1}-1} \dot \theta_1 $, for some $C_\ell = C_\ell(d_1,\ldots, d_\ell) \in \R_{>0}$, which implies
\begin{gather}
    \theta_\ell(t) = \frac{C_{\ell}}{2^{\ell-1}} \theta_1(t)^{2^{\ell-1}} + K_\ell
\end{gather}
where as before we can set the constant of integration $K_\ell =0$ by imposing $\theta_\ell (0) = \frac{C_{\ell}}{2^{\ell-1}} \theta_1(0)^{2^{\ell-1}}$.

Once we have derived the relations between the functions $\{\theta_\ell(t)\}_{\ell =2,\ldots, L}$ and $\theta_1(t)$, we can plug all of them into \eqref{Lsystem_eq5} to obtain a closed-form differential equation for $\theta_1$:
\begin{gather}
    \dot \theta_1 = d_1\le( \lambda - \mathfrak{K} \theta_1^\gamma \ri),
\end{gather}
where $\mathfrak{K}=\prod_{\ell =1}^L \frac{C_\ell}{2^{\ell -1}} \in \R_{>0}$ and $\gamma = \sum_{\ell=0}^{L-1} 2^{\ell} = 2^{L}-1$.
This is a separable equation which can be solved (with tears) by using partial fraction decomposition.

The proof of convergence of the product $\prod_{\ell =1}^L \theta_\ell(t)$ to the true signal $\lambda$ follows a similar argument as in Appendix \ref{app:FAconvproof_scalar}. 
Let $r := \sqrt[\gamma]{\frac{\lambda}{\mathfrak{K}}}$ be the unique real root of the polynomial $\lambda - \mathfrak{K}\theta_1^\gamma$, then the solution $\theta_1(t) = r$ is a constant solution, provided that $\theta_1(0) = r$. 
By construction, it follows that $\{\theta_\ell(t)\}_{\ell=2,\ldots,L}$ are also constant functions and their product is such that
\begin{gather}
    \theta_L(t) \ldots \theta_1(t) = \prod_{\ell=1}^L \frac{C_\ell}{2^{\ell-1}} r^{\gamma} = \lambda \qquad \forall \ t\geq 0.
\end{gather}

\begin{lemma}
The general solution $\theta_1(t)$ is monotonic:
\begin{align}
    &\dot \theta_1 >0  & \text{if} \quad \theta_1(0) < r\\
    &\dot \theta_1 <0 & \text{if} \quad \theta_1(0) >r
\end{align}
\end{lemma}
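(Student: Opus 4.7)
The plan is to reduce this to a standard sign-analysis plus Picard--Lindel\"of uniqueness argument on the scalar autonomous ODE
\begin{equation}
\dot\theta_1 = F(\theta_1), \qquad F(\theta):= d_1(\lambda - \mathfrak{K}\theta^{\gamma}), \qquad \gamma = 2^L-1.
\end{equation}
The key structural fact I would invoke first is that $\gamma$ is \emph{odd}: since $d_1,\mathfrak{K},\lambda>0$, the map $\theta\mapsto \mathfrak{K}\theta^\gamma$ is a strictly increasing bijection of $\mathbb{R}$, so $F$ is strictly decreasing with a unique real zero at $r=(\lambda/\mathfrak{K})^{1/\gamma}>0$. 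Hence $F(\theta)>0$ for $\theta<r$ and $F(\theta)<0$ for $\theta>r$.

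Next I would use the fact that $F$ is $C^1$ on $\mathbb{R}$, hence locally Lipschitz, so Picard--Lindel\"of guarantees uniqueness of solutions through any initial datum. The constant function $t\mapsto r$ is an equilibrium solution. If $\theta_1(0)<r$, then by uniqueness $\theta_1(t)$ cannot cross the equilibrium (any crossing would contradict uniqueness at the crossing point), so $\theta_1(t)<r$ for all $t\geq 0$ where the solution is defined. The sign analysis of $F$ then gives $\dot\theta_1(t)=F(\theta_1(t))>0$ throughout. The case $\theta_1(0)>r$ is symmetric and yields $\dot\theta_1<0$.

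The only genuine obstacle, and it is minor, is confirming that the solution is globally defined on $[0,+\infty)$ so that the monotonicity statement has content for all time. This follows from boundedness: in the case $\theta_1(0)<r$, monotonicity together with the upper barrier $r$ keeps $\theta_1$ in the bounded interval $[\theta_1(0),r)$, and in the case $\theta_1(0)>r$, $\theta_1$ stays in $(r,\theta_1(0)]$; in either case no blow-up occurs in finite time, so the local solution extends globally. Thus the lemma follows directly from the equilibrium/sign picture of the one-dimensional autonomous ODE, with no further computation required.
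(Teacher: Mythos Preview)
Your argument is correct and follows the same idea as the paper: a sign analysis of the right-hand side of the scalar autonomous ODE $\dot\theta_1 = d_1(\lambda - \mathfrak{K}\theta_1^{\gamma})$, using that $\gamma=2^L-1$ is odd so that $r$ is the unique real zero of the cubic-like polynomial. The paper in fact states this lemma without a separate proof, relying on the analogous 2-layer lemma whose justification is simply ``inspect the sign of the right-hand side''; your write-up is a more careful version of that, additionally spelling out the Picard--Lindel\"of uniqueness step that prevents crossing the equilibrium and the boundedness argument for global existence.
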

Therefore, $\theta_1(t)$ is bounded for all times and $\theta_1(t) \to r$ as $t \to +\infty$. By construction, it follows that 
    \begin{gather} 
    \theta_L(t) \ldots \theta_1(t) \to \prod_\ell \frac{C_\ell}{2^{\ell-1}}r^\gamma  = \lambda, \qquad \text{as } t\to +\infty.
    \end{gather}
\end{proof}

\section{Implicit Regularization phenomena}\label{app:impl_reg}

\subsection{Proof of Theorem \ref{thm_implicit_reg} (case $\Delta >0$)}
For readability, we are suppressing the index $i$ from all the formul\ae \ below. Recall the Initial Value Problem for $\theta_1$ (and $\theta_2$): 
\begin{align}
&\dot{ \theta_1} = -\frac{1}{2}\le(\theta_1^3 + 2dK\theta_1 -2d\lambda\ri) \label{eq180app} \\
&\theta_1(0) = \theta_0
\end{align}
and $\theta_2(t) = \frac{1}{2d}\theta_1^2(t) + K$, where $K = \theta_2(0) - \frac{1}{2d}\theta_0^2$. By assuming that the discriminant is positive $\Delta = -4d^2\le(8d K^3 + 27\lambda^2\ri) >0$, the cubic polynomial on the right hand side of \eqref{eq180app} has 3 real  distinct root $r_1< r_2< r_3$ and
the solution of the ODE is (see also Section \ref{app:FAconvproof})
\begin{align}
&\frac{\ln \le|\theta_1(t) - r_3\ri|}{r_{32}r_{31}}  - \frac{\ln\le|\theta_1(t) - r_2\ri|}{r_{32}r_{21}}  + \frac{\ln \le|\theta_1(t) - r_1\ri|}{r_{31}r_{21}} + C_0 = - \frac{t}{2} \label{eq184} \\
&C_0 = -\frac{\ln \le|\theta_0 - r_3\ri|}{r_{32}r_{31}}  + \frac{\ln\le|\theta_0 - r_2\ri|}{r_{32}r_{21}}  - \frac{\ln \le|\theta_0 - r_1\ri|}{r_{31}r_{21}} , \label{eq185}
\end{align}
where we set $r_{ij} := r_i - r_j$.


If we set as initial value $\theta_0 = r_2 + e^{-\delta}$ ($\delta >0$), then
\begin{align}
C_0 
& = - \frac{ \delta }{r_{32}r_{21}}  -\frac{\ln \le( r_{32} - e^{-\delta}\ri)}{r_{32} r_{31}}     - \frac{\ln \le(  r_{21}+ e^{-\delta}\ri)}{r_{31}r_{21}} ; \label{eq211}
\end{align}
we additionally rescale the time variable as $t \mapsto \delta t$. Rewriting equation \eqref{eq184}, we have
\begin{align}
    r_3 - \theta_1(\delta t) 
    &= \operatorname{exp} \le\{  - \frac{r_{32}r_{31}}{2} \delta \le[ t  - \frac{2}{r_{32}r_{21}}\ri] + \ln \le( r_{32}-e^{-\delta}\ri) + \frac{r_{32}}{r_{21}}\ln \le( r_{21} + e^{-\delta}\ri)  \ri. \nonumber \\
    & \qquad \qquad \le.     + \frac{r_{31}}{r_{21}}\ln \le(\theta_1(\delta t) - r_2\ri) - \frac{r_{32}}{r_{21}}\ln\le(\theta_1(\delta t) - r_1\ri) \ri\}, \label{r3-theta1}\\
    \theta_1(\delta t) - r_2 
    &=  \operatorname{exp} \le\{ - \frac{r_{32}r_{21}}{2}\delta \le[ \frac{2}{r_{32}r_{21}} - t  \ri] 
     + \frac{r_{21}}{r_{31}}\ln \le( \frac{ r_3-\theta_1(\delta t)}{r_{32} - e^{-\delta}}\ri) + \frac{r_{32}}{r_{31}}\ln\le( \frac{\theta_1(\delta t) - r_1}{r_{21} + e^{-\delta}}\ri) \ri\}.  \label{theta1-r2}
\end{align}
From \eqref{r3-theta1}--\eqref{theta1-r2}, we have the following asymptotic expansions: as $\delta \to +\infty$
\begin{align}
    &\theta_1(\delta t) \to r_3 & \text{ for } t>T\\
    &\theta_1( \delta T)  \to  \alpha := r_3 - 
     \operatorname{exp} \le\{\le(1+ \frac{r_{31}}{r_{21}} \ri)\ln r_{32} + \frac{r_{32}}{r_{21}}\ln \le(\frac{r_{21}}{r_{31}}   \ri)\ri\} & \\
    &\theta_1(\delta t) \to r_2 &  \text{ for } t<T
\end{align}
with $T = \frac{2}{r_{32}r_{21}}$.




In the case $\theta_0 = r_2 - e^{-\delta}$, we can follow a similar argument:
as $\delta \to +\infty$,
\begin{align}
    &\theta_1(\delta t) \to r_1 & \text{for } t> T \\
    &\theta_1(\delta T)\to \tilde \alpha := r_1 + \operatorname{exp} \le\{ \le( 1+ \frac{r_{31}}{r_{32}}\ri) \ln r_{21}+ \frac{r_{21}}{r_{32}}\ln  \le( \frac{r_{32}}{r_{31}}\ri) \ri\} & \\
    &\theta_1(\delta t) \to r_2 & \text{for } t< T.
\end{align}

The last bit of result we need is to analyze if the value of the critical value 
\begin{gather}
    T = \frac{2}{r_{32}r_{21}}
\end{gather}
is monotonic as $\lambda$ increases.
Recall the cubic polynomial on the right hand side of equation \eqref{eq180app} (without loss of generality, we are setting $2d=1$): $P(x) = x^3+K x - \lambda  = (x-r_1)(x-r_2)(x-r_3)$,
where $\lambda >0$ and $K< - \frac{3}{2}\lambda^{2/3}<0$ depends on the initial conditions of the FA system.


The quantity $r_{32}r_{21}$ can be expressed as $r_{32}r_{21} = - P'(r_2) = -3 r_2^2 - K$, where $r_2<0$ for $\lambda >0$.
It is easy to see that if $\lambda<\tilde \lambda$, then $|r_2| < |\widetilde{r_2}|$ and consequently $r_{32}r_{21}= -r_2^2-K  > -\widetilde{r_2}^2 - K = \widetilde{r_{32}}\widetilde{r_{21}}$, 
i.e.
\begin{gather}
T_\lambda = \frac{2}{r_{32}r_{21}}  < \frac{2}{ \widetilde{r_{32}}\widetilde{r_{21}}} = T_{\tilde \lambda}
\end{gather}
Therefore the threshold time $T$ is an increasing function of $\lambda$. 
See \figurename \ \ref{anti_impl_reg}: in the $\delta$-scaling limit we see indeed that for smaller values of $\lambda$, the step appears sooner. 

\begin{figure}

    \begin{subfigure}[b]{0.45\textwidth}
         \centering
         \includegraphics[width=\textwidth]{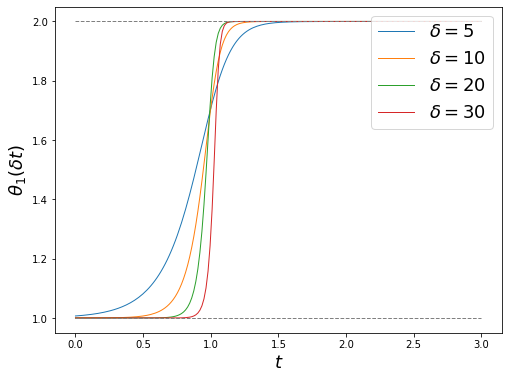}
         \caption{Numerical simulation of $\theta_1(t)$ with $r_1=-2$, $r_2 = 1$, $r_3 = 2$ and initial conditions $\theta_0 = r_2 + e^{-\delta}$ with $\delta \in \{10,20,30\}$.}
        \label{implicit_regularization}
     \end{subfigure}
     \hfill
    \begin{subfigure}[b]{0.45\textwidth}
        \centering
        \includegraphics[width=\textwidth]{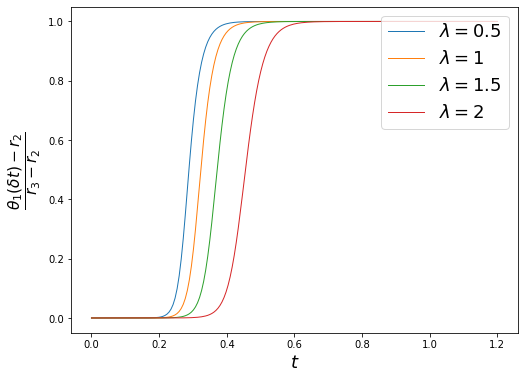}
        \caption{Plot of $\theta_1(\delta t)$ ($\Delta >0$), with initial condition $\theta_0 = r_2 + e^{-\delta}$,  $\delta =20$),  FA constant $d=0.5$, initial condition $K=-4$.}
    \label{anti_impl_reg}
     \end{subfigure}
     \caption{(Anti)-implicit regularization.} 
     \label{impl_reg_2pics}
\end{figure}



\subsection{Evidence of incremental learning in the case $K=0$} \label{app:qualitative}

Consider again the system \eqref{eq180app} in the case $K=0$:
\begin{align}
    &\dot \theta_1 = -\frac{1}{2} (\theta_1^3 - 2d\lambda)\\
    & \theta_1(0) = \theta_0
\end{align}
and $\theta_2(t) = \frac{1}{2d}\theta_1^2(t)$.
In this case, the solution reads (see also Section \ref{app:FAconvproof_scalar})
\begin{align}
    & - \frac{2}{3r^2}\ln \le| \theta_1(t) - r \ri| + \frac{1}{3r^2} \ln \le( \theta_1^2(t) + r\theta_1(t) + r^2 \ri) +\frac{2}{r^2 \sqrt{3}} \arctan\le( \frac{2\theta_1(t) +r}{r\sqrt{3}}\ri) + C_0 = t \label{app_T1implicit} \\
    &C_0 =  \frac{2}{3r^2}\ln \le| \theta_0 - r \ri| - \frac{1}{3r^2} \ln \le( \theta_0^2 + r\theta_0 + r^2 \ri) - \frac{2}{r^2\sqrt{3}} \arctan\le( \frac{2\theta_0 +r}{r\sqrt{3}}\ri)
\end{align}
with $r:= \sqrt[3]{2d\lambda}$.






As the limit value of the function $\theta_1(t)$ is the positive root $r$,  $\theta_1(T_0) =0 $ (for some $T_0 >0$) if and only if $\theta_0<0$. The value of $T_0$ can be easily derived from the implicit expression \eqref{app_T1implicit}:
\begin{gather}
T_0 =  \frac{\pi}{r^23\sqrt{3}}  +\frac{1}{3 r^2}\ln\le( \frac{(r-\theta_0)^2}{\theta_0^2+r\theta_0+r^2}\ri) - \frac{2}{r^2\sqrt{3}} \arctan\le(\frac{2\theta_0+r}{r\sqrt{3}}\ri)
\end{gather}
and in the regime as $\theta_0 \to -\infty$
\begin{gather}
    T_0 = \frac{4\pi}{3r^2\sqrt{3}} - \frac{1}{\theta_0^2} + \mathcal{O}\le(\frac{1}{\theta_0^3}\ri), \qquad \theta_0\to -\infty.
\end{gather}
It is now clear that for bigger $\lambda$'s (i.e. dominant singular values) the crossing of the $x$-axis happens sooner: for $\lambda > \widetilde \lambda$ and fixed FA constant $d >0$,
\begin{gather}
   T_{0,\lambda} < T_{0,\widetilde \lambda}.
\end{gather}

\section{Convergence of the discrete algorithms}
\label{app:discreteFAconv}

We will report here all the proofs of the convergence theorems for the discrete FA dynamics. We will first discuss convergence guarantees and convergence rates for the modified (mid-point) FA dynamics for both $2$- and $L$-layer NNs ($L\geq 3$), as it is more straightforward. The convergence proof for the forward Euler discretization method can be found in Section \ref{app:convEuler}.

\subsection{Convergence of FA under the "midpoint" discretization scheme}
We recall the 2-layer system that we want  to analyze. For simplicity we are suppressing all the $i$ indices and we are denoting $(x_t,y_t)$ the unknown functions in the system:
\begin{align}
    &x_{t+1} = x_t + \eta  d \le(\lambda -  x_t y_t \ri) \label{app:mFA1} \\
    &y_{t+1} =y_t + \frac{\eta}{2} \le(\lambda - x_t y_t \ri)(x_{t+1}+x_{t}) \label{app:mFA2}
\end{align}

We begin with proving a series of easy lemmas that will be instrumental for the proof of the theorem. 

\begin{lemma}\label{lemma0}
With initial condition $(x_0,y_0) = (0,0)$, the second component of the solution of the system \eqref{app:mFA1}--\eqref{app:mFA2} reads
\begin{gather}
    y_{t} = \frac{x_t^2}{2d} \qquad \forall \ t\geq 0.
\end{gather}
\end{lemma}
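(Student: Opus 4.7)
The plan is to proceed by a straightforward induction on $t$, exploiting the fact that the modified update rule was designed precisely so that the discrete increments of $y$ mimic the chain-rule identity $\dot y = \tfrac{1}{d}x\dot x$ that one obtains from Proposition~\ref{prop:FA_dyn} in the continuous setting when $K=0$.

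For the base case, observe that $y_0 = 0 = x_0^2/(2d)$, so the claim is immediate. For the inductive step, assume $y_t = x_t^2/(2d)$. From~\eqref{app:mFA1} we can solve for the error term,
\begin{equation}
\lambda - x_t y_t \;=\; \frac{x_{t+1}-x_t}{\eta d},
\end{equation}
and substitute this into~\eqref{app:mFA2} to obtain
\begin{equation}
y_{t+1} - y_t \;=\; \frac{\eta}{2}\cdot\frac{x_{t+1}-x_t}{\eta d}\cdot(x_{t+1}+x_t)\;=\;\frac{x_{t+1}^2 - x_t^2}{2d}.
\end{equation}
Adding $y_t = x_t^2/(2d)$ to both sides gives $y_{t+1} = x_{t+1}^2/(2d)$, which closes the induction.

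There is no real obstacle here: the only thing to notice is that the ``midpoint'' averaging $\tfrac{1}{2}(x_{t+1}+x_t)$ in~\eqref{app:mFA2}, combined with the substitution from~\eqref{app:mFA1}, produces the algebraic identity $(x_{t+1}-x_t)(x_{t+1}+x_t) = x_{t+1}^2 - x_t^2$. This is exactly the discrete analogue of $2x\,\mathrm{d}x = \mathrm{d}(x^2)$ that made the analogous continuous relation $y(t)=x(t)^2/(2d)$ work in the proof of Theorem~\ref{FAconv_scalar}. This is also the reason the modified discretization is tailor-made for the FA dynamics: the standard forward Euler scheme~\eqref{EulerFA1}--\eqref{EulerFA2} would not give this exact telescoping, which is why its analysis (Theorem~\ref{conv_discreteEulerFA}) is more delicate.
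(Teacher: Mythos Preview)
Your proof is correct and follows essentially the same approach as the paper: both substitute $\lambda - x_t y_t = (x_{t+1}-x_t)/(\eta d)$ from \eqref{app:mFA1} into \eqref{app:mFA2} to obtain the increment $y_{t+1}-y_t = (x_{t+1}^2 - x_t^2)/(2d)$, then reduce to the initial condition. The paper phrases the last step as a telescoping sum rather than an induction, but this is a cosmetic difference only.
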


\begin{proof}
From equation \eqref{app:mFA1}, $\frac{x_{t+1}-x_t}{\eta d} = \lambda - x_t y_t$; substituting in equation \eqref{app:mFA2}, we obtain the result:
\begin{align}
    y_{t+1} &= y_t + \frac{(x_{t+1}+x_t)(x_{t+1}-x_t)}{2d} = y_t + \frac{x_{t+1}^2 - x^2_t}{2d} \nonumber \\
    & = y_0 + \sum_{j=0}^t \frac{x_{j+1}^2 - x^2_j}{2d} = \underbrace{y_0 - \frac{x_0^2}{2d}}_{=0} + \frac{x_{t+1}^2}{2d} 
\end{align}
via a telescoping sum.
\end{proof}

Substituting the expression of $y_t$ back in equation \eqref{app:mFA1}, we have a recurrence relation for $x_t$ of the form:
\begin{gather}
    x_{t+1} = x_t + \eta d \lambda  \le(1 - \frac{x_t^3}{r^3 } \ri) =: f(x_t).
\end{gather}
where $r:= \sqrt[3]{2d\lambda}$.

\begin{lemma}\label{lemma1}
With parameter
\begin{gather}
    \eta < \frac{2}{3 r^2} = \frac{2}{3(2d\lambda)^{\frac{2}{3}}},\label{stepsize}
\end{gather}
the function $f(x) = x + \eta d\lambda \le(1- \frac{x^3}{r^3}\ri)$ maps the interval $[0,r]$ into itself: $\forall\ x \in [0,r] $, $f(x) \in [0,r]$.
\end{lemma}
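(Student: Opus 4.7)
The plan is to analyze $f(x)=x+\eta d\lambda(1-x^3/r^3)$ directly on the interval $[0,r]$ by checking the two endpoints-of-image conditions separately: $f(x)\ge 0$ and $f(x)\le r$.

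First I would rewrite $f$ in a form that makes the cubic factorization visible. Since $r^3=2d\lambda$ by definition of $r$, one has $\eta d\lambda = \tfrac{\eta}{2}r^3$, and hence
\begin{equation*}
f(x)=x+\tfrac{\eta}{2}(r^3-x^3).
\end{equation*}
The lower bound is then immediate: for $x\in[0,r]$ we have $r^3-x^3\ge 0$, so $f(x)\ge x\ge 0$. No use of the step-size condition is needed for this half.

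Next I would establish the upper bound $f(x)\le r$ by factoring $r-f(x)$. Using $r^3-x^3=(r-x)(r^2+rx+x^2)$, we get
\begin{equation*}
r-f(x)=(r-x)-\tfrac{\eta}{2}(r-x)(r^2+rx+x^2)=(r-x)\Bigl[1-\tfrac{\eta}{2}(r^2+rx+x^2)\Bigr].
\end{equation*}
On $[0,r]$ the factor $r-x$ is nonnegative, so it suffices to control the bracket. The quadratic $r^2+rx+x^2$ is increasing in $x$ on $[0,r]$, with maximum $3r^2$ attained at $x=r$. Hence the bracket is at least $1-\tfrac{3\eta r^2}{2}$, which is nonnegative precisely under the assumed bound $\eta<\tfrac{2}{3r^2}=\tfrac{2}{3(2d\lambda)^{2/3}}$. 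Combining the two inequalities gives $f(x)\in[0,r]$ for every $x\in[0,r]$, which is the claim. Since the computation is a direct algebraic check, there is no real obstacle; the only point requiring care is to use $r^3=2d\lambda$ consistently so that the step-size threshold matches the one stated in the lemma.
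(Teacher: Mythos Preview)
Your argument is correct. The paper's proof proceeds slightly differently: it computes $f'(x)=1-\tfrac{3\eta}{2}x^2$, observes that $f$ is increasing on $[0,\sqrt{2/(3\eta)}]$, and then uses the step-size condition $\eta<\tfrac{2}{3r^2}$ to ensure $r<\sqrt{2/(3\eta)}$, so that monotonicity on $[0,r]$ gives $0<f(0)\le f(x)\le f(r)=r$. Your approach instead factors $r-f(x)=(r-x)\bigl[1-\tfrac{\eta}{2}(r^2+rx+x^2)\bigr]$ directly and bounds the quadratic factor by $3r^2$. Both routes are elementary and use the step-size threshold in the same place; the paper's version has the minor side benefit of explicitly establishing that $f$ is increasing on $[0,r]$, while yours avoids differentiation altogether and makes the role of the cubic factorization transparent.
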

\begin{proof}
 By inspecting the function $f$ and its derivative we can conclude that $f(x)  \geq 0$, $\forall \ x \in [0,r]$ and $f$ is increasing on the interval $\le[0,\sqrt{\frac{2}{3\eta}}\ri]$. Notice also that $f(r)=r$. Imposing \eqref{stepsize} guarantees that $r< \sqrt{\frac{2}{3\eta}}$, therefore 
 $$\eta d\lambda = f(0) \leq f(x) \leq f(r), \qquad \forall \ x \in [0,r].$$ 
\end{proof}

\begin{lemma}\label{lemma2}
In the same hypotheses as Lemmas \ref{lemma0} and \ref{lemma1}, the sequence $\{x_t\}_{t\geq 0}$ is bounded, positive and increasing: 
\begin{gather}
    0\leq x_t\leq r \qquad \text{and} \qquad x_t\leq x_{t+1}, \qquad \forall \ t\geq 0.
\end{gather}
\end{lemma}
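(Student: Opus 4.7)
The plan is to prove Lemma \ref{lemma2} by a simple induction on $t$, combining the invariant interval property from Lemma \ref{lemma1} with a direct sign argument for monotonicity. Both claims follow almost immediately once the recurrence is written as $x_{t+1} = f(x_t)$.

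First I would establish boundedness and positivity simultaneously by strong induction. The base case $x_0 = 0 \in [0,r]$ is immediate from the initialization. For the inductive step, assuming $x_t \in [0,r]$, Lemma \ref{lemma1} states precisely that $f$ maps $[0,r]$ into itself under the step-size condition $\eta < 2/(3r^2)$; hence $x_{t+1} = f(x_t) \in [0,r]$. This closes the induction and gives $0 \leq x_t \leq r$ for all $t \geq 0$.

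Second, I would prove monotonicity as a direct consequence of the boundedness already established. Writing
\begin{equation}
x_{t+1} - x_t = \eta d \lambda \left( 1 - \frac{x_t^3}{r^3} \right),
\end{equation}
we observe that $\eta, d, \lambda > 0$ by hypothesis, while $0 \leq x_t \leq r$ implies $x_t^3 \leq r^3$, so that $1 - x_t^3/r^3 \geq 0$. Therefore $x_{t+1} - x_t \geq 0$, i.e., $x_t \leq x_{t+1}$ for every $t \geq 0$.

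There is really no main obstacle here: the content of the lemma is already encoded in Lemma \ref{lemma1} (which does the nontrivial work of checking that $r < \sqrt{2/(3\eta)}$ and that $f$ is increasing on the relevant interval). The only subtle point worth emphasizing is that the monotonicity argument relies on the positivity half of Lemma \ref{lemma2} (to ensure $x_t^3 \leq r^3$ rather than merely $|x_t| \leq r$), which is why I would order the two claims as above rather than proving monotonicity first.
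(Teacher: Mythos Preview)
Your proposal is correct and matches the paper's proof essentially line for line: induction on $t$ using Lemma~\ref{lemma1} to keep $x_t\in[0,r]$, then the sign of $r^3 - x_t^3$ (equivalently $2d\lambda - x_t^3$) to get $x_{t+1}\geq x_t$. The paper presents it slightly more tersely but there is no substantive difference in approach.
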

\begin{proof}
 For $t=0$, trivially $0\leq x_0=0\leq r$. By induction, assume that $0\leq x_t \leq r$, then by Lemma \ref{lemma1}
 \begin{gather}
     0\leq x_{t+1} = x_t + \eta d\le(\lambda - \frac{x_t^3}{2d}\ri) = f(x_t) \leq f(r) = r.
 \end{gather}
 Furthermore, 
     \begin{gather}
        x_{t+1} = x_t + \frac{\eta}{2}\le(2d\lambda - x_t^3 \ri) \geq x_t + \frac{\eta}{2}\le(2d\lambda - (\sqrt[3]{2d\lambda})^3 \ri) = x_t.
    \end{gather}
\end{proof}

We can now prove Theorem \ref{conv_discreteFA} that we rewrite here for convenience:
\begin{thm}
\label{app:conv_discreteFA}
With initial conditions $(x_0,y_0)=(0,0)$, $\forall \ \lambda, d>0$ and with step size 
$\eta <  \frac{2}{3(2d\lambda)^{\frac{2}{3}}}$, 
the following convergence result hold:
\begin{gather}
    \le|  x_t y_t - \lambda \ri| \leq 3\lambda \le(1-\frac{\eta}{2}(2d\lambda)^{\frac{2}{3}} \ri)^{t}.
\end{gather}
\end{thm}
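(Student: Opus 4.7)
The plan is to use the three preparatory lemmas to reduce the two-dimensional recurrence to a scalar recurrence for $x_t$, then establish geometric contraction of the quantity $r - x_t$, where $r := \sqrt[3]{2d\lambda}$.

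First, by Lemma \ref{lemma0} the initialization $(x_0,y_0)=(0,0)$ forces $y_t = x_t^2/(2d)$ for every $t$, so
\[
x_t y_t - \lambda \;=\; \frac{x_t^3 - r^3}{2d}.
\]
Substituting this into \eqref{app:mFA1} collapses the system to the single recurrence $x_{t+1} = x_t + \tfrac{\eta}{2}(r^3 - x_t^3)$, and by Lemma \ref{lemma2} the iterates satisfy $0 \leq x_t \leq r$ for all $t$, so $r - x_t \geq 0$.

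Next I would factor $r^3 - x_t^3 = (r-x_t)(r^2 + r x_t + x_t^2)$, which immediately rewrites the recurrence as
\[
r - x_{t+1} \;=\; (r - x_t)\Bigl[\,1 \;-\; \tfrac{\eta}{2}\bigl(r^2 + r x_t + x_t^2\bigr)\Bigr].
\]
Since $0 \leq x_t \leq r$ we have $r^2 \;\leq\; r^2 + r x_t + x_t^2 \;\leq\; 3r^2$, and the step-size hypothesis $\eta < \tfrac{2}{3r^2}$ keeps the bracket nonnegative (this is also what underlies the bound $x_{t+1}\leq r$ in Lemma \ref{lemma1}). In particular, the bracket is at most $1 - \tfrac{\eta r^2}{2}$, so
\[
r - x_{t+1} \;\leq\; (r - x_t)\bigl(1 - \tfrac{\eta r^2}{2}\bigr).
\]
Iterating from $x_0 = 0$ yields $r - x_t \leq r\bigl(1 - \tfrac{\eta r^2}{2}\bigr)^t$.

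Finally, applying the same factorization and the uniform bound $r^2 + r x_t + x_t^2 \leq 3r^2$ a second time gives
\[
|x_t y_t - \lambda| \;=\; \frac{r^3 - x_t^3}{2d} \;\leq\; \frac{3r^2 (r - x_t)}{2d} \;\leq\; \frac{3r^3}{2d}\bigl(1 - \tfrac{\eta r^2}{2}\bigr)^t \;=\; 3\lambda\bigl(1 - \tfrac{\eta}{2}(2d\lambda)^{2/3}\bigr)^t,
\]
which is the claimed bound. There is no serious obstacle: all the delicate work has been front-loaded into Lemmas \ref{lemma0}--\ref{lemma2} (the reduction to a scalar system, the invariance of $[0,r]$ under $f$, and the monotonicity of $x_t$). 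The only point that requires attention is using the step-size condition to guarantee that the contraction factor $1 - \tfrac{\eta}{2}(r^2 + r x_t + x_t^2)$ stays in $[0,1)$ uniformly in $t$; one sharper variant, tracking $x_t$ near $r$, would yield the tighter local rate $1 - \tfrac{3\eta r^2}{2}$, but the uniform bound above is sufficient for the stated result.
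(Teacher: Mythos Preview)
Your proof is correct and follows essentially the same route as the paper: reduce to the scalar recurrence via Lemma~\ref{lemma0}, invoke Lemmas~\ref{lemma1}--\ref{lemma2} for the invariance of $[0,r]$, factor $r^3-x_t^3$, and bound the contraction factor. In fact your treatment of the bracket $1-\tfrac{\eta}{2}(r^2+rx_t+x_t^2)$ is more careful than the paper's own proof, which uses the bound $x_t^2+rx_t+r^2\leq 3r^2$ in the wrong direction and ends up writing the rate $\bigl(1-\tfrac{3\eta}{2}r^2\bigr)^t$ inside the proof; your observation that this sharper rate is only \emph{local} (valid once $x_t$ is close to $r$) is exactly right, and the global bound $\bigl(1-\tfrac{\eta}{2}r^2\bigr)^t$ that you derive is the one that actually matches the theorem statement.
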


\begin{oss}
Note that if $\eta = \frac{2}{(2d\lambda)^{\frac{2}{3}}}$ (which is outside the range prescribed in Theorem \ref{conv_discreteFA}) and $x_0=0$, we achieve convergence in one step: $x_t = \sqrt[3]{2d\lambda}$ $\forall \ t> 0$.
\end{oss}
\begin{proof}
By Lemma \ref{lemma2},  $\exists \ \ell \in \R_{\geq 0}$ such that $x_t \to \ell$ as $t\to +\infty$. Furthermore, by solving
\begin{gather}
    \ell = \ell + \eta d \lambda \le(1 - \frac{\ell^3}{r^3}\ri)
\end{gather}
it follows that $\ell = r= \sqrt[3]{2d\lambda}$. Finally, we have
\begin{align}
    \le|x_{t} -  r \ri| &= \le| x_{t-1} - r + \frac{\eta}{2} \le( r^3-x_{t-1}^3\ri) \ri| = \le|x_{t-1}-r \ri| \le|1 - \frac{\eta}{2} \le( x_{t-1}^2+r x_{t-1}+r^2\ri) \ri|\nonumber \\
    &\leq  \le|x_{t-1}-r \ri| \le(1 - \frac{3\eta}{2}r^2 \ri) \nonumber \\
    & \leq r \le(1-\frac{3\eta}{2}r^2 \ri)^{t}
\end{align}
Note that $0<1-\frac{3\eta}{2}r^2 < 1$, since $ \eta < \frac{2}{3r^{2}}$. In conclusion, thanks to the arguments above and Lemma \ref{lemma0}, we have: as $t \to \infty$
\begin{align}
    \le| x_t y_t -\lambda \ri| &= \le|\frac{x_t^3}{2d} - \lambda \ri|  = \frac{1}{2d}\le|x_t^2+ r x_t+ r^2 \ri| \le|x_t - r \ri| \nonumber \\
    &\leq \frac{3r^2}{2d} \le|x_t - r \ri| \leq \frac{3r^3}{2d} \le(1-\frac{3\eta}{2}r^2 \ri)^{t} \nonumber \\
    & = 3\lambda \Big(1-\frac{3\eta}{2}r^2 \Big)^{t}.
\end{align}
\end{proof}

For the general $L$-layer case, we first recall Theorem \ref{conv_discreteFA_multiL}.
\begin{thm}
Consider the following system:
\begin{align}
        \theta_{L}^{( t+1)} &= \theta_{L}^{(t)} + \eta  \theta_{L-1}^{\le(t + \tfrac{1}{2}\ri)} \ldots \theta_{1}^{\le(t + \tfrac{1}{2}\ri)}  \le(\lambda - \theta_{L}^{(t)}\ldots \theta_{1}^{(t)}\ri)  \\
        \theta_{L-1}^{(t+1)} &= \theta_{L-1}^{(t)} + \eta d_{L-1} { \theta_{L-2}^{\le(t + \tfrac{1}{2}\ri)} \ldots \theta_{1}^{\le(t + \tfrac{1}{2}\ri)} } \le(\lambda - \theta_{L}^{(t)}\ldots \theta_{1}^{(t)}\ri) \\
        &\vdots \\
         \theta_{2}^{(t+1)} &= \theta_{2}^{(t)} + \eta d_2 { \theta_{1}^{\le(t + \tfrac{1}{2}\ri)}} \le(\lambda - \theta_{L}^{(t)}\ldots \theta_{1}^{(t)} \ri)  \\
        \theta_{1}^{(t+1)} &= \theta_{1}^{(t)} + \eta d_1 \le(\lambda - \theta_{L}^{(t)}\ldots \theta_{1}^{(t)} \ri) \label{app:FAdiscrete_Llayer1}
    \end{align}
with zero initialization $\theta_\ell^{(0)} = 0$, $\forall \ \ell=1,\ldots, L$. Assume that $ \lambda, d_\ell>0$, $\forall \ \ell = 1,\ldots,  L-1$, and the constant step size satisfies
$$\eta < \frac{1}{d_1\gamma \le( \mathfrak{K} \lambda^{\gamma-1}\ri)^{\frac{1}{\gamma}}}$$ 
where $\gamma = 2^L-1$ and $\mathfrak{K}$ is a suitable positive constant depending on the FA constants $d_1,\ldots, d_{L-1}$. Then,   the product $\prod_{\ell=1}^L\theta_\ell^{(t)}$ linearly converges to the true signal $\lambda$:
\begin{gather}
   \le|\theta_L^{(t)}\ldots \theta_1^{(t)} - \lambda\ri|   \leq C_\lambda \le( 1 - \eta d_1\gamma \le(\mathfrak{K} \lambda^{\gamma-1} \ri)^{\frac{1}{\gamma}} \ri)^t
\end{gather}
for some constant $C_\lambda \in \R_{>0}$ only dependent on $\lambda$.
\end{thm}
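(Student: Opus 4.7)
\textbf{Proof plan for Theorem \ref{conv_discreteFA_multiL}.}

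The plan is to reduce the $L$-dimensional recurrence to a single scalar recurrence for $\theta_1^{(t)}$, by mimicking in the discrete setting the telescoping that makes $\theta_\ell(t)$ a monomial in $\theta_1(t)$ in the continuous case (Appendix \ref{app:FAconv_Llayers_proof}). The midpoint rule in the discrete scheme is precisely what makes this telescoping go through. First, I would prove by induction on $\ell$ the identity
\begin{equation}
\theta_\ell^{(t)} \;=\; B_\ell \bigl(\theta_1^{(t)}\bigr)^{2^{\ell-1}}, \qquad \ell = 1,\ldots,L,\quad t\geq 0, \label{pl:powerlaw}
\end{equation}
with $B_1 = 1$ and $B_\ell$ an explicit positive constant depending on $d_1,\ldots,d_\ell$, namely $B_\ell = (d_\ell/d_1)\,2^{-(\ell-1)}\prod_{j=1}^{\ell-1}B_j$. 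The base cases $\ell=1,2$ mirror Lemma \ref{lemma0}: from the first-layer update $(\theta_1^{(t+1)}-\theta_1^{(t)})/(\eta d_1) = \lambda - \prod_j\theta_j^{(t)}$, the second-layer update telescopes via $(a+b)(b-a) = b^2 - a^2$ with $a=\theta_1^{(t)}$, $b=\theta_1^{(t+1)}$, giving $\theta_2^{(t)} = (d_2/(2d_1))(\theta_1^{(t)})^2$ thanks to zero initialization.

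For the inductive step to $\ell$, assume \eqref{pl:powerlaw} holds for $1,\ldots,\ell-1$. Then using the midpoint $\theta_j^{(t+1/2)} = \tfrac{1}{2}(\theta_j^{(t)}+\theta_j^{(t+1)})$ and the algebraic identity
\begin{equation}
\prod_{j=0}^{n-1}\bigl(a^{2^j}+b^{2^j}\bigr) \;=\; \frac{b^{2^n}-a^{2^n}}{b-a}, \label{pl:identity}
\end{equation}
applied with $a=\theta_1^{(t)}$, $b=\theta_1^{(t+1)}$ and $n=\ell-1$, one checks that $\theta_{\ell-1}^{(t+1/2)}\cdots\theta_1^{(t+1/2)}\,(\theta_1^{(t+1)}-\theta_1^{(t)})$ is proportional to $(\theta_1^{(t+1)})^{2^{\ell-1}}-(\theta_1^{(t)})^{2^{\ell-1}}$. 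Summing the update for $\theta_\ell$ over a telescope and using $\theta_\ell^{(0)}=\theta_1^{(0)}=0$ then yields \eqref{pl:powerlaw} with the stated $B_\ell$. This is the main technical step; everything else is machinery on a scalar polynomial recurrence.

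Substituting \eqref{pl:powerlaw} into the first-layer equation \eqref{app:FAdiscrete_Llayer1} produces the scalar recurrence
\begin{equation}
x_{t+1} \;=\; x_t + \eta d_1\bigl(\lambda - \mathfrak{K} x_t^{\gamma}\bigr), \qquad x_t := \theta_1^{(t)},\quad \mathfrak{K} := \prod_{\ell=1}^L B_\ell,\quad \gamma = 2^L - 1, \label{pl:scalar}
\end{equation}
with $x_0 = 0$ and fixed point $r := (\lambda/\mathfrak{K})^{1/\gamma}$. The next step is to show, in direct analogy with Lemmas \ref{lemma1}--\ref{lemma2}, that the map $f(x)=x+\eta d_1(\lambda - \mathfrak{K}x^\gamma)$ sends $[0,r]$ into itself and is increasing on $[0,r]$: the derivative condition $f'(x)=1-\eta d_1\gamma\mathfrak{K}x^{\gamma-1}\geq 0$ on $[0,r]$ is equivalent to $\eta d_1\gamma\mathfrak{K}r^{\gamma-1}\leq 1$, and $\mathfrak{K}r^{\gamma-1} = (\mathfrak{K}\lambda^{\gamma-1})^{1/\gamma}$, which is exactly the stated bound on $\eta$. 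Hence $\{x_t\}$ is monotone increasing and bounded above by $r$, so it converges; solving $x = f(x)$ on $[0,r]$ forces the limit to be $r$.

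Finally, exponential convergence follows from the factorization $x^\gamma - r^\gamma = (x-r)\sum_{j=0}^{\gamma-1}x^{\gamma-1-j}r^j$: writing
\begin{equation}
x_{t+1}-r \;=\; (x_t-r)\Bigl(1 - \eta d_1\mathfrak{K}\sum_{j=0}^{\gamma-1}x_t^{\gamma-1-j}r^j\Bigr),
\end{equation}
the contraction factor is bounded in $[0,1)$ under the step-size condition, yielding $|x_t - r|\leq r\,q^t$ for an explicit $q = 1 - \eta d_1\gamma(\mathfrak{K}\lambda^{\gamma-1})^{1/\gamma}$ (up to a constant absorbed into $C_\lambda$). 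The product conversion $\prod_\ell\theta_\ell^{(t)} = \mathfrak{K}x_t^\gamma$ combined with $|\mathfrak{K}x_t^\gamma - \lambda| = \mathfrak{K}|x_t-r|\sum x_t^{\gamma-1-j}r^j \leq \gamma\mathfrak{K}r^{\gamma-1}|x_t-r|$ gives the desired bound with $C_\lambda = \gamma\mathfrak{K}r^\gamma = \gamma\lambda$. The main obstacle is carrying the inductive identity \eqref{pl:powerlaw} through; once that reduction is secured, the rest closely parallels the $L=2$ argument already established.
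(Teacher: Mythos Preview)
Your proposal is correct and follows essentially the same route as the paper: reduce the system to a scalar recurrence via the power-law identity $\theta_\ell^{(t)} = C_\ell\bigl(\theta_1^{(t)}\bigr)^{2^{\ell-1}}$ obtained by telescoping under zero initialization, then apply the monotone--bounded argument (the paper's Lemma~\ref{lemma18}) and the factorization of $x^\gamma - r^\gamma$ to extract the linear rate. The only difference is cosmetic: you make the telescoping identity $\prod_{j=0}^{n-1}(a^{2^j}+b^{2^j}) = (b^{2^n}-a^{2^n})/(b-a)$ explicit, whereas the paper leaves that inductive step as ``easy to prove by induction''.
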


\begin{proof} 
We consider the last equation \eqref{app:FAdiscrete_Llayer1} and substitute
\begin{gather} 
 \lambda - \theta_{L}^{(t)}\ldots \theta_{1}^{(t)}  = \frac{\theta_{1}^{(t+1)} - \theta_{1}^{(t)}}{\eta d_1} 
\end{gather}
 in the second to last one to obtain
\begin{gather}
    \theta_{2}^{(t+1)} = \theta_{2}^{(t)} +  \frac{d_2}{2d_1} \le[\le(\theta_{1}^{(t+1)}\ri)^2 - \le( \theta_{1}^{(t)}\ri)^2\ri] \qquad \Rightarrow  \qquad \theta_2^{(t)} = \frac{d_2}{2d_1} \le( \theta_{1}^{(t)}\ri)^2
\end{gather}
by telescoping sum.  The same substitution strategy can be consecutively applied to the third to last equation, fourth to last equation and so on.
In general, it is easy to prove by induction that $\forall \ \ell =2,\ldots, L$ we have
\begin{gather}
     \theta_\ell^{(t+1)} = \theta_\ell^{(t)} +  C_\ell \le[ \le(\theta_1^{(t+1)}\ri)^{2^{\ell-1}} - \le(\theta_1^{(t)}\ri)^{2^{\ell-1}}\ri] \qquad \Rightarrow \qquad \theta_\ell^{(t)} =  C_\ell  \le(\theta_1^{(t)}\ri)^{2^{\ell-1}}
\end{gather}
for some $C_\ell = C_\ell(d_1,\ldots, d_\ell) \in \R_{>0}$.

Once derived the relations between the sequences $\{\theta_\ell^{(t)}\}_{\ell =2,\ldots, L}$ and $\{\theta_1^{(t)}\}$, we can go back to the recurrence equation \eqref{app:FAdiscrete_Llayer1} for $\theta_1^{(t)}$ to obtain
\begin{gather}
     \theta_1^{(t+1)} = \theta_1^{(t)} + \eta d_1\le[ \lambda - \mathfrak{K} \le(\theta_1^{(t)}\ri)^\gamma \ri]
\end{gather}
where $\mathfrak{K}=\prod_{\ell =1}^L C_\ell \in \R_{>0}$ and $\gamma = \sum_{\ell=0}^{L-1} 2^{\ell} = 2^{L}-1$.

\begin{lemma}\label{lemma18}
With initial condition $\theta_1^{(t)}=0$ and for $\eta < \frac{1}{d_1\gamma \le( \mathfrak{K} \lambda^{\gamma-1}\ri)^{\frac{1}{\gamma}}}$, the sequence $\{\theta_1^{(t)}\}$ is positive, increasing and bounded by $r := \sqrt[\gamma]{\frac{\lambda}{\mathfrak{K}}}$. 
\end{lemma}
\begin{proof}
The proof follows the same argument as in the 2-layer case. The odd polynomial $f(x) = x + \eta d_1\le( \lambda -  \mathfrak{K} x^\gamma \ri)$ has at least one fixed points at $x=r$ and $f(0) = \eta d_1\lambda >0$. Furthermore, $f([0,r]) \subseteq [0,r]$. Indeed, the function is increasing on the interval $[0,x_{\rm max}]$ with $x_{\rm max} = \le(\frac{1}{\eta d_1\gamma \mathfrak{K}}\ri)^{\frac{1}{\gamma-1}}$. If $\eta < \frac{1}{d_1\gamma \le( \mathfrak{K} \lambda^{\gamma-1}\ri)^{\frac{1}{\gamma}}}$ then $r < x_{\rm max}$, implying that 
$$ 0<\eta d_1\lambda = f(0) \leq f(x) \leq f(r) = r, \qquad \forall \ x\in[0,r].$$
It follows (by induction) that $0\leq \theta_1^{(t)}\leq r$ $\forall \ t\geq 0$ and $\theta_1^{(t)}\leq \theta_1^{(t+1)}$.
\end{proof}

From Lemma \ref{lemma18} we can conclude that $\theta_1^{(t)} \to r$ and
\begin{align}
    \le|\theta_1^{(t)} - r \ri| &= \le|\theta_1^{(t-1)} - r - \eta d_1\mathfrak{K}  \le( \le(\theta_1^{(t-1)}\ri)^\gamma - r^\gamma  \ri) \ri| = \le|\theta_1^{(t-1)} - r \ri| \le| 1 - \eta d_1 \mathfrak{K}\sum_{j=0}^{\gamma -1} \le(\theta_1^{(t-1)}\ri)^{\gamma-1-j}r^j  \ri| \nonumber \\
    &\leq \le|\theta_1^{(t-1)} - r \ri| \le| 1 - \eta d_1\mathfrak{K} \gamma r^{\gamma-1}  \ri| = \le|\theta_1^{(t-1)} - r \ri| \le( 1 - \eta d_1 \mathfrak{K} \gamma \le( \frac{\lambda}{\mathfrak{K}}\ri)^{\frac{\gamma-1}{\gamma}}  \ri) \nonumber \\
    &\leq r \le( 1 - \eta d_1 \gamma \le(\mathfrak{K} \lambda^{\gamma-1} \ri)^{\frac{1}{\gamma}}  \ri)^t.
\end{align}
Finally, we have the following linear rate of convergence
\begin{gather}
\le|\theta_L^{(t)}\ldots \theta_1^{(t)} - \lambda\ri|  = \le| \mathfrak{K} \le( \theta_1^{(t)}\ri)^\gamma - \lambda \ri| \leq C_\lambda \le|\theta_1^{(t)} - r \ri| \leq \tilde{C}_\lambda \le( 1 - \eta d_1\gamma \le(\mathfrak{K} \lambda^{\gamma-1} \ri)^{\frac{1}{\gamma}} \ri)^t.
\end{gather}
\end{proof}



\subsection{Discretization under the Euler forward method}
\label{app:convEuler}
By discretizing the FA dynamics via the standard Euler forward method, we obtain the following recurrence relation for the iterates: 
\begin{align}
    &x_{t+1} = x_t + \eta  d \le(\lambda -  x_t y_t \ri) \label{app:FA1Euler}\\
    &y_{t+1} =y_t + \eta x_t \le(\lambda - x_t y_t \ri) 
\end{align}
with initial condition $x_0=y_0 =0$. If we use the same substitution strategy as before ($\lambda -x_t y_t = \frac{x_{t+1}-x_t}{\eta d}$), we get
\begin{align}
    y_{t+1} &= y_t + \eta x_t \frac{x_{t+1}-x_t}{\eta d} = y_t + \frac{x_t}{d}(x_{t+1}-x_t) \nonumber \\
    &= y_t + \frac{1}{2d} \le[(x_{t+1}+x_t)-(x_{x+1}-x_t)  \ri](x_{t+1}-x_t) = y_t +\frac{1}{2d}\le(x_{t+1}^2 - x_t^2\ri) - \frac{1}{2d}\le(x_{t+1}-x_t\ri)^2 \nonumber \\
    &= \underbrace{y_0 - \frac{x_0^2}{2d}}_{=0} + \frac{1}{2d}x_{t+1}^2 - \frac{1}{2d}\sum_{j=1}^t (x_{j}-x_{j-1})^2, \label{eqyt}
\end{align}
i.e.
\begin{gather}
    y_t = \frac{1}{2d} x_t^2 - \frac{1}{2d}S_{t} \qquad \forall \ t\geq 0, \label{eq_yt_Euler}
\end{gather}
where $S_t := \sum_{i=1}^t \le(x_i - x_{i-1}\ri)^2$ and $S_0=0$. Notice that $0\leq S_t\leq S_{t+1}$ $\forall \ t\geq 0$, by definition. 

Plugging \eqref{eq_yt_Euler} back into equation \eqref{app:FA1Euler}, we get a highly nonlinear recurrence relation where \emph{all} the past terms of the sequence (up to time $t$) are involved in determining the subsequent term $x_{t+1}$:
\begin{gather}
    x_{t+1} = x_t +\frac{\eta}{2}  \Big( 2d\lambda - x_t^3 + x_t S_{t}  \Big). \label{eqxt}
\end{gather}



We will now provide a proof of Theorem \ref{conv_discreteEulerFA}. The proof is quite long and it is divided into four parts for readability. 
We will first rewrite the statement in the following way:
\begin{thm}
 Consider the sequence $\{(x_t,y_t)\}$ defined recursively in \eqref{eq_yt_Euler}-\eqref{eqxt} with zero initialization $x_0=y_0 =0$. For any $\lambda, d>0$, set the constant step size as
\begin{gather}
    \eta < \min \le\{ \frac{2}{3(S^*+1)^2}, \frac{2}{\max_{(x,s) \in \mathcal{R}} P(x,S)} \ri\}, \label{etaboundthm}
\end{gather} 
where $P(x,S) = 2d\lambda -x^3+xS$, $S^*$ is the unique positive solution of $P(x,x)=0$, 
and $\mathcal{R}$ is the following compact, convex set
\begin{gather}
    \mathcal{R} := \le\{(x,S) \in \R^2_{\geq 0} \mid S\leq x, \ P(x,S) \geq 0 \ri\}.
\end{gather}
Then, the product $x_ty_t$ linearly converges to the true signal $\lambda$: \begin{gather} 
\le| x_ty_t - \lambda\ri| \leq C q^t. 
\end{gather}
for some $C>0$ and $0<q<1$.
\end{thm}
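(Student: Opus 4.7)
The plan is to follow the three-step template used for Theorem \ref{conv_discreteFA} (invariance of a compact region, monotone convergence, linear contraction of a residual), but with extra bookkeeping to control the discretization-error sequence $S_t$. The reduction \eqref{eq_yt_Euler}--\eqref{eqxt} already recasts the problem as the coupled scalar dynamics
\begin{equation}
    x_{t+1} = x_t + \tfrac{\eta}{2} P_t, \qquad S_{t+1} = S_t + \tfrac{\eta^2}{4} P_t^2, \qquad P_t := P(x_t,S_t),
\end{equation}
with $(x_0,S_0) = (0,0)$, and gives the key identity $2d(\lambda - x_ty_t) = P_t$; it therefore suffices to prove $P_t \to 0$ linearly.

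The main step is to show by induction that $(x_t,S_t) \in \mathcal{R}$ for all $t\geq 0$. A mean-value expansion for $x\mapsto x^3$ together with the product rule for $x_{t+1}S_{t+1} - x_tS_t$ gives
\begin{equation}
    P_{t+1} \;=\; P_t\bigl[\,1 - \tfrac{\eta}{2}(3\xi_t^2 - S_t)\,\bigr] \;+\; \tfrac{\eta^2 x_{t+1}}{4} P_t^2, \qquad \xi_t \in [x_t, x_{t+1}].
\end{equation}
The second step-size bound $\eta < 2/\max_{\mathcal{R}} P$ enforces $\tfrac{\eta}{2}P_t \leq 1$, which yields both the coordinate constraint
\begin{equation}
    x_{t+1} - S_{t+1} \;=\; (x_t - S_t) + \tfrac{\eta}{2} P_t\bigl(1 - \tfrac{\eta}{2} P_t\bigr) \;\geq\; 0
\end{equation}
and the per-step bound $x_{t+1} \leq x_t + 1 \leq S^* + 1$, so $\xi_t \in [0, S^*+1]$. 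The first bound $\eta < 2/(3(S^*+1)^2)$ then forces $\tfrac{\eta}{2}(3\xi_t^2 - S_t) \leq 1$, hence $P_{t+1} \geq 0$, closing the induction. Invariance in turn gives $x_t \nearrow \ell$, $S_t \nearrow S_\infty$, and passing to the limit in the update yields $P(\ell,S_\infty) = 0$, i.e.\ $\ell^3 - \ell S_\infty = 2d\lambda$; since $\ell > 0$ (otherwise $P(0,S_\infty) = 2d\lambda \neq 0$), this rearranges to $\ell^2 - S_\infty = 2d\lambda/\ell > 0$, the strict positivity needed for the contraction.

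For the linear rate, the bracketed factor above satisfies $1 - \tfrac{\eta}{2}(3\xi_t^2 - S_t) \to 1 - \tfrac{\eta}{2}(3\ell^2 - S_\infty) \leq 1 - \eta \ell^2 =: q_0 \in (0,1)$ as $t\to\infty$, while the quadratic term is bounded by $\tfrac{\eta^2(S^*+1)}{4} P_t \cdot P_t$ and is therefore negligible once $P_t$ is small. Combining yields $P_{t+1} \leq q P_t$ for any $q \in (q_0, 1)$ and all $t$ beyond a finite transient, which is absorbed into the prefactor to give $|x_t y_t - \lambda| = P_t/(2d) \leq C q^t$. The main obstacle is the invariance step: the $S_t$-update is only $O(\eta^2)$ whereas the $x_t$-update is $O(\eta)$, so the two constraints in \eqref{etaboundthm} play distinct roles---one prevents the per-step motion in $x$ from crossing the boundary $\{P = 0\}$, the other preserves $S \leq x$---and they must be coupled through the boundedness of $P$ on the compact set $\mathcal{R}$ to close simultaneously.
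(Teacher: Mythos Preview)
Your proposal is correct. The invariance step --- showing $(x_t,S_t)\in\mathcal{R}$ by induction, with the two step-size constraints playing the distinct roles you describe --- matches the paper's Lemma~\ref{app:boundedsector} essentially verbatim (the paper applies the mean-value theorem to the full map $P$ rather than to $x\mapsto x^3$ alone, but the algebra is equivalent). The passage from invariance to $x_t\nearrow\ell$, $S_t\nearrow S_\infty$, $P(\ell,S_\infty)=0$, and hence $x_ty_t\to\lambda$, is also the same.

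Where you diverge is in the linear-rate argument. The paper introduces a \emph{moving target} $\ell_{S_t}$, defined as the positive root of $P(\cdot,S_t)=0$, and controls the residual $\ell_{S_t}-x_t$ through three auxiliary lemmas bounding $\ell_{S_t}^2-S_t$, $x_{t+1}-x_t$, and the drift $\ell_{S_{t+1}}-\ell_{S_t}$; this machinery is needed precisely because the target itself moves with $S_t$. You instead reuse the one-step recursion $P_{t+1}=P_t[1-\tfrac{\eta}{2}(3\xi_t^2-S_t)]+\tfrac{\eta^2 x_{t+1}}{4}P_t^2$ already obtained during the invariance step and pass to the limit in its coefficients, so the contraction factor $1-\tfrac{\eta}{2}(3\ell^2-S_\infty)<1$ drops out directly from $\ell^2-S_\infty=2d\lambda/\ell>0$. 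This is more economical: no auxiliary sequence $\{\ell_{S_t}\}$, no extra lemmas. The trade-off is that your contraction holds only after a finite transient (absorbed into $C$), whereas the paper aims for a bound uniform in $t$; both yield the $Cq^t$ form stated in the theorem.
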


\paragraph{Proof of convergence - part 1.} 
In order to prove convergence of the sequence $\{x_t\}$, we will first consider a more general sequence defined by the following recurrence relation
\begin{gather}
    x_{t+1} = x_t +\frac{\eta}{2}  \Big( 2d\lambda - x_t^3 + x_t \alpha_t  \Big) \label{eqxt_alphat}
\end{gather}
where $\{\alpha_t\}$ is an auxiliary sequence that we assume to be increasing, positive and bounded (therefore convergent: $\alpha_t \to \alpha_\infty$ for some $\alpha_\infty \in \R_{> 0}$).


Consider the following fixed-point problem:
\begin{gather}
    x = f(x;\alpha) \qquad \text{with } f(x;\alpha) = x+\frac{\eta}{2}\Big(2d\lambda -x^3 +\alpha x \Big)
\end{gather}
with $x\geq 0$ and $\alpha \in [0,\alpha_\infty]$. Notice that $\forall \ \alpha \geq 0$, we have that $f(0;\alpha) = \eta d \lambda >0$ and $f(x;\alpha) \to -\infty$ as $x\to +\infty$. Therefore, $\forall \ \alpha \geq 0$, there exists a unique fixed point $\ell_\alpha>0$ for the function $f$ (uniqueness follows by directly inspecting the cubic polynomial $f(\cdot;\alpha)$ with $\alpha$ fixed). 

From $\frac{\partial f}{\partial \alpha} = \frac{\eta}{2}x \geq 0$ for $ x\geq 0$, it follows that $f(x;\alpha) \leq f(x;\alpha')$ $\forall \ \alpha \leq \alpha'$. Furthermore, $\ell_\alpha \leq \ell_{\alpha'}$: indeed, $\forall \ \alpha \geq 0$, $\ell_\alpha$ is the unique zero of the cubic polynomial $P_\alpha(x) = -x^3 +\alpha x + 2d\lambda$ on the positive real line ($P_\alpha(0)= 2d\lambda>0$, $P'_\alpha(0) = \alpha>0$ and $P_\alpha(x) \to-\infty$ as $x\to-\infty$); its partial derivative $\frac{\partial P_\alpha}{\partial \alpha} = x \geq 0$, for $x\geq 0$, implies that as $\alpha$ increases the zero $\ell_\alpha$ shifts rightwards on $\R$.

On the other hand, from 
\begin{gather}
    \frac{\partial f}{\partial x} = 1+\frac{\eta}{2}\alpha - \frac{3}{2}x^2 \geq 0 
\end{gather}
we obtain that the function $f$ is increasing on the interval $\le[0,\sqrt{\frac{2}{3\eta} + \frac{\alpha}{3}}\ri]$. In order to ensure monotonicity on the interval $[0,\ell_\alpha]$, we tune the parameter $\eta$ such that
\begin{gather}
    \ell_\alpha \leq \sqrt{\frac{2}{3\eta} + \frac{\alpha}{3}}. 
\end{gather}
By taking the supremum over $\alpha \in [0,\alpha_\infty]$ on the left hand side and the infimum on the right hand side, we have
\begin{gather}
    \ell_{\alpha_\infty} \leq \sqrt{\frac{2}{3\eta}};
\end{gather}
such a bound is satisfied if
\begin{gather}
    \eta \leq \frac{2}{3\ell_{\alpha_\infty}^2}. \label{etaellinfty}
\end{gather}
As it will be clear in the next parts of the proof (see Remark \ref{ellinftyS*}), the value of $\ell_{\alpha_\infty}$ is smaller than $ S^*$, therefore \eqref{etaboundthm} guarantees that the above bound \eqref{etaellinfty} is satisfied.
This implies that 
\begin{enumerate}
    \item $f(x;\alpha) \leq f(x';\alpha)$ $\forall \ x,x' \in [0,\ell_{\alpha_\infty}]$, $x\leq x'$
    \item $f(x;\alpha)$ maps $[0,\ell_{S_\infty}]$ into itself $\forall \ \alpha \in [0,\alpha_\infty]$.
\end{enumerate} 
In conclusion, we have 
\begin{gather}
    f(x;\alpha) \leq f(x';\alpha') \qquad \forall \ x\leq x', \forall \ \alpha \leq \alpha'.
\end{gather}

We are now ready to prove the following lemma.
\begin{lemma}
For $\eta \leq \frac{2}{3\ell^2_{\alpha_\infty}}$, the sequence \eqref{eqxt_alphat} (with zero initial condition) is positive, increasing and bounded: $\forall \ t\geq 0$
\begin{gather}
    0 \leq x_t   \leq x_{t+1} \leq \ell_{\alpha_\infty}
\end{gather}
\end{lemma}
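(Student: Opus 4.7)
The plan is to prove the three claims (positivity, monotonicity, and boundedness by $\ell_{\alpha_\infty}$) simultaneously by a straightforward induction on $t$, leveraging the two properties of $f(x;\alpha) = x + \tfrac{\eta}{2}(2d\lambda - x^3 + \alpha x)$ that were already established just above the lemma: (i) joint monotonicity, i.e.\ $f(x;\alpha) \leq f(x';\alpha')$ whenever $0 \leq x \leq x' \leq \ell_{\alpha_\infty}$ and $0 \leq \alpha \leq \alpha'$, and (ii) invariance, i.e.\ $f([0,\ell_{\alpha_\infty}] \times [0,\alpha_\infty]) \subseteq [0,\ell_{\alpha_\infty}]$. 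Both of these rely crucially on the step-size bound $\eta \leq 2/(3\ell_{\alpha_\infty}^2)$, which ensures $f(\cdot;\alpha)$ is increasing on $[0,\ell_{\alpha_\infty}]$ for every $\alpha \in [0,\alpha_\infty]$. The sequence $\{\alpha_t\}$ being nonnegative, increasing, and bounded by $\alpha_\infty$ is a standing assumption of this part of the argument.

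For the base case, $x_0 = 0 \in [0,\ell_{\alpha_\infty}]$ trivially, and $x_1 = f(0;\alpha_0) = \eta d \lambda > 0 = x_0$. Applying the invariance property to $(x_0,\alpha_0) \in [0,\ell_{\alpha_\infty}] \times [0,\alpha_\infty]$ gives $x_1 \leq \ell_{\alpha_\infty}$. Hence $0 \leq x_0 \leq x_1 \leq \ell_{\alpha_\infty}$.

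For the inductive step, suppose $0 \leq x_t \leq x_{t+1} \leq \ell_{\alpha_\infty}$. Then $x_{t+2} = f(x_{t+1};\alpha_{t+1})$, and invariance applied to $(x_{t+1},\alpha_{t+1})$ immediately gives $x_{t+2} \in [0,\ell_{\alpha_\infty}]$. For the monotonicity $x_{t+1} \leq x_{t+2}$, write $x_{t+1} = f(x_t;\alpha_t)$ and $x_{t+2} = f(x_{t+1};\alpha_{t+1})$; since $x_t \leq x_{t+1}$ (inductive hypothesis) and $\alpha_t \leq \alpha_{t+1}$ (monotonicity of $\{\alpha_t\}$), the joint monotonicity of $f$ yields $f(x_t;\alpha_t) \leq f(x_{t+1};\alpha_{t+1})$, i.e.\ $x_{t+1} \leq x_{t+2}$. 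This closes the induction.

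Since the two key analytic facts were already dispatched in the preceding discussion, I do not anticipate any real obstacle here; the argument is essentially a monotone convergence setup à la Picard iteration for a parameter-dependent map. The one subtlety worth flagging in the writeup is that the step-size hypothesis stated in the lemma, $\eta \leq 2/(3\ell_{\alpha_\infty}^2)$, is the condition that makes both properties (i) and (ii) applicable on the whole interval $[0,\ell_{\alpha_\infty}]$ uniformly in $\alpha \in [0,\alpha_\infty]$, and this is precisely what allows the induction to propagate across steps in which $\alpha_t$ changes.
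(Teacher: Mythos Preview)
Your proof is correct and follows essentially the same approach as the paper: an induction on $t$ using the joint monotonicity of $f(x;\alpha)$ in both arguments and the invariance $f([0,\ell_{\alpha_\infty}];\alpha)\subseteq[0,\ell_{\alpha_\infty}]$ for $\alpha\in[0,\alpha_\infty]$. The paper compresses the inductive step into the single chain $0\leq x_t=f(x_{t-1};\alpha_{t-1})\leq f(x_t;\alpha_t)=x_{t+1}\leq f(\ell_{\alpha_\infty};\alpha_\infty)=\ell_{\alpha_\infty}$, while you spell out invariance and monotonicity separately, but the substance is identical.
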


\begin{proof}
The proof is by induction: for $t=0$ we have $0= x_0\leq x_1 = \eta d \lambda \leq \ell_{\alpha_\infty}$ and assuming that $x_{t-1}\leq x_t$, it follows that 
\begin{gather}
    0\leq x_t = f(x_{t-1};\alpha_{t-1}) \leq f(x_t;\alpha_{t}) = x_{t+1}  \leq f(\ell_{\alpha_\infty};\alpha_\infty) = \ell_{\alpha_\infty}
\end{gather}
\end{proof}

Therefore the sequence \eqref{eqxt_alphat} converges: $x_t \to \omega$ for some $\omega \in \R_{>0}$. By construction, it is clear that $\omega = \ell_{\alpha_\infty}$.
Indeed, given 
\begin{gather}
    x_{t+1} = x_t +\frac{\eta}{2}  \Big( 2d\lambda - x_t^3 + x_t \alpha_{t}  \Big) \label{eqalphaxt}
\end{gather}
and taking the limit as $t\to \infty$ on both sides of the equation, we obtain
\begin{gather}
    \omega = \omega +\frac{\eta}{2}  \Big( 2d\lambda - \omega^3 + \omega \alpha_{\infty}  \Big) , 
\end{gather}
i.e. $\omega$ is a (positive) solution of the fixed point problem $x = f(x;\alpha_\infty)$ whose unique solution is $\ell_{\alpha_\infty}$.


A major problem arises when we consider the case of $\alpha_t =S_t = \sum_{i=1}^{t}(x_{i}-x_{i-1})^2$, i.e. the auxiliary sequence $\{\alpha_t\}$ depends directly on the primary sequence $\{x_t\}$. Clearly, $0\leq \alpha_t\leq \alpha_{t+1}$ for all times $t\geq 0$, but the boundedness property is not straightforward. 


\paragraph{Boundedness of the sequence of partial sums $\{S_t\}$.}

Since $S_{t+1} =S_t + \le(x_{t+1}-x_{x_t} \ri)^2$, consider the recurrence relations:
\begin{align}
    x_{t+1} & = x_t + \frac{\eta}{2}\le(2d\lambda -x_t^3+x_tS_t \ri) \label{fixedpointR2-1}\\ 
    S_{t+1} &=S_t + \frac{\eta^2}{4}\le(2d\lambda -x_t^3 +x_tS_t \ri)^2  \label{fixedpointR2-2}
\end{align}

\begin{lemma}\label{app:boundedsector}
Given the convex, compact set
\begin{gather}
    \mathcal{R} := \le\{(x,S) \in \R^2_{\geq 0} \mid S\leq x, \ P(x,S) \geq 0 \ri\}, \label{setS}
\end{gather}
with $P(x,S) = 2d\lambda -x^3+xS$, and 
\begin{gather}
    \eta < \min \le\{ \frac{2}{3(S^*+1)^2}, \frac{2}{\max_{(x,s) \in \mathcal{R}} P(x,S)} \ri\}, \label{lemma28eta}
\end{gather}
where $S^*$ is the unique positive solution to $P(x,x)=0$,  and with initial values $x_0=S_0=0$, the sequence $\{(x_t,S_t)\}$ defined recursively in \eqref{fixedpointR2-1}-\eqref{fixedpointR2-2} is bounded and lies in the set $\mathcal{R}$.
\end{lemma}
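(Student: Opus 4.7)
The natural approach is induction on $t$: I would show that $(x_t, S_t) \in \mathcal{R}$ for every $t \geq 0$. Boundedness of the sequence then follows immediately because $\mathcal{R}$ is closed and bounded — indeed any $(x,S) \in \mathcal{R}$ satisfies $x^3 - x^2 \leq x^3 - xS \leq 2d\lambda$, which (since $g(x) = x^3 - x^2$ is strictly increasing for $x > 2/3$ and $S^* > 1$) forces $x \leq S^*$ and hence $S \leq S^*$ as well. The base case is immediate: $x_0 = S_0 = 0$ gives $S_0 \leq x_0$ and $P(0,0) = 2d\lambda > 0$.

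For the inductive step I assume $(x_t, S_t) \in \mathcal{R}$ and set $p_t := P(x_t, S_t) \geq 0$. Non-negativity of $x_{t+1} = x_t + \tfrac{\eta}{2} p_t$ and $S_{t+1} = S_t + \tfrac{\eta^2}{4} p_t^2$ is automatic, so only the two defining constraints of $\mathcal{R}$ need verification. For $S_{t+1} \leq x_{t+1}$ I would compute directly
\begin{equation*}
x_{t+1} - S_{t+1} = (x_t - S_t) + \tfrac{\eta}{2}\, p_t \Big(1 - \tfrac{\eta}{2}\, p_t\Big),
\end{equation*}
which is non-negative because the first summand is so by hypothesis, and the step-size bound $\eta < 2 / \max_{\mathcal{R}} P$ guarantees $\eta p_t \leq 2$.

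The condition $P(x_{t+1}, S_{t+1}) \geq 0$ is the real obstacle. The plan is to expand $P$ at the updated point: the cubic-in-$p_t$ terms cancel outright and the quadratic-in-$p_t$ terms combine to a single correction, leaving the clean identity
\begin{equation*}
P(x_{t+1}, S_{t+1}) = p_t \Big[1 - \tfrac{3\eta}{2}\, x_t^2 + \tfrac{\eta}{2}\, S_t - \tfrac{\eta^2}{2}\, x_t\, p_t\Big].
\end{equation*}
To show the bracket is non-negative, I would drop the non-negative term $\tfrac{\eta}{2} S_t$, use $\eta p_t \leq 2$ to bound $\tfrac{\eta^2}{2} x_t p_t \leq \eta x_t$, and invoke $x_t \leq S^*$ (which holds since $S^*$ is the maximum $x$-coordinate attained in $\mathcal{R}$, realized at the boundary point $(S^*, S^*)$). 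The bracket is then bounded below by
\begin{equation*}
1 - \tfrac{3\eta}{2}(S^*)^2 - \eta S^* = 1 - \tfrac{3\eta}{2}\Big[(S^*)^2 + \tfrac{2}{3} S^*\Big] \geq 1 - \tfrac{3\eta}{2}(S^* + 1)^2,
\end{equation*}
which is strictly positive by the first step-size condition $\eta < 2 / (3(S^* + 1)^2)$.

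The main difficulty I anticipate is the algebraic expansion — ensuring the correct cancellations are identified so that the residual terms admit uniform control over $\mathcal{R}$. Once that identity is in hand, the role of each of the two step-size conditions in \eqref{lemma28eta} becomes transparent: the first absorbs the deterministic $\tfrac{3\eta}{2} x_t^2$ and $\eta x_t$ contributions via $x_t \leq S^*$, while the second converts the quadratic-in-$p_t$ correction into a linear-in-$p_t$ one through $\eta p_t \leq 2$.
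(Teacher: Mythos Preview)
Your proof is correct and follows the same overall induction scheme as the paper. The only substantive difference lies in how you establish $P(x_{t+1},S_{t+1})\ge 0$: you expand $P$ directly and obtain the exact identity
\[
P(x_{t+1},S_{t+1})=p_t\Big[1-\tfrac{3\eta}{2}x_t^2+\tfrac{\eta}{2}S_t-\tfrac{\eta^2}{2}x_t\,p_t\Big],
\]
whereas the paper applies the Mean Value Theorem to write $P(x_{t+1},S_{t+1})=p_t\big[1+\tfrac{\eta}{2}(-3\bar x^{\,2}+\bar S)+\tfrac{\eta^2}{4}\bar x\,p_t\big]$ for some intermediate point $(\bar x,\bar S)$ on the segment joining $(x_t,S_t)$ to $(x_{t+1},S_{t+1})$, and then bounds $\bar x\le S^*+1$ to absorb the overshoot beyond $\mathcal{R}$. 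Your direct expansion is slightly cleaner: it avoids the intermediate point entirely and lets you use the sharper bound $x_t\le S^*$, at the cost of an extra linear term $-\eta x_t$ coming from $\tfrac{\eta^2}{2}x_t p_t\le \eta x_t$. Both routes land on the same final inequality $1-\tfrac{3\eta}{2}(S^*+1)^2>0$, so the two step-size conditions in \eqref{lemma28eta} play identical roles in each argument.
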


See \figurename \ \ref{fig:compactS} for a sketch of the region $\mathcal{R}$.

\begin{proof}

The set of fixed points of the system is the set of zeros of the function $P(x,S)$,
\begin{gather}
     \begin{cases} 
     x=x + \frac{\eta}{2}P(x,S) \\ 
     S =S + \frac{\eta^2}{4}P(x,S)^2
     \end{cases} \qquad \Leftrightarrow \qquad P(x,S) = 2d\lambda -x^3+xS = 0,
\end{gather}
which is an algebraic curve in $\R^2$, with solutions in $(x,S) \in \R^2_{\geq 0}$
\begin{align}
    &S = x^2-\frac{2d\lambda}{x};
\end{align}
furthermore, there exists a unique solution $(S^*,S^*) \in \R^2_{>0}$ such that $P(S^*,S^*)=0$: indeed, $S^*$ is the unique solution to the equation $x^3 - x^2 +2d\lambda=0$ and it's explicit expression can be recovered from Cardano's formula (if needed). 

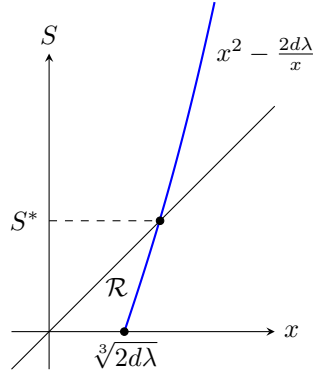
\begin{figure}
\centering
\begin{tikzpicture}[>=stealth, scale=1]

\draw[->] (-.5,0) -- (3,0) node[right] {$x$};
\draw[->] (0,-.5) -- (0,3.7) node[above] {$S$};
\draw[domain=1:2.2,smooth,variable=\x,blue,thick] plot ({\x},{\x*\x-1/\x});
\draw[domain=-.5:3,smooth,variable=\x,black] plot ({\x},{\x});


\node[below] at (1,0) {$\sqrt[3]{2d\lambda}$};

\node at (.9,.6) {$\mathcal{R}$};

\draw[dashed] (0,1.475) -- (1.475,1.475);
\node[left] at (0,1.475) {$S^*$};
 
\draw[fill] (1.475,1.475) circle [radius=0.05];
\draw[fill] (1,0) circle [radius=0.05];

\node[right] at (2.1,3.7) {$x^2-\frac{2d\lambda}{x}$};

\end{tikzpicture}
\caption{The compact, convex set $\mathcal{R} \subset \R^2_{\geq 0}$.}
\label{fig:compactS}
\end{figure}


We will prove the theorem by induction. For $t=0$, trivially $0=S_0 = x_0$ and $P(x_0,S_0) = 2d\lambda >0$. For $t=1$, 
\begin{align}
    &S_1 = \eta^2d^2\lambda^2 \leq x_1 = \eta d \lambda \qquad \text{if }\eta \leq \frac{1}{d\lambda} \label{appeq160}\\
    &P(x_1,S_1) = 2d\lambda - (\eta d \lambda)^3 + \eta d\lambda \cdot (\eta d \lambda^2) = 2d\lambda >0.
\end{align}
It is easy to notice that if \eqref{lemma28eta} is satisfied, then $\eta \leq \frac{1}{d\lambda}$ as required in \eqref{appeq160}. 

Since $\mathcal{R}$ is compact, the value of the polynomial $P(x,S)$ is bounded and positive for all $(x,S) \in \mathcal{R}$:
$$0< 2d \lambda = P(x_0,S_0) \leq \max_{(x,s) \in \mathcal{R}} P(x,S) <+\infty.$$

Assume that $\forall \ s=1,\ldots, t$ we have 
\begin{gather}
    S_s \leq x_s, \qquad P(x_s,S_s) = 2d\lambda -x^3_s + x_s S_s \geq 0
\end{gather}
then, 
\begin{gather}
    S_{t+1} = \underbrace{S_t}_{\leq x_t} + \frac{\eta^2}{4}P(x_t,S_t)^2 \leq x_t + \frac{\eta}{2}P(x_t,S_t) \cdot \underbrace{\frac{\eta}{2} P(x_t,S_t)}_{\leq 1} \leq x_t + \frac{\eta}{2}P(x_t,S_t) = x_{t+1}, 
\end{gather}
thanks to the induction step and \eqref{lemma28eta}.
Additionally,
\begin{align}
    P(x_{t+1},S_{t+1}) = P(x_t, S_t) + \nabla P(\bar{x}, \bar {S}) \cdot \begin{bmatrix}\frac{\eta}{2} P(x_t,S_t) \\ \frac{\eta^2}{4} P(x_t,S_t)^2\end{bmatrix}
\end{align}
for some $(\bar{x}, \bar{S})$ belonging to the line segment connecting the point  $(x_t,S_t)$ to the point $( x_{t+1}, S_{t+1})$, by the Mean Value Theorem;
\begin{align}
    P(x_{t+1},S_{t+1}) 
    &= P(x_t,S_t) + \frac{\eta}{2}P(x_t,S_t) \le(-3\bar{x} + \bar{S} \ri) + \frac{\eta^2}{4}\bar{x} P(x_t,S_t)^2 \nonumber \\
    &= P(x_t,S_t) \le[1 + \frac{\eta}{2}(-3\bar{x}^2 + \bar{S}) + \frac{\eta^2}{4}\bar{x} P(x_t,S_t) \ri] \nonumber \\
    &\geq P(x_t,S_t) \le[1 - \frac{3\eta}{2}\bar{x}^2 \ri] .
\end{align}

Notice now that $0\leq  \bar{x} = x_t + \delta (x_{t+1}-x_t) = x_t + \delta \frac{\eta}{2}P(x_t,S_t) $, for some $\delta \in [0,1]$; since $(x_t,S_t) \in\mathcal{R}$  by induction,
\begin{gather}
    \bar{x} \leq S^* + 1\cdot \underbrace{\frac{\eta}{2}\max_{(x,S) \in \mathcal{R}}P(x,S)}_{\leq 1} \leq S^* + 1.
\end{gather}
Therefore,
\begin{align}
    P(x_{t+1},S_{t+1}) \geq P(x_t,S_t) \le[1 - \frac{3\eta}{2}\bar{x}^2 \ri] \geq P(x_t,S_t) \le[1 - \frac{3\eta}{2}\le( S^* +1 \ri)^2 \ri] \geq 0,
\end{align}
by \eqref{lemma28eta}. The result follows.
\end{proof}



\paragraph{Proof of convergence - part 2.}
Theorem \ref{app:boundedsector} above implies that
the sequence of partial sums $\{S_t =\sum_{i=1}^{t}(x_{i}-x_{i-1})^2 \} $ converges: $S_t \to S_\infty := \sum_{i=0}^\infty (x_{i}-x_{i-1})^2 <+\infty$ and we can resort to the arguments in the first part of the proof to conclude that $x_t \to \ell_{S_\infty}$. 

\begin{oss}\label{ellinftyS*}
Note that $\ell_{S_{\infty}}$ is the unique positive solution to the equation $P(x,S_\infty) =0$ and, since the sequence $\{(x_t,S_t)\}\subset \mathcal{R}$, $\ell_{S_\infty}$ lies on the boundary of $\mathcal{R}$. This implies that $\ell_{S_\infty} \leq S^*$. 
\end{oss}

Finally, the product $x_t y_t \to \lambda$ as $t \to +\infty$.
Indeed, since $x_t \to \ell_{S_\infty} =: \mathcal{L}$, then, by construction $y_t$ will also converge to some value $\hat{ \mathcal{L}}$ (because of \eqref{eq_yt_Euler}). Moreover, $\mathcal{L}$ is the fixed point of the recurrence equation \eqref{eqxt}:
\begin{gather}
    \mathcal{L} = \mathcal{L} +\frac{\eta}{2}  \Big( 2d\lambda - \mathcal{L}^3 + \mathcal{L} S_\infty  \Big) ,
\end{gather}
 which can be alternatively written as
\begin{gather}
    \mathcal{L} = \mathcal{L} + \eta  d \le(\lambda -  \mathcal{L} \hat{ \mathcal{L}} \ri) ;
\end{gather}
$\mathcal{L}$ is fixed point if and only if $\mathcal{L} \hat{ \mathcal{L}} = \lambda$, implying $x_t y_t \to \lambda$.

\paragraph{Linear rates. } The last piece of results that needs to be proven is the linear rate of convergence. Consider the difference $\ell_{S_t} - x_t$, where $\ell_{S_t}$ is the unique positive solution to $P(x,S_t)=0$ and by construction $x_t \leq S_t$:
\begin{align}
    \ell_{S_{t+1}} - x_{t+1}-  &= \ell_{S_{t+1}} - x_t -  \frac{\eta}{2} P(x_t,S_t) \nonumber \\
    &=\ell_{S_{t+1}} - x_t  - \frac{\eta}{2} \le(\ell_{S_t}- x_t \ri)\le(x_t^2+\ell_{S_t}x_t + \frac{2d\lambda}{\ell_{S_t}} \ri)  \nonumber \\
    &\leq \le(\ell_{S_t}-x_t \ri) \underbrace{ \le[1 - \frac{\eta}{2}\le(x_t^2 + x_t\ell_{S_t} + \frac{2d\lambda}{\ell_t} \ri) \ri]}_{=:B_1} + \underbrace{\le(\ell_{S_{t+1}} - \ell_{S_t} \ri)}_{=: B_2} 
\end{align}

We will bound each term $B_1$ and $B_2$ separately. The first term can be easily estimated:
\begin{align}
    B_1 
     \leq  1 - \frac{\eta}{2}\le((S^*)^2 + S^*\ell_{S_\infty} + \frac{2d\lambda}{\ell_0} \ri) 
     = 1 - \eta M
\end{align}
where we defined
\begin{gather}
    M :=  \frac{(S^*)^2 + S^*\ell_{S_\infty} + (2d\lambda)^{\frac{2}{3}}}{2}  \label{Ceta}
\end{gather}
(recall that $\ell_{S_0} =\sqrt[3]{2d\lambda}$).

In order to estimate $B_2$, we will first prove a series of useful lemmas that illustrate some properties of the sequence $\{\ell_{S_t}\}$.

\begin{lemma}\label{lemmaellt1}
$\forall \ t\geq 0$, 
\begin{gather} 
0<\frac{2d\lambda}{\ell_{S_\infty}} \leq \ell_{S_t}^2 - S_t \leq \frac{2d\lambda}{\ell_{S_0}}.
\end{gather}
\end{lemma}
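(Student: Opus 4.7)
The plan is to exploit the defining equation of $\ell_{S_t}$ to reduce the claimed two-sided bound on $\ell_{S_t}^2 - S_t$ to a simple monotonicity statement about the sequence $\{\ell_{S_t}\}$. By definition, $\ell_{S_t}$ is the unique positive root of $P(x, S_t) = 2d\lambda - x^3 + xS_t$, so
\begin{equation*}
\ell_{S_t}^3 - \ell_{S_t} S_t = 2d\lambda \quad \Longleftrightarrow \quad \ell_{S_t}^2 - S_t = \frac{2d\lambda}{\ell_{S_t}}.
\end{equation*}
Positivity of this quantity is immediate from $d, \lambda > 0$ and $\ell_{S_t} > 0$, giving the strict lower bound $0 < 2d\lambda/\ell_{S_\infty}$.

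The remaining chain of inequalities $2d\lambda/\ell_{S_\infty} \leq 2d\lambda/\ell_{S_t} \leq 2d\lambda/\ell_{S_0}$ is then equivalent, upon inversion, to the chain $\ell_{S_0} \leq \ell_{S_t} \leq \ell_{S_\infty}$. The first step is to observe that $S_t$ is non-decreasing in $t$ by construction, since $S_{t+1} - S_t = (x_{t+1} - x_t)^2 \geq 0$, and that Lemma~\ref{app:boundedsector} (together with the partial-sum argument of part~2 of the proof) ensures $S_t \to S_\infty < +\infty$. The second step is to invoke the monotonicity of the map $\alpha \mapsto \ell_\alpha$ already established in part~1 of the proof of Theorem~\ref{conv_discreteEulerFA}: differentiating $P_\alpha(x) = -x^3 + \alpha x + 2d\lambda$ with respect to $\alpha$ gives $\partial P_\alpha/\partial\alpha = x \geq 0$, so increasing $\alpha$ shifts the unique positive zero rightward. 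Applying this to $S_0 \leq S_t \leq S_\infty$ yields $\ell_{S_0} \leq \ell_{S_t} \leq \ell_{S_\infty}$, and inverting finishes the proof.

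There is no real obstacle here: the entire lemma is a bookkeeping statement packaging together (i) the algebraic identity $\ell_{S_t}^2 - S_t = 2d\lambda/\ell_{S_t}$ coming from the cubic $P(\,\cdot\,, S_t) = 0$ and (ii) the already-established monotonicity $S \mapsto \ell_S$ combined with monotonicity of $\{S_t\}$. The explicit values at the endpoints, $\ell_{S_0} = \sqrt[3]{2d\lambda}$ (since $S_0 = 0$ reduces the cubic to $x^3 = 2d\lambda$) and $\ell_{S_\infty} \leq S^\ast$ as noted in Remark~\ref{ellinftyS*}, will serve later to control the constant $M$ in \eqref{Ceta}, but are not needed for the lemma itself.
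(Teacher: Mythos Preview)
Your proof is correct and follows essentially the same route as the paper: derive the identity $\ell_{S_t}^2 - S_t = 2d\lambda/\ell_{S_t}$ from $P(\ell_{S_t},S_t)=0$, then invoke the monotonicity $\ell_{S_0}\leq \ell_{S_t}\leq \ell_{S_\infty}$ to sandwich the right-hand side. The only difference is that you spell out \emph{why} $\{\ell_{S_t}\}$ is increasing and bounded (via monotonicity of $\{S_t\}$ and of $\alpha\mapsto\ell_\alpha$), whereas the paper simply asserts it.
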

\begin{proof}
By definition of $\ell_{S_t}$,
\begin{gather}
    P(\ell_t,S_t) = 2d\lambda - \ell_{S_t}^3 + \ell_{S_t} S_t=0 \label{ellSt0}
\end{gather}
i.e. 
\begin{gather}
    \ell_{S_t}^2- S_t = \frac{2d\lambda}{\ell_{S_t}} >0.
\end{gather}
Using the fact that $\{\ell_{S_t}\}$ is increasing and bounded ($\ell_{S_0} \leq \ell_{S_t} \leq \ell_{S_\infty}$, $\forall \ t \geq 0$), the result follows. 
\end{proof}

\begin{lemma}\label{lemmaellt2}
$\forall \ t \geq 0$,
\begin{gather}
    x_{t+1} - x_t \leq \eta M \le(\ell_{S_t}-x_t \ri)
\end{gather}
with $M$ defined in \eqref{Ceta}.
\end{lemma}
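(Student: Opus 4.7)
The plan is to exploit the factorization of the cubic $P(\cdot,S_t)$ at its positive root $\ell_{S_t}$, and then bound each piece using the inclusion $(x_t,S_t)\in\mathcal{R}$ (Lemma~\ref{app:boundedsector}) together with Lemma~\ref{lemmaellt1}. The identity I would start from is
\begin{equation*}
x_{t+1}-x_t \;=\; \tfrac{\eta}{2}\,P(x_t,S_t) \;=\; \tfrac{\eta}{2}\bigl(P(x_t,S_t)-P(\ell_{S_t},S_t)\bigr),
\end{equation*}
since by definition $P(\ell_{S_t},S_t)=0$. Expanding the difference gives
\begin{equation*}
P(x_t,S_t)-P(\ell_{S_t},S_t) \;=\; -\bigl(x_t^3-\ell_{S_t}^3\bigr) + S_t(x_t-\ell_{S_t}) \;=\; (\ell_{S_t}-x_t)\bigl(x_t^2+x_t\ell_{S_t}+\ell_{S_t}^2-S_t\bigr).
\end{equation*}
Note that $\ell_{S_t}-x_t \geq 0$: the cubic $P(\cdot,S_t)$ is positive at $0$ and has $\ell_{S_t}$ as its unique positive root, and $P(x_t,S_t)\geq 0$ because $(x_t,S_t)\in\mathcal{R}$.

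Next I would bound the quadratic factor term by term. Since $(x_t,S_t)\in\mathcal{R}$, the geometry of $\mathcal{R}$ (bounded on the right by the intersection point $(S^*,S^*)$) gives $x_t\leq S^*$, hence $x_t^2\leq (S^*)^2$. Monotonicity of $\{\ell_{S_t}\}$, which follows from $S_t\leq S_{t+1}$ and the monotone dependence of the root of $P(\cdot,S)$ on $S$, yields $\ell_{S_t}\leq \ell_{S_\infty}$, so $x_t\ell_{S_t}\leq S^*\ell_{S_\infty}$. Finally, Lemma~\ref{lemmaellt1} gives $\ell_{S_t}^2-S_t = 2d\lambda/\ell_{S_t}\leq 2d\lambda/\ell_{S_0} = (2d\lambda)^{2/3}$. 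Adding these three bounds yields
\begin{equation*}
x_t^2+x_t\ell_{S_t}+\ell_{S_t}^2-S_t \;\leq\; (S^*)^2 + S^*\ell_{S_\infty} + (2d\lambda)^{2/3} \;=\; 2M.
\end{equation*}

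Combining the factorization with this bound, $P(x_t,S_t)\leq 2M(\ell_{S_t}-x_t)$, and multiplying by $\eta/2$ delivers the desired inequality $x_{t+1}-x_t \leq \eta M(\ell_{S_t}-x_t)$. The argument is essentially algebraic; the only nonroutine step is checking that $x_t\leq \ell_{S_t}$ (needed so that the factor $\ell_{S_t}-x_t$ has the right sign and the bound is meaningful), but this is immediate from $P(x_t,S_t)\geq 0$ and the shape of the cubic, using that $P'(\ell_{S_t},S_t) = -3\ell_{S_t}^2+S_t<0$ by Lemma~\ref{lemmaellt1}.
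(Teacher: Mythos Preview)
Your argument is correct and follows essentially the same route as the paper: factor $P(x_t,S_t)$ through its positive root $\ell_{S_t}$ and bound the resulting quadratic factor term by term using $x_t\leq S^*$, $\ell_{S_t}\leq \ell_{S_\infty}$, and $\ell_{S_t}^2-S_t=2d\lambda/\ell_{S_t}\leq (2d\lambda)^{2/3}$. The paper writes the third term of the factor directly as $2d\lambda/\ell_{S_t}$ rather than $\ell_{S_t}^2-S_t$, but these coincide by Lemma~\ref{lemmaellt1}, so the two presentations are identical in substance; yours simply spells out the justifications (in particular the sign of $\ell_{S_t}-x_t$) that the paper leaves implicit.
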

\begin{proof}
The proof easily follows from the recurrence relation of the (increasing) sequence $\{x_t\}$: 
\begin{align}
    x_{t+1}-x_t  &= \frac{\eta}{2} \le( 2d\lambda - x_t^3 + x_t S_t \ri) \nonumber \\
    & \leq \le( \ell_{S_t}-x_t \ri) \cdot\underbrace{ \frac{\eta}{2} \le((S^*)^2 + S^* \ell_{S_\infty} + (2d\lambda)^{\frac{2}{3}} \ri)}_{= \eta M}
\end{align}
\end{proof}

\begin{lemma}\label{lemmaellt3}
$\forall \ t\geq 0$,
\begin{gather}
\ell_{S_{t+1}} - \ell_{S_t} \leq C_\infty \le( x_{t+1}-x_{t}\ri)^2
\end{gather}
for some constant $C_\infty \in \R_{>0}$.
\end{lemma}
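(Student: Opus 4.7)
The plan is to treat $\ell_S$ as a smooth function of $S$ via the implicit function theorem, uniformly bound its derivative, and then exploit the fact that, by construction, $S_{t+1}-S_t=(x_{t+1}-x_t)^2$. First, starting from the defining equation $P(\ell_S,S)=2d\lambda-\ell_S^{3}+\ell_S S=0$, differentiating implicitly with respect to $S$ gives
\begin{equation}
-3\ell_S^{2}\,\frac{d\ell_S}{dS}+S\,\frac{d\ell_S}{dS}+\ell_S=0\qquad\Longrightarrow\qquad \frac{d\ell_S}{dS}=\frac{\ell_S}{3\ell_S^{2}-S}.
\end{equation}
So what I really need is a uniform upper bound on this ratio for $S\in[S_0,S_\infty]$.

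The key ingredient is Lemma \ref{lemmaellt1}, which guarantees that $\ell_S^{2}-S=2d\lambda/\ell_S>0$, hence
\begin{equation}
3\ell_S^{2}-S=2\ell_S^{2}+(\ell_S^{2}-S)\;\geq\;2\ell_{S_0}^{2}+\frac{2d\lambda}{\ell_{S_\infty}}\;>\;0,
\end{equation}
using also $\ell_{S_0}\leq \ell_S\leq \ell_{S_\infty}$. Combining with $\ell_S\leq \ell_{S_\infty}$ in the numerator, I obtain a uniform constant
\begin{equation}
C_\infty\;:=\;\frac{\ell_{S_\infty}}{2\ell_{S_0}^{2}+2d\lambda/\ell_{S_\infty}}\;\in\;\R_{>0},\qquad \sup_{S\in[S_0,S_\infty]}\frac{d\ell_S}{dS}\leq C_\infty.
\end{equation}

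Finally, applying the mean value theorem to $S\mapsto \ell_S$ on the interval $[S_t,S_{t+1}]\subset[S_0,S_\infty]$, together with the recurrence $S_{t+1}-S_t=(x_{t+1}-x_t)^{2}$ inherent to \eqref{fixedpointR2-2}, yields
\begin{equation}
\ell_{S_{t+1}}-\ell_{S_t}\;\leq\;C_\infty\,(S_{t+1}-S_t)\;=\;C_\infty\,(x_{t+1}-x_t)^{2},
\end{equation}
which is exactly the claimed estimate. The only subtle point is verifying the strict positivity of $3\ell_S^{2}-S$ uniformly in $t$, but this follows immediately from the identity $\ell_S^{2}-S=2d\lambda/\ell_S$ furnished by Lemma \ref{lemmaellt1} and the boundedness of $\ell_S$; everything else is routine.
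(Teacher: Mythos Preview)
Your proof is correct and arrives at exactly the same constant $C_\infty=\ell_{S_\infty}/\bigl(2\ell_{S_0}^{2}+2d\lambda/\ell_{S_\infty}\bigr)$ as the paper. The only difference is stylistic: the paper works discretely, factoring $\ell_{S_{t+1}}^{3}-\ell_{S_t}^{3}=(\ell_{S_{t+1}}-\ell_{S_t})(\ell_{S_{t+1}}^{2}+\ell_{S_{t+1}}\ell_{S_t}+\ell_{S_t}^{2})$ and combining with the defining relation $\ell_{S_{t+1}}^{3}-\ell_{S_t}^{3}=\ell_{S_{t+1}}S_{t+1}-\ell_{S_t}S_t$ to solve algebraically for $\ell_{S_{t+1}}-\ell_{S_t}$, whereas you pass to the continuous map $S\mapsto\ell_S$, differentiate implicitly, and apply the mean value theorem; the two computations are the discrete and infinitesimal versions of the same identity and use the same bounds from Lemma~\ref{lemmaellt1}.
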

\begin{proof}
Using again the definition of $\ell_{S_t}$, we have
\begin{align}
    \ell_{S_{t+1}}^3 - \ell_{S_t}^3 &= \ell_{S_{t+1}}S_{t+1} - \ell_{S_{t}}S_{t} \nonumber \\
    &= \le( \ell_{S_{t+1}} - \ell_{S_t} \ri)S_{t+1} + \ell_{S_t} \le(S_{t+1} - S_t \ri) \nonumber \\
    &=  \le( \ell_{S_{t+1}} - \ell_{S_t} \ri)S_{t+1} + \ell_{S_t} \le(x_{t+1} - x_t \ri)^2; \label{eqell31}
\end{align}
on the other hand, 
\begin{gather}
    \ell_{S_{t+1}}^3 - \ell_{S_t}^3 = \le(\ell_{S_{t+1}} - \ell_{S_t} \ri) \le( \ell_{S_{t+1}}^2 + \ell_{S_{t+1}}\ell_{S_{t}} + \ell_{S_t}^2 \ri); \label{eqell32}
\end{gather}
combining \eqref{eqell31} and \eqref{eqell32} and using Lemma \ref{lemmaellt1}, we obtain
\begin{align}
    \ell_{S_{t+1}} - \ell_{S_t} &= \frac{\ell_{S_{t}} \le( x_{t+1} - x_t\ri)^2}{ \ell_{S_{t+1}}^2 + \ell_{S_{t+1}}\ell_{S_{t}} + \ell_{S_t}^2 - S_{t+1}} \nonumber \\
    &\leq  \frac{\ell_{S_{\infty}} \le( x_{t+1} - x_t\ri)^2}{ 2\ell_{S_{0}}^2  + \frac{2d\lambda}{\ell_{S_\infty}}} =\underbrace{\frac{\ell_{S_{\infty}} }{ 2(2d\lambda)^{\frac{2}{3}}  + \frac{2d\lambda}{\ell_{S_\infty}}}}_{=: C_\infty} \le( x_{t+1} - x_t\ri)^2.
\end{align}
\end{proof}

Collecting the results from Lemmas \ref{lemmaellt2} and \ref{lemmaellt3}, we have
\begin{align}
    B_2  \leq C_\infty (x_{t+1}- x_t)^2 \leq \eta^2 M C_\infty  \le( \ell_{S_t} - x_t \ri)^2 
\end{align}

In conclusion, for $\eta < \frac{1}{M}$ we have
\begin{align}
     \ell_{S_{t+1}} - x_{t+1}  &\leq \le(\ell_{S_t}- x_t \ri) \le({1 - \eta M}\ri) + {\eta^2 M^2} C_\infty\le( \ell_{S_t}-x_t \ri)^2 \nonumber \\
     &= \le(\ell_{S_t} - x_t \ri) \le(1 - \eta M  + \eta^2 \tilde M \ri) \nonumber \\
     & \leq \sqrt[3]{2d\lambda} \le(1 - \eta M  + \eta^2 \tilde M \ri)^t
\end{align}
where $ \tilde M := 2 \ell_{S_\infty} M^2 C_\infty >0 $ (we used the fact that $\{\ell_{S_t}\}$ and $\{x_t\}$ are bounded); we stress that the quantity $1-\eta M - \eta^2 \tilde M$ is smaller than $1$ (thus being indeed a convergence rate) for $\eta$ small enough.

\section{Experiments} \label{app:experiments}
\subsection{Solution of the FA ODE system}
Figures \ref{fig_scalar},  \ref{fig:T1+T2+T2T1}, \ref{fig4}, \ref{impl_reg_2pics}, plotting solutions (and products of solutions) of the FA system 
\begin{align}
        &\dot \theta_{L} = (\lambda - \theta_L\ldots \theta_1) \theta_{L-1} \ldots \theta_1  \\
        &\dot \theta_{L-1} = d_{L-1}(\lambda - \theta_L\ldots \theta_1) \theta_{L-2} \ldots \theta_1 \\
        &\vdots \\
        &\dot \theta_{2} = d_2 (\lambda - \theta_L\ldots \theta_1) \theta_1 \\
        &\dot \theta_{1} = d_1 (\lambda - \theta_L\ldots \theta_1) 
\end{align}
in the continuous setting ($t\in \R_{\geq 0}$) are obtained using the standard ODE solver \texttt{odeint} on Python.

\subsection{Linear autoencoders}
We set up two linear autoencoders 
\begin{align}
    &\hat \vy = \mW_2\mW_1\vx & \text{(2-layer NN)}\\
    &\hat \vy = \mW_3\mW_2\mW_1\vx & \text{(3-layer NN}
\end{align}
where $\mW_\ell \in \R^{d\times d}$, $d=20$ and the data are synthetically generated as
\begin{gather}
    \vx_i = \mA \vz_i + {\bm \epsilon}_i, \qquad \vz_i \sim \mathcal{N}({\bf 0}, \mI_h), \ {\bm \epsilon}_i \sim 10^{-3} \mathcal{N}({\bf 0}, \mI_d)
\end{gather}
$h=5$ and $\mA \in \R^{d\times h}$ is a fixed matrix that we sampled with entries $\mA_{ij} \sim \mathcal{U}([0,1]) $. 

For the experiments showed in \figurename \ \ref{fig:autoencoder}, we set the step size $\eta = 0.01$ and we initialized the weight matrices as $\mW_1^{(0)}, \mW_2^{(0)}, \mW_3^{(0)}$ such that $\mW_{\ell}^{(0)} \sim 10^{-5}\mathcal{U}([0,1])$. 

The FA matrices $\mM_\ell$ are generated as $\mM_{\ell;ij} \sim \mathcal{U}([0,1])$. We repeat the FA training for 15 times, sampling a different set of FA matrices each time (but same initial conditions $\mW^{(0)}_\ell$ and same matrix $\mA$): in \figurename \ \ref{fig:autoencoder} we then report the average of the trace norm and the reconstruction error with error bars calculated as \texttt{mean} $\pm$ $2\cdot$\texttt{standard\_deviation} (the factor $2$ is to make the error bars more visible in the final plot). We complement these experiments with the corresponding GD training.

\end{document}